\documentclass{article}

\usepackage{xcolor}





\usepackage[preprint]{neurips_2025}



\usepackage{macros} 

\usepackage[utf8]{inputenc} 
\usepackage[T1]{fontenc}    
\usepackage{hyperref}       
\hypersetup{
  colorlinks,
  linkcolor={red!50!black},
  citecolor={blue!50!black},
  urlcolor={blue!80!black}
}
\usepackage{url}            
\usepackage{booktabs}       
\usepackage{amsfonts}       
\usepackage{nicefrac}       
\usepackage{microtype}      
\usepackage{xcolor}         

\setlength{\intextsep}{0pt}           
\setlength{\belowcaptionskip}{0pt}  
\setlength{\abovecaptionskip}{0pt}    

\title{From Information to Generative Exponent: \\
Learning Rate Induces Phase Transitions in SGD}

%

\author{%
  Konstantinos Christopher Tsiolis \\
  University of Toronto \& Vector Institute \\
  \texttt{kc.tsiolis@mail.utoronto.ca} \And 
  Alireza Mousavi-Hosseini \\
  University of Toronto \& Vector Institute \\
  \texttt{mousavi@cs.toronto.edu} \And 
  Murat A. Erdogdu \\
  University of Toronto \& Vector Institute \\
  \texttt{erdogdu@cs.toronto.edu} \\
}

\begin{document}

\maketitle

\begin{abstract}

To understand feature learning dynamics in neural networks, recent theoretical works have focused on gradient-based learning of Gaussian single-index models, where the label is a nonlinear function of a latent one-dimensional projection of the input. While the sample complexity of online SGD is determined by the \emph{information exponent} of the link function, recent works improved this by performing multiple gradient steps on the same sample with different learning rates --- yielding a \emph{non-correlational} update rule --- and instead are limited by the (potentially much smaller) \emph{generative exponent}. However, this picture is only valid when these learning rates are sufficiently large. In this paper, we characterize the relationship between learning rate(s) and sample complexity for a broad class of gradient-based algorithms that encapsulates both correlational and non-correlational updates. We demonstrate that, in certain cases, there is a phase transition from an ``information exponent regime'' with small learning rate to a ``generative exponent regime'' with large learning rate. Our framework covers prior analyses of one-pass SGD and SGD with batch reuse, while also introducing a new layer-wise training algorithm that leverages a two-timescales approach (via different learning rates for each layer) to go beyond correlational queries without reusing samples or modifying the loss from squared error. Our theoretical study demonstrates that the choice of learning rate is as important as the design of the algorithm in achieving statistical and computational efficiency.



\end{abstract}

\section{Introduction}\label{section:intro}

A key aspect of deep learning theory is to understand how neural networks can adapt to underlying data structure and achieve desirable statistical and computational complexity through their optimization dynamics. Towards this goal, several works have focused on learning target functions that depend on low-dimensional projections of data, such as single- and multi-index models. 
The complexity of learning these models depends on assumptions on the data distribution and on the optimization algorithm. For Gaussian data and online (also called one-pass) SGD on the squared loss, the number of training samples/iterations needed to learn a single-index model depends on a property of the target function known as the \emph{information exponent} \cite{benarous2021online}. Moreover, through computational lower bounds, the information exponent governs the complexity of any \emph{Correlational Statistical Query} (CSQ) algorithm for learning single- and multi-index models \cite{damian2023smoothing,vural2024pruning}. Such an algorithm interacts with data only through queries of the form $yh(x)$ that lie within a fixed tolerance of their expectation \cite{kearns1998efficient,reyzin2020statistical}. Though SGD can only heuristically be cast as CSQ, this formalism has served as a useful proxy to inform attempts to break this ``curse of information exponent''.

One such attempt is to consider variants of SGD that apply consecutive gradient updates --- with distinct learning rates --- on the same batch \citep{dandi2024benefits,lee2024neural,arnaboldi2024repetita}. This simulates more general non-correlational queries $h(x,y)$ and thus bears a connection to the broader class of \emph{Statistical Query} (SQ) algorithms. Here, the complexity may be controlled by another property of the target function, namely the \emph{generative exponent} \cite{damian2024computational}, which is at most as large as the information exponent, and can be significantly smaller. \cite{joshi2024complexity} further demonstrated that batch reuse is not strictly necessary and other online algorithms can also break the curse of information exponent by instead choosing alternative loss functions, while leaving open the study of other components of the training algorithm.

A puzzling observation around reusing batches is that the sample complexity of full-batch gradient flow on the squared loss, through the best known upper bounds, still depends on the information exponent \cite{bietti2022learning,mousavi2023gradient}. This is in contrast with the intuition that reusing samples is sufficient for breaking out of CSQ, since full-batch gradient flow reuses the entire dataset at every ``iteration''. This observation suggests that the role of the learning rate, while ignored in the current literature, is also crucial in determining the query class and the resulting sample complexity of SGD.

\subsection{Our Contributions}
In this paper we aim to answer the following question:
\begin{center}
    \emph{Can we characterize the regimes of complexity emerging from the choice of learning rate?
    }
\end{center}

We give a precise answer to the above question for a class of online iterative algorithms when learning single-index models.
Specifically, we make the following contributions:

\begin{itemize}
    \item In Section \ref{section:generic_online_algorithm}, we introduce our general framework and provide a careful learning-rate-dependent analysis of the sample complexity of learning single-index models, resulting in bounds that explicitly demonstrate phase transitions induced by the choice of learning rate hyperparameters.

    \item We show that our framework is expressive enough to capture both vanilla online SGD (Section \ref{section:online_sgd}) and algorithms with non-correlational update rules such as SGD with batch reuse (Section \ref{section:batch_reuse_sgd}). For the latter, our analysis interpolates between the complexity $n = \Theta(d^{(p_*-1) \lor 1})$ and the online SGD complexity $n = \Theta(d^{(p-1) \lor 1})$ as the learning rates decrease, where $p$ and $p_*$ are the information and generative exponents respectively, $n$ is the number of training samples, and $d$ is in the input dimension. Specifically, we prove phase transitions in the complexity as a function of the learning rate for the first update on each batch.

    \item In Section \ref{section:alternating_sgd}, we show that even when considering squared loss, batch reuse is not the only approach that goes beyond CSQ limitations. In particular, we introduce a new layer-wise training algorithm that uses a different scaling of learning rate for the first and second layers of the network, thus using a two-timescales dynamics. We demonstrate that the performance of this algorithm also depends critically on the learning rate of the second layer. When the latter is sufficiently large, the algorithm can recover the target with almost linear sample complexity when the square of the link function has information exponent 1 or 2. Additionally, this analysis can be extended to a sparsely-connected network with $D$ layers, with the same conclusion holding (under further assumptions) when the $D$th power of the link function has information exponent 1 or 2.
\end{itemize}

\begin{wrapfigure}{R}{0.5\textwidth}
    \centering
    \captionsetup{aboveskip=0pt, belowskip=-10pt}
    \includegraphics[width=\linewidth]{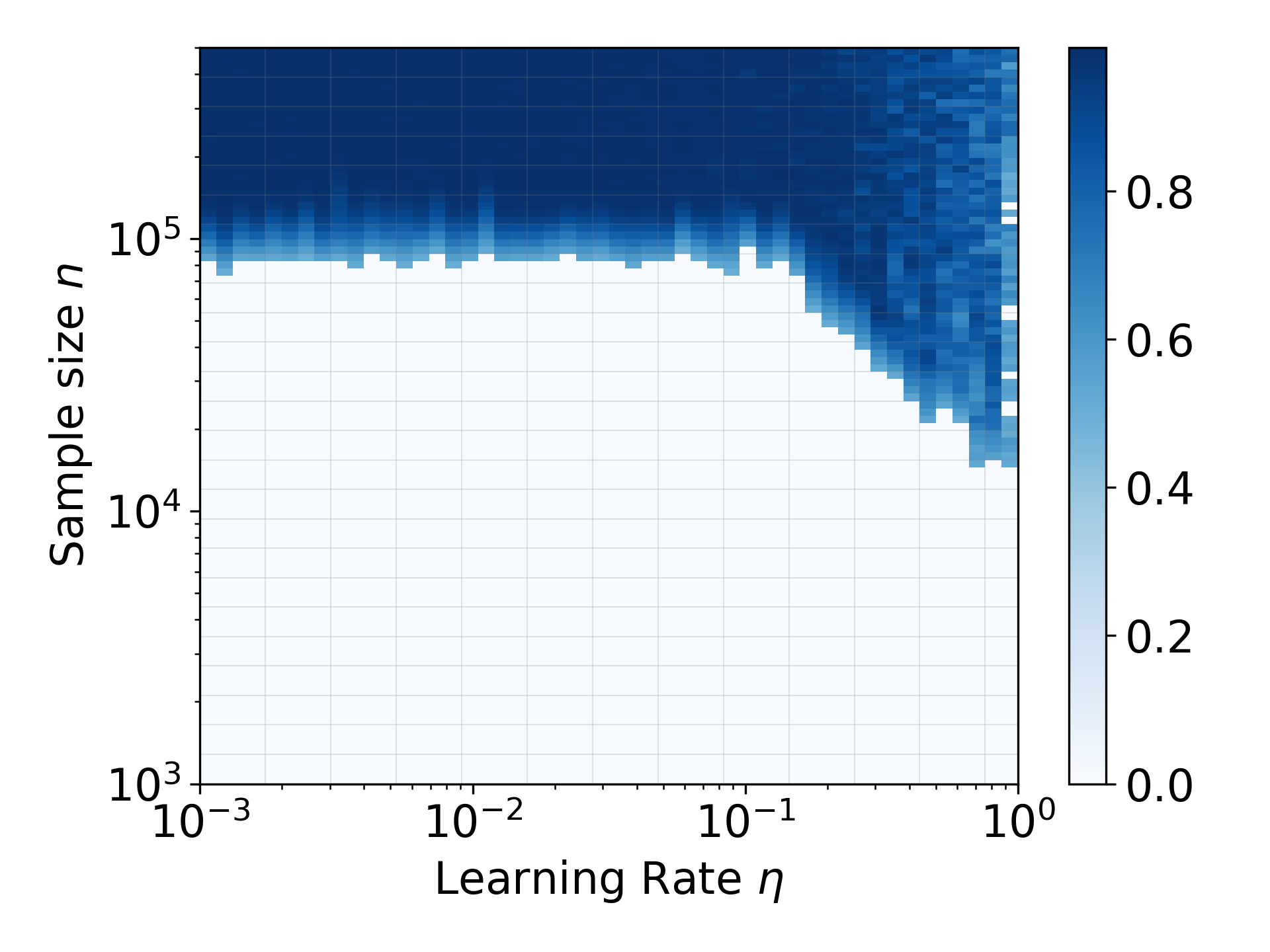}
    \caption{Combinations of learning rate $\eta$ and sample size $n$ which achieve alignment $\ev{\wbs,\thetabs_*} \geq 0.5$ for a network \eqref{eq:two_layer_nn} with $N = 1$ trained by alternating SGD (Algorithm \ref{alg:alternating_sgd}) in the setting $\sigma_* = \sigma = \He_3$ and $d=50$.}
    \label{fig:alternating_sgd_sim}
    \vspace{1em}
\end{wrapfigure}

The rest of the paper is organized as follows. We provide background on Gaussian single-index models in Section \ref{section:setting}. In Section \ref{section:generic_online_algorithm}, we introduce a generic framework to study online gradient-based algorithms and provide our main result. We instantiate this framework for SGD with batch-reuse and the layer-wise two-timescales algorithm in Section \ref{section:examples}. We sketch the proof of our main result in Section \ref{section:proof_sketch}, and we conclude in Section \ref{section:conclusion}.

\paragraph{Notation.} For $k \in \N$, we use $[k]$ to denote the set $\{1,\dots,k\}$. All asymptotic notation is with respect to the input dimension $d$. We use $\tilde{O}(\cdot)$ and $\tilde{\Theta}(\cdot)$ to denote $O(\cdot)$ and $\Theta(\cdot)$ up to polylogarithmic factors, respectively. Similarly, the relations $\lesssim$ and $\gtrsim$ denote bounds up to polylogarithmic factors. We write $a \asymp b$ when $a \lesssim b$ and $a \gtrsim b$. An event is said to occur \textit{with high probability} if its probability is at least $1-o_d(1)$. The notations $\ev{\cdot,\cdot}$ and $||\cdot||$ refer respectively to the Euclidean inner product and norm for vectors in $\R^d$ in the absence of a subscript, while $||\vbs||_{\Abs} = \vbs^{\top} \Abs \vbs$ for $\vbs \in \R^d$ and $\Abs \in \R^{d \times d}$. For $\wbs \in \Sbb^{d-1}$, $\Proj_{\wbs}$ denotes the projection onto the tangent space of $\Sbb^{d-1}$ at $\wbs$, i.e., $\Proj_{\wbs} = \Ibs_d - \wbs \wbs^{\top}$. For any $g \in L^2(\Ncal(0,1))$, we write its Hermite expansion as $g(z) = \sum_{k=0}^{\infty} u_k(g) \He_k(z)$, where $\He_k$ denotes the $k$-th probabilist's Hermite polynomial \cite{o2021analysis} and $u_k(g) = \E_{z \sim \Ncal(0,1)}[g(z)\He_k(z)]$ is the $k$-th Hermite coefficient of $g$.  

\subsection{Related Work}\label{section:related_work}

\paragraph{Feature Learning and Single-Index Models.}
There is a vast body of literature on algorithms for learning Gaussian single-index models, see e.g.\ \cite{dudeja2018learning,chen2020learning}.
Here, we focus on more recent works that use gradient-based training. \cite{benarous2021online} studied online SGD for learning high-dimensional single-index models with known non-linearity, where they introduced the information exponent as the quantity controlling the number of samples needed to learn the model.
The representation learned by a network on single-index models with information exponent 1 was studied in \cite{ba2022high,mousavi2023gradient}, while \cite{bietti2022learning} considered gradient flow for learning functions with higher information exponent.
On multi-index models, \cite{damian2022neural} considered one gradient step for learning polynomials, and \cite{abbe2023sgd} studied learning general multi-index models where a saddle-to-saddle dynamics can emerge. General multi-index models remain difficult to analyze \cite{dandi2023two,bietti2023learning,mousavi2025learning}. However, several works have studied the simpler case of additive models \cite{oko2024learning, ren2024learning, simsek2024learning,ren2025emergence,ben2025learning}. 

CSQ and SQ lower bounds for learning single-index models where developed in \cite{damian2022neural} and \cite{damian2024computational} respectively, where the former depends on the information and the latter depends on the generative exponent. Similar lower bounds were derived in \cite{abbe2023sgd} for multi-index models, where the ``leap exponent'' controls the complexity, and \cite{troiani2024fundamental} studied approximate message passing as a proxy for computational lower bounds.

Going beyond unstructured isotropic Gaussian data, \cite{joshi2025learning} recently studied single-index models with general spherically symmetric input distributions, positing an SQ lower bound for running time (which is attained by an SGD variant) and a low-degree polynomial (LDP) sample complexity lower bound. Many works considered the existence of additional input structure or modifications of the single-index model, such as a spiked covariance \cite{mousavi2023gradient, ba2023learning, bardone2024sliding, braun2025learning, jones2025provable, zhang2025concurrence}, sparsity in the input \cite{vural2024pruning}, or a perturbation of the target \cite{cornacchia2025low}. The recovery of the low-dimensional multi-index subspace has been used to go beyond standard learning frameworks, e.g., to obtain better theoretical guarantees for adversarial robustness \cite{mousavi2025robust}.

\paragraph{Learning Rate and Generalization.}
Numerous works have studied the effect of learning rate on optimization and generalization in deep learning. Notably, deep networks with large learning rate can operate near the ``edge of stability'' \cite{cohen2021gradient}, where it has been empirically observed that such large learning rates improve generalization by preferring flat minima \cite[and references therein]{li2019towards,lewkowycz2020large,jastrzebski2021catastrophic,barrett2021implicit}, learning sparse features \cite{andriushchenko2023sgd}, or obtaining larger margin \cite{cai2024large}.
Closer to our setting, \cite{arnaboldi2024online} study the optimal choice of learning rate for online SGD. However, while their algorithm always remains in a correlational regime, we consider a wide range of learning rates to understand the effect of non-optimal choices in practice, and demonstrate phase transitions in the behavior of the SGD depending on stepsize, going from correlational regimes dominated by information exponent to full statistical query regimes dominated by generative exponent.

\section{Problem Setup}\label{section:setting}

We consider a supervised regression setting where the inputs are drawn from the standard Gaussian distribution and the labels are generated according to the \emph{single-index model}, i.e.
\begin{equation}\label{eq:single_index_model}
    y_i = \sigma_*(\ev{\xbs_i,\thetabs_*}) + \zeta_i, \quad \xbs_i \overset{i.i.d.}{\sim} \Ncal(\zerobs,\Ibs_d),
\end{equation}
where $\thetabs_* \in \Sbb^{d-1}$ is the ground truth direction, $\sigma_*: \R \to \R$ is a (nonlinear) link function, and $\zeta_i$ is i.i.d.~symmetric sub-Weibull\footnote{Note that the class of sub-Weibull random variables includes sub-Gaussian and sub-exponential random variables and is closed under transformations with at most polynomial growth (up to changing the tail parameter).} \cite{vladimirova2020sub} label noise with $O(1)$ tail parameter.

We learn the above model with a two-layer neural network\footnote{In Appendix \ref{section:appendix_deep_alternating_sgd}, we propose an algorithm for a network with $D > 2$ layers and prove sample complexity improvements over the batch reuse SGD of \cite{arnaboldi2024repetita,lee2024neural} under assumptions on $\sigma_*$ and the $\sigma_j$.} $f$ with $N$ hidden neurons, first-layer weights $\wbs_j \in \Sbb^{d-1}$, biases $b_j \in \R$, second layer weights $a_j \in \R$, and polynomial activations $\sigma_j: \R \to \R$ as in \cite{lee2024neural}. When convenient, we use the shorthand of encoding the first layer weights as rows in the matrix $\Wbs \in \R^{N \times d}$, the second layer weights by the vector $\abs \in \R^N$, and the biases by the vector $\bbs \in \R^N$. The network outputs a weighted average of the hidden layer activations:
\begin{equation}\label{eq:two_layer_nn}
    f(\xbs; \Wbs, \abs, \bbs) = \frac{1}{N} \sum_{j=1}^N a_j \sigma_j\big(\ev{\xbs,\wbs_j} + b_j\big).
\end{equation}
Our objective is to characterize the number of iterations (and thus, the number of samples) required for \emph{weak recovery} of $\thetabs_*$ as a function of the learning rate for online iterative algorithms.  That is, starting from a uniform initialization on the sphere $\Sbb^{d-1}$ where $\ev{\thetabs,\wbs_j^{(0)}} \asymp d^{-1/2}$ with high probability, we seek $T$ such that $\ev{\thetabs_*,\wbs_j^{(T)}} \gtrsim 1/\polylog d$. Studies of online SGD and variants \cite{benarous2021online,dandi2024benefits,lee2024neural} argue that achieving weak recovery is the computational bottleneck in fitting a single-index target. Once this is achieved, \emph{strong recovery} (i.e., $\ev{\thetabs_*,\wbs} \geq 1-\eps$ for some $\eps > 0$) and approximation of the target via ridge regression on $\abs$ proceed with $\tilde{\Theta}(d)$ sample complexity. We extend these findings to our general class of gradient-based algorithms in Appendices \ref{section:proof_strong_recovery} and \ref{section:second_layer_fitting}.

We introduce two properties of $\sigma_*$ that are known to control the complexity of gradient-based learning and the complexity of learning with statistical queries.


\begin{definition}[Information Exponent, \cite{benarous2021online}]
    For any $g \in L^2(\Ncal(0,1))$, let $u_k(g)$ denote the $k$th coefficient in its Hermite expansion. The information exponent of $g$ is defined as
    \begin{equation}
        \IE(g) := \min\{k > 0: u_k(g) \neq 0\}.
    \end{equation}
\end{definition}
Throughout this paper, we denote the information exponent of the link function $\sigma_*$ in \eqref{eq:single_index_model} by $p$,
and we use the notation $p_i := \IE(\sigma_*^i)$ for $i \geq 2$ to denote the information exponents of powers of $\sigma_*$. \cite{benarous2021online} show that online SGD with the square loss has sample complexity $n = \tilde{\Theta}(d^{(p-1) \lor 1})$, while \cite{damian2023smoothing} introduces smoothed online SGD, which achieves the optimal $n \gtrsim d^{(p/2)\lor 1}$ sample complexity for the class of CSQ learners. 
Beyond correlational queries, the generative exponent controls the complexity of any statistical query learner.
\begin{definition}[Generative Exponent, \cite{damian2024computational}]
    For any $g \in L^2(\Ncal(0,1))$, the generative exponent is defined as the smallest information exponent over all $L^2$ transformations of $g$, i.e.,
    \begin{equation}
        \GE(g) = \inf_{\Tcal \in L^2(g_{\#}\Ncal(0,1))} \IE(\Tcal g).
    \end{equation}
\end{definition}
Note that $\GE(g) \leq \IE(g)$ for all $g$.
Throughout this paper, we denote the generative exponent of  $\sigma_*$ in \eqref{eq:single_index_model} by $p_*$. While \cite{damian2024computational} developed an optimal algorithm with sample complexity $n \gtrsim d^{(p_*/2)\lor 1}$, \cite{lee2024neural,arnaboldi2024repetita} showed that SGD has sample complexity $n \gtrsim d^{(p_*-1) \lor 1}$ when going over each sample twice. Both leverage the following crucial property.


\begin{lemma}[\cite{lee2024neural} Proposition 6, Lemma 8]\label{lem:ie_powers_of_link}
    Suppose there exists an orthonormal polynomial basis of the space $L^2((\sigma_*)_{\#} \Ncal(0,1))$. Then there exists $I \in \N$ such that $\IE(\sigma_*^I) = p_*$. Moreover, if $\sigma_*$ is polynomial of degree at most $q$, then $I \leq C_q$ for some constant $C_q$ depending only on $q$.
\end{lemma}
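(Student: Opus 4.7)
The plan is to exploit the hypothesis that $L^2((\sigma_*)_{\#} \Ncal(0,1))$ admits an orthonormal polynomial basis, which is equivalent to saying that polynomials are dense in this space. Since the information exponent takes values in $\N$, the infimum in the definition of $\GE(\sigma_*) = p_*$ is attained by some $\Tcal^\star \in L^2((\sigma_*)_{\#} \Ncal(0,1))$, so $u_{p_*}(\Tcal^\star \circ \sigma_*) \neq 0$. Equivalently, if $V \subseteq L^2(\Ncal(0,1))$ denotes the closed subspace of functions of the form $\Tcal \circ \sigma_*$, the orthogonal projection $v^\star$ of $\He_{p_*}$ onto $V$ is nonzero. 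By the tower property, $v^\star(z) = h(\sigma_*(z))$ where $h(y) := \E[\He_{p_*}(Z) \mid \sigma_*(Z) = y]$ is a nonzero element of $L^2((\sigma_*)_{\#} \Ncal(0,1))$.

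The first claim then reduces to a density argument. Since polynomials are dense in $L^2((\sigma_*)_{\#} \Ncal(0,1))$, the family $\{1, \sigma_*, \sigma_*^2, \ldots\}$ is total in $V$. Because $v^\star \neq 0$, it cannot be orthogonal to every $\sigma_*^i$, so there exists $I \in \N$ with
\begin{equation*}
  u_{p_*}(\sigma_*^I) \;=\; \E[\sigma_*^I(Z) \He_{p_*}(Z)] \;=\; \E[\sigma_*^I(Z) v^\star(Z)] \;\neq\; 0,
\end{equation*}
hence $\IE(\sigma_*^I) \leq p_*$. The reverse inequality $\IE(\sigma_*^I) \geq \GE(\sigma_*) = p_*$ follows directly from the definition of $\GE$ by taking $\Tcal(y) = y^I$, giving $\IE(\sigma_*^I) = p_*$.

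For the ``moreover'' part, I would bound the smallest index $I$ with $\E[Y^I h(Y)] \neq 0$ (writing $Y = \sigma_*(Z)$) by a constant depending only on $q$. Expanding $h$ in the orthonormal polynomial basis $\{P_k\}_{k \geq 0}$ as $h = \sum_{k} \beta_k P_k$ with $\beta_k = \E[\He_{p_*}(Z) P_k(\sigma_*(Z))]$, the polynomial $P_k \circ \sigma_*$ has degree $qk$ in $Z$, so $\beta_k = 0$ whenever $qk < p_*$. Since $h \neq 0$, some $\beta_K \neq 0$; taking $K$ minimal and using orthogonality of $\{P_k\}$ yields $\E[Y^K h(Y)] = \beta_K / c_K \neq 0$, where $c_K$ is the leading coefficient of $P_K$. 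It thus suffices to bound this smallest $K$, which can be done using the explicit structure of $h$ as a weighted average of $\He_{p_*}$ over the at most $q$ real preimages of $y$ under $\sigma_*$, limiting the algebraic complexity of $h$ in terms of $q$ alone.

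The main obstacle is this final degree bound. The projection-and-density argument in the first part is essentially standard, but establishing the quantitative bound $C_q$ requires carefully controlling the algebraic structure of the conditional expectation $h$ on $\sigma_*(\R)$, leveraging that $\sigma_*$ is polynomial of bounded degree so that the fiber $\sigma_*^{-1}(y)$ has at most $q$ elements and the resulting $h$ lives in a space of ``low complexity'' relative to $q$.
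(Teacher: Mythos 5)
First, a point of comparison: the paper does not prove this lemma at all --- it is imported verbatim from \cite{lee2024neural} (Proposition 6 and Lemma 8 there) --- so there is no in-paper proof to measure against and your argument must stand on its own. For the first claim it essentially does, and it follows the standard route: the infimum defining $\GE$ is attained because $\IE$ is $\N$-valued; the conditional expectation $h(y)=\E[\He_{p_*}(Z)\mid\sigma_*(Z)=y]$ is the projection $v^\star$ of $\He_{p_*}$ onto the closed subspace $V$ of $\sigma_*$-measurable $L^2$ functions and is nonzero; and totality of $\{\sigma_*^i\}_{i\ge 0}$ in $V$ (this is exactly where the orthonormal-polynomial-basis hypothesis enters) yields $u_{p_*}(\sigma_*^I)\neq 0$ for some $I$, with the reverse inequality $\IE(\sigma_*^I)\ge p_*$ immediate from the definition of $\GE$. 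Two small points to make explicit: you need $I\ge 1$, which holds because $\langle v^\star,1\rangle=\E[\He_{p_*}(Z)]=0$ for $p_*\ge 1$, so the non-orthogonality cannot occur at $i=0$; and you need $\sigma_*^I\in L^2(\Ncal(0,1))$, which the hypothesis supplies since it forces all moments of $\sigma_*(Z)$ to be finite.

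The genuine gap is the ``moreover'' part. Your reduction is a nice observation --- writing $h=\sum_k\beta_k P_k$ and checking that for the minimal $K$ with $\beta_K\neq 0$ one has $\E[Y^K h(Y)]=\beta_K/c_K\neq 0$ correctly identifies $I$ with that minimal $K$ --- but the only quantitative fact you then establish, namely $\beta_k=0$ whenever $qk<p_*$, bounds $K$ from \emph{below} by $p_*/q$, whereas the lemma requires an upper bound $K\le C_q$. The closing appeal to the ``algebraic complexity'' of $h$ coming from the at most $q$ preimages of $y$ is not an argument: $h$ is generally not a polynomial (the weights in the average over the fiber involve the Gaussian density and $1/|\sigma_*'|$), and nothing in your sketch excludes the possibility that the first nonzero $\beta_K$ occurs at an index depending on the specific coefficients of $\sigma_*$ rather than only on its degree. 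You flag this yourself as ``the main obstacle,'' but that obstacle is precisely the content of Lemma 8 of \cite{lee2024neural}; as written, the second claim of the statement is not proved.
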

This result can be used to develop optimal algorithms~\cite{chen2025can}, and has two major implications. First, it immediately implies that batch reuse SGD --- \emph{with the correct choice of hyperparameters} --- has linear sample complexity (up to log factors) for any polynomial target. Second, and importantly for our work, it suggests that if the update for $\wbs_j$ contains the monomial transformation $y^I = (\sigma_*(\ev{\xbs,\thetabs_*}))^I$, the sample complexity can be reduced to depend on $p_*$ instead of $p$. In the subsequent sections, we emphasize that this will only occur if the scaling of the $y^I$ term (as determined by the learning rates) is sufficiently large. Otherwise, we fall back into the information exponent regime.

\section{Sample Complexity of a Generic Online Algorithm}\label{section:generic_online_algorithm}

To generalize several notions of gradient-based learning of single-index models, we consider updates to a first-layer weight\footnote{In what follows, we drop the subscript $j$ for convenience.} $\wbs$ of the form 
\begin{equation}\label{eq:generic_online_update}
    \wbs^{(t+1)} \leftarrow \wbs^{(t)} + \gamma \psi_{\eta}(y^{(t)}, \ev{\xbs^{(t)},\wbs^{(t)}}) \Proj_{\wbs^{(t)}}\xbs^{(t)}, \quad \wbs^{(t+1)} \leftarrow \frac{\wbs^{(t+1)}}{||\wbs^{(t+1)}||},
\end{equation}
where $(\xbs^{(t)},y^{(t)})$ is an i.i.d.~draw from the target single-index model \eqref{eq:single_index_model}, $\gamma > 0$, $\eta \geq 0$, $\Proj_{\wbs^{(t)}} = \Ibs_d - \wbs \wbs^{\top}$ (i.e., the projection onto the tangent space of the unit sphere at $\wbs$), and $\psi_{\eta}$ is an update function based on a ``general gradient oracle''. This formulation is similar to that of \cite{chen2025can}, who also use generalized gradients, but additionally incorporate weight perturbation and averaging to achieve optimal rates. We view latter modification as complementary to our work, but we note that it can push the algorithms we study in Section \ref{section:examples} towards SQ-optimality \emph{when $\gamma$ and $\eta$ are both chosen as large as possible}. The use of a spherical dynamics is common in studies of gradient-based learning of single-index models and is motivated by the fact that the optimization is done over the unit sphere $\Sbb^{d-1}$ \cite{benarous2021online,damian2023smoothing,lee2024neural,arnaboldi2024repetita}.

Note that due to the rotational symmetry of the Gaussian single-index model, the standard squared loss $\ell(y,y') = (y-y')^2$ and the \emph{correlation loss} $\ell(y, y') = 1 - yy'$ induce identical gradients in the population limit, and are known to produce similar dynamics under small initialization of the second layer (see e.g., \cite{abbe2023sgd, lee2024neural}).
Therefore, in this paper we examine choices of the oracle $\psi_{\eta}$ based on the correlation loss. 

In the examples we consider, $\gamma$ and $\eta$ are learning rates that arise due to multiple gradient updates within the same iteration. Each plays a distinct role, with $\gamma$ serving as a ``global'' learning rate, and $\eta$ controlling the scale of any non-correlational terms in the oracle $\psi$. Both of these learning rates directly influence sample complexity, but \emph{only} $\eta$ will induce the aforementioned phase transition of interest. Moreover, the largest possible value of $\gamma$ is constrained by the value of $\eta$ (otherwise the algorithm can diverge). 

We view such algorithms as inherently online, where the first step performs some transformation of the labels (modulated by $\eta$), and the second step uses the update of \eqref{eq:generic_online_update}. This view can be extended to any constant number of steps on the same batch of samples. For such algorithms, the dependence of $\psi_{\eta}$ on $\eta$ becomes nonlinear, and the key quantities elucidating the effect of this hyperparameter on the sample complexity are the Hermite coefficients\footnote{The formulation below is valid in the noiseless case. We handle sub-Weibull label noise in Appendix \ref{section:proof_master_theorem}.}
\begin{equation}\label{eq:mu_i_noiseless_def}
    \mu_i(\eta) := \E_{(a,b) \sim \Ncal(\zerobs,\Ibs_2)}\big[\psi_{\eta}\big(\sigma_*(a),b\big)\He_i(a) \He_{i-1}(b)\big], \,\quad i \in [r].
\end{equation}
We highlight three examples in the next section:
\begin{enumerate}
    \item \textbf{Online SGD}: $\psi_{\eta}(y,z) = y\sigma'(z)$ and $\mu_i = i u_i(\sigma_*)u_i(\sigma)$. There is no dependence on $\eta$ as each iteration involves a single gradient step with learning rate $\gamma$. See Section \ref{cor:online_sgd} for details.
    \item \textbf{Batch reuse SGD}: $\psi_{\eta}(y,z) = y\sigma'(z) + \sum_{k=2}^{\mathrm{deg}(\sigma)} \frac{(\eta d)^{k-1} (\sigma^{(k)}(z))(\sigma'(z))^{k-1}}{(k-1)!}y^k$ and $\mu_i(\eta) \asymp \sum_{k=1}^r \frac{(\eta d)^{k-1}}{(k-1)!} u_{i-1}(\sigma^{(k)}(\sigma')^{k-1}) u_i(\sigma_*^k)$. The algorithm first takes a gradient step with learning rate $\eta$, followed by another gradient step on the same batch with learning rate $\gamma$. This algorithm was previously studied in \cite{arnaboldi2024repetita,lee2024neural}. See Section \ref{cor:batch_reuse_sgd} for details.
    \item \textbf{Alternating SGD}: $\psi_{\eta}(y,z) = y\sigma'(z) + \eta y^2 \sigma(z) \sigma'(z)$ and $\mu_i(\eta) = iu_i(\sigma_*) u_i(\sigma) + \eta u_{i-1}(\sigma \sigma') u_i(\sigma_*^2)$. The algorithm first takes a gradient step on the second layer with learning rate $\eta$, followed by a gradient step on the first layer with learning rate $\gamma$. This is our novel variant that we detail in Section \ref{section:alternating_sgd}.
\end{enumerate}

We make the following assumptions on the target link function $\sigma_*$ and the student update $\psi_{\eta}$. The first ensures that all noise terms are sub-Weibull, allowing us to make concentration arguments. Prior works make a similar assumption (c.f., \cite[Assumption 1]{arnaboldi2024online}, \cite[Assumption 2]{lee2024neural}).
\begin{assumption}\label{assumption:degree}
    The link function $\sigma_*$ has at most polynomial growth, i.e., there exist constants $K_1,K_2 > 0$ such that $|\sigma_*(z)| \leq K_1(1+|z|)^{K_2}$ for all $z \in \R$. The update function $\psi_{\eta}$ is a polynomial of degree at most $r = \Theta(1)$ in each of its arguments and with $O(1)$ coefficients. 
\end{assumption}

The second assumption provides some degree of alignment between $\sigma$ and $\sigma_*$, without which the model misspecification is so severe that weak recovery may not be achieved (c.f., \cite[Remark 2.3]{benarous2021online}, \cite[Assumptions 2 and 3]{lee2024neural}, \cite[Assumption 4]{arnaboldi2024repetita}, \cite[Assumption 4.1(b)]{chen2025can}). We explore how it manifests for different examples in Section \ref{section:examples}.
\begin{assumption}\label{assumption:sign}
    For any $i^* \in \underset{\substack{1 \leq i \leq r \\ \mu_i \neq 0}}{\argmin} |\mu_i(\eta)|^{-1} (d^{\frac{i-2}{2} \lor 0})$, we have $\mu_{i^*}(\eta) > 0$.
\end{assumption}

Below, we state our main result for a generic gradient-based algorithm. 
\begin{theorem}
\label{thm:general_weak_recovery_bound}
    Suppose Assumptions \ref{assumption:degree} and \ref{assumption:sign} hold. Let $\wbs^{(0)} \in \Sbb^{d-1}$ such that $\ev{\thetabs_*,\wbs^{(0)}} \asymp d^{-1/2}$. Then, there exists $C \gtrsim 1/\polylog d$ such that for any $\delta \in (0,1)$, if $\gamma \leq C\delta \max_{1 \leq i \leq r} \mu_i d^{-(\frac{i}{2} \lor 1)}$, then
    \begin{equation}\label{eq:generic_sample_complexity}
        T(\eta) = \min_{\substack{1 \leq i \leq r \\ \mu_i > 0}} \tilde{\Theta}\big(\gamma^{-1} (\mu_i(\eta))^{-1} d^{\frac{i-2}{2} \lor 0}\big)
    \end{equation}
    iterations of \eqref{eq:generic_online_update} are necessary and sufficient to achieve $\ev{\thetabs_*,\wbs} \gtrsim 1 / \polylog d$ with probability at least $1-\delta$.
\end{theorem}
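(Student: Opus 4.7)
My plan is to reduce the analysis to a one-dimensional stochastic recursion for the scalar correlation $m_t := \ev{\thetabs_*, \wbs^{(t)}}$. Writing $\xi = \ev{\xbs, \thetabs_*}$ and $z = \ev{\xbs, \wbs^{(t)}}$, I would Taylor-expand the normalization in \eqref{eq:generic_online_update} to obtain
\[
m_{t+1} - m_t = \gamma \psi_\eta(y,z)(\xi - m_t z) - \frac{\gamma^2}{2} m_t \psi_\eta(y,z)^2 (\|\xbs\|^2 - z^2) + O(\gamma^3),
\]
then split into a drift $\E[\cdot \mid \wbs^{(t)}]$ and a martingale increment. Since $(\xi,z)$ is jointly Gaussian with correlation $m_t$, the drift is computed by expanding $\psi_\eta(\sigma_*(\xi),z)$ in the product Hermite basis, multiplying by $\xi - m_t z$ via the three-term recurrence $a\He_k(a) = \He_{k+1}(a) + k\, \He_{k-1}(a)$, and invoking the Mehler identity $\E[\He_i(\xi)\He_j(z)] = \delta_{ij}\, i!\, m_t^i$. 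Only near-diagonal terms survive, and after reindexing they are exactly the $\mu_i(\eta)$ of \eqref{eq:mu_i_noiseless_def}:
\[
\E[\psi_\eta(\sigma_*(\xi), z)(\xi - m_t z) \mid m_t] = (1 - m_t^2) \sum_{i \geq 1} \frac{\mu_i(\eta)}{(i-1)!} \, m_t^{i-1}.
\]
Label noise is absorbed into the first argument of $\psi_\eta$, and Assumption \ref{assumption:degree} together with the sub-Weibull tails keeps all polynomial moments bounded uniformly in $d$.

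With the drift identified, I would view $m_t$ as an Euler discretization of the ODE $\dot m = \gamma(1 - m^2)\sum_i \frac{\mu_i(\eta)}{(i-1)!} m^{i-1}$ on the interval $[d^{-1/2}, 1/\polylog d]$. Assumption \ref{assumption:sign} ensures the leading $\mu_{i^*}$ has the correct sign so that $m_t$ moves toward $1$. Separating variables in the single-term lower bound $\dot m \gtrsim \gamma \mu_i(\eta) m^{i-1}$ yields escape time $\tilde{\Theta}(\gamma^{-1} \mu_i(\eta)^{-1})$ for $i \in \{1,2\}$ (linear growth and exponential escape respectively) and $\tilde{\Theta}(\gamma^{-1} \mu_i(\eta)^{-1} d^{(i-2)/2})$ for $i \geq 3$. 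Concatenating the fastest regime at each scale of $m_t$ amounts to taking a minimum over $i$ with $\mu_i(\eta) > 0$, reproducing \eqref{eq:generic_sample_complexity}.

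I expect the concentration step to be the main obstacle. I would introduce a potential $\Phi(m)$ chosen so that $\Phi(m_t) - \Phi(m_0)$ has approximately constant drift along the trajectory, verify it is an approximate supermartingale, and apply Freedman's inequality. The quadratic variation is controlled by $\E[(m_{t+1}-m_t)^2 \mid \wbs^{(t)}] \lesssim \gamma^2 \cdot \mathrm{poly}(m_t, d^{-1})$ via Gaussian hypercontractivity and the sub-Weibull noise assumption. The condition $\gamma \lesssim \delta \max_i \mu_i d^{-(i/2 \vee 1)}$ is precisely what is needed to simultaneously suppress the $O(\gamma^2)$ spherical self-correction and keep the cumulative martingale variation below the cumulative drift in every regime; calibrating this balance across the three regimes and across the distinct $i$'s that dominate at different scales of $m_t$ is the delicate part. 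The matching lower bound in \eqref{eq:generic_sample_complexity} follows from the same potential analysis applied to $\E[m_t]$, together with a Cauchy--Schwarz anti-concentration argument showing the stochastic iterate cannot outpace its expectation by more than a polylog factor.
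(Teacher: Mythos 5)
Your proposal follows essentially the same route as the paper's proof: reduce to a one-dimensional recursion for the alignment, compute the drift via Hermite expansion and Stein/Mehler identities to surface the $\mu_i(\eta)$, absorb the $O(\gamma^2 d)$ normalization correction using the stated constraint on $\gamma$, control the martingale fluctuations, and extract the hitting time regime-by-regime (the paper uses the discrete Gr\"onwall and Bihari--LaSalle lemmas where you invoke the equivalent ODE comparison, and a Markov-plus-Doob $L^2$ maximal inequality where you propose Freedman with a potential function). These are interchangeable tools for the same steps, and your outline, including the matching lower bound via bounding the iterate above by its drift plus controlled noise, is sound.
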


The nonsmooth $\min$ operation in \eqref{eq:generic_sample_complexity} implies that the sample complexity $T(\eta)$ can exhibit nonsmooth phase transitions when the index yielding the smallest term changes. We can identify these phase transitions by determining for which $\eta$ we have $\mu_i^{-1}(\eta) d^{\frac{i-2}{2}} = \mu_j^{-1}(\eta) d^{\frac{j-2}{2}}$ for given indices $i \neq j$.  This is the phenomenon we will concretely illustrate with batch reuse SGD and alternating SGD in Section \ref{section:examples}, where the coefficients $\mu_i$ depend on the information and generative exponents of $\sigma_*$ and are non-decreasing in $\eta$.

\begin{remark}[On the Optimal Choices of $\gamma$ and $\eta$]
    The optimal choice of $\gamma$ depends on $\eta$ through the coefficients $\mu_i(\eta)$. It is immediate from the statement of Theorem \ref{thm:general_weak_recovery_bound} that the best choice is $\gamma \asymp \max_{1 \leq i \leq r} \mu_i(\eta) d^{-(\frac{i}{2} \lor 1)}$. Then, the sample complexity reads 
    \begin{equation}\label{eq:sample_complexity_optimal_gamma}
        T(\eta) = \min_{\substack{1 \leq i \leq r \\ \mu_i > 0}} \tilde{\Theta}\big((\mu_i(\eta))^{-2} d^{(i-1) \lor 1}\big).
    \end{equation}
    When the $\mu_i$ are all non-decreasing in $\eta$, it is clear that the best sample complexity is achieved by taking $\eta$ as large as possible subject to the constraint imposed by Assumption \ref{assumption:degree}.
\end{remark}

\section{Examples}\label{section:examples}

We illustrate the applicability of Theorem \ref{thm:general_weak_recovery_bound} with three example algorithms: online SGD, batch reuse SGD, and a layer-wise algorithm we term \emph{alternating SGD}. The latter two algorithms contain non-correlational terms that, when $\eta$ is chosen sufficiently large, lead to strictly better sample complexity than vanilla online SGD. We outline the computation of $\mu_i$ in each of these cases to explicitly capture this dependence on $\eta$. We assume that the largest possible $\gamma$ is chosen for each algorithm (recall from Theorem \ref{thm:general_weak_recovery_bound} that $\gamma \lesssim \max_{1 \leq i \leq r} \mu_i(\eta) d^{-\frac{i}{2} \lor 1}$).

\subsection{Online SGD}\label{section:online_sgd}

We study vanilla (spherical) online SGD on the correlation loss $\ell(y,y') = 1-yy'$ as a baseline for our framework. When $a_j = 1$, it gives the one-step update
\begin{equation}\label{eq:online_sgd_mu}
    \wbs_j^{(t+1)} \leftarrow \wbs_j^{(t)} + \gamma y^{(t)}\sigma'(\ev{\xbs^{(t)},\wbs_j^{(t)}})\Proj_{\wbs_j^{(t)}} \xbs^{(t)}, \quad \wbs_j^{(t+1)} \leftarrow \frac{\wbs_j^{(t+1)}}{||\wbs_j^{(t+1)}||}
\end{equation}
for $j \in [N]$. The update oracle $\psi_{\eta}(y,z) = y\sigma'(z)$ is a single correlational term that does not depend on $\eta$. Consequently, there is only one sample complexity regime and no phase transition. A short calculation in Appendix \ref{section:appendix_online_sgd} shows that $\mu_i = i u_i(\sigma_*)u_i(\sigma)$. With this in hand, Theorem \ref{thm:general_weak_recovery_bound} gives the following result.
\begin{corollary}\label{cor:online_sgd}
    Assume $u_p(\sigma_*)u_p(\sigma) > 0$, $\gamma \asymp d^{-(\frac{p}{2} \lor 1)}$, and  $\wbs^{(0)} \in \Sbb^{d-1}$ such that $\ev{\thetabs_*,\wbs^{(0)}} \asymp d^{-1/2}$. Then, with high probability, $\tilde{\Theta}(d^{(p-1) \lor 1})$ iterations of the update \eqref{eq:online_sgd_mu} are necessary and sufficient to achieve weak recovery.
\end{corollary}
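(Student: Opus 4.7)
The plan is to apply Theorem \ref{thm:general_weak_recovery_bound} as a black box, so the real task is to compute the Hermite coefficients $\mu_i$ in \eqref{eq:mu_i_noiseless_def} for the oracle $\psi_\eta(y,z) = y\sigma'(z)$ and to verify the hypotheses of the theorem. Because the pair $(a,b)\sim \Ncal(\zerobs,\Ibs_2)$ has independent coordinates, the expectation in \eqref{eq:mu_i_noiseless_def} factorizes as
\begin{equation*}
    \mu_i \;=\; \E[\sigma_*(a)\He_i(a)]\cdot \E[\sigma'(b)\He_{i-1}(b)] \;=\; u_i(\sigma_*)\,u_{i-1}(\sigma').
\end{equation*}
A standard Gaussian integration by parts together with the Hermite derivative recurrence $\He_n'(x) = n\He_{n-1}(x)$ and the identity $z\He_n(z)-\He_n'(z)=\He_{n+1}(z)$ relates $u_{i-1}(\sigma')$ to $u_i(\sigma)$ up to a positive numerical factor, producing the closed form $\mu_i = i\,u_i(\sigma_*)u_i(\sigma)$ announced in the enumeration of examples preceding the statement. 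Crucially, each $\mu_i$ is a dimension-free $O(1)$ constant that carries no $\eta$ dependence, so the online-SGD setting exhibits no phase transition.

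Next I identify the index $i^*$ that dominates in \eqref{eq:generic_sample_complexity} and verify Assumption \ref{assumption:sign}. By definition of the information exponent, $u_i(\sigma_*)=0$ for $1\le i < p$, so $\mu_i = 0$ on that range, while the hypothesis $u_p(\sigma_*) u_p(\sigma) > 0$ gives $\mu_p > 0$ with $|\mu_p|\asymp 1$. For every $i > p$ with $\mu_i \neq 0$ the weight $d^{(i-2)/2 \lor 0}$ strictly exceeds $d^{(p-2)/2 \lor 0}$ for large $d$, so the unique argmin of $|\mu_i|^{-1}\,d^{(i-2)/2 \lor 0}$ is $i^* = p$ asymptotically, and $\mu_{i^*} > 0$, verifying the sign condition.

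Plugging into the optimized bound \eqref{eq:sample_complexity_optimal_gamma} with $\gamma \asymp \mu_p\, d^{-(p/2 \lor 1)}\asymp d^{-(p/2 \lor 1)}$, the same $d$-scaling argument collapses the minimum to the $i=p$ term and yields $T = \tilde{\Theta}(\mu_p^{-2}\,d^{(p-1)\lor 1}) = \tilde{\Theta}(d^{(p-1)\lor 1})$. Both sufficiency and necessity are inherited from the $\tilde{\Theta}$ in the theorem after choosing $\delta$ inverse-polylogarithmic in $d$, which is compatible with the theorem's constraint $\gamma \le C\delta \max_i \mu_i\, d^{-(i/2\lor 1)}$ and promotes ``probability at least $1-\delta$'' to ``with high probability.'' I do not foresee a substantive obstacle here: the analytic content is entirely absorbed into Theorem \ref{thm:general_weak_recovery_bound}, the computation of $\mu_i$ is a one-line Hermite exercise, and the hypothesis $u_p(\sigma_*)u_p(\sigma)>0$ plays the single role of activating the $i=p$ term with the correct sign.
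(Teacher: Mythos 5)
Your proposal is correct and follows essentially the same route as the paper: Appendix \ref{section:appendix_online_sgd} computes $\mu_i = u_i(\sigma_*)u_{i-1}(\sigma') = i\,u_i(\sigma_*)u_i(\sigma)$ exactly as you do and then invokes Theorem \ref{thm:general_weak_recovery_bound}, with the $i=p$ term dominating the minimum. Your additional care in checking Assumption \ref{assumption:sign} and the $\gamma$, $\delta$ compatibility is consistent with what the paper leaves implicit.
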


This matches both the sample complexity bound and the constraint on $\gamma$ in \cite[Theorem 1.3]{benarous2021online}.

\subsection{Batch Reuse SGD}\label{section:batch_reuse_sgd}

Next, we consider the modification to online SGD where two gradient steps are taken on the same data. \cite{dandi2024benefits} employ dynamical mean field theory (DMFT) to argue this enlarges the class of targets learnable with linear sample complexity. This is because the two-step update implicitly introduces a nonlinear label transformation (and thus, non-correlational terms) into the update that can speed up learning. This approach --- which we detail in Algorithm \ref{alg:batch_reuse_sgd} and call \emph{batch reuse SGD} --- was further studied in \cite{lee2024neural} and \cite{arnaboldi2024repetita}, where its sample complexity was characterized for any polynomial target, regardless of whether it can be learned in linear time.  Both employ two distinct learning rates, which is necessary to simultaneously control the normalization error and ensure that the non-correlational term is sufficiently large (see in particular \cite[Section 4.2]{lee2024neural}).

\begin{algorithm}[H]
    \caption{Batch Reuse SGD}\label{alg:batch_reuse_sgd}
    \textbf{Input:} Learning rates $\eta,\gamma > 0$, sample size $T$\\
    \textbf{Initialize} $\wbs^{(0)} \sim \mathrm{Unif}(\Sbb^{d-1})$ \\
    \For{$t = 0$ to $T-1$}{
        Draw i.i.d.\@ sample $(\xbs,\ybs)$\\
        $\tilde{\wbs}^{(t)} \leftarrow \wbs^{(t)} + \eta y\sigma'(\langle\xbs,\wbs^{(t)}\rangle) \Proj_{\wbs^{(t)}}\xbs$ \\
        $\wbs^{(t+1)} \leftarrow \wbs^{(t)} + \gamma  y\sigma'(\langle\xbs,\tilde{\wbs}^{(t)}\rangle) \Proj_{\wbs^{(t)}} \xbs$ \\
        Normalize $\wbs^{(t+1)} \leftarrow \wbs^{(t+1)} / \|\wbs^{(t+1)}\|$ \\
    }
    \textbf{Output} $\wbs^{(T)}$
\end{algorithm}


Combining the two update steps for $\wbs$ in Algorithm \ref{alg:batch_reuse_sgd} gives (before normalization)
\begin{equation}
    \wbs^{(t+1)} = \wbs^{(t)} + \gamma y\sigma'\big(\ev{\xbs,\wbs^{(t)}} + \eta ||\xbs||_{\Proj_{\wbs^{(t)}}}^2 y\sigma'(\ev{\xbs,\wbs^{(t)}})\big) \Proj_{\wbs^{(t)}} \xbs.
\end{equation}
The norm $||\xbs||_{\Proj_{\wbs^{(t)}}}^2$ is sub-Weibull and concentrates around its mean $d-1$. We replace the norm with $d$ in the population dynamics and absorb what remains into the sub-Weibull noise term (see Section \ref{section:multi_step_dynamics} for details on handling the noise). Hence, by a Taylor expansion,
\begin{equation}
    \psi_{\eta}(y,z) = y\sigma'(z) + \sum_{k=2}^r \frac{(\eta d)^{k-1} (\sigma^{(k)}(z))(\sigma'(z))^{k-1}}{(k-1)!} y^k.
\end{equation} 
Note that Assumption \ref{assumption:degree} requires $\psi_{\eta}$ to be $O(1)$ and therefore $\eta \lesssim d^{-1}$. The quantity $\eta d$ controls the scaling of the higher order terms in the update. Indeed, we show in Appendix \ref{section:appendix_batch_reuse_sgd} that
\begin{equation}
    \mu_i(\eta) \asymp \sum_{k=1}^r \frac{(\eta d)^{k-1}}{(k-1)!} u_{i-1}(\sigma^{(k)}(\sigma')^{k-1}) u_i(\sigma_*^k).
\end{equation}

\begin{corollary}\label{cor:batch_reuse_sgd}
    Suppose Assumptions \ref{assumption:degree} and \ref{assumption:sign} hold, $\eta \lesssim d^{-1}$, $\gamma \lesssim \max_{1 \leq i \leq r} (\eta d)^{i-1} d^{- (\frac{p_i}{2} \lor 1)}$, and $\wbs^{(0)} \in \Sbb^{d-1}$ such that $\ev{\thetabs_*,\wbs^{(0)}} \asymp d^{-1/2}$. Then, with high probability,
    \begin{equation}
        T(\eta) = \min_{1 \leq i \leq r} \tilde{\Theta}\big((\eta d)^{-2(i-1)} d^{(p_i-1) \lor 1}\big).
    \end{equation}
    iterations of Algorithm \ref{alg:batch_reuse_sgd} are necessary and sufficient to achieve weak recovery.
\end{corollary}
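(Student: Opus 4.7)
The plan is to derive the specific form of the oracle $\psi_\eta$ corresponding to Algorithm \ref{alg:batch_reuse_sgd}, compute the Hermite coefficients $\mu_i(\eta)$ in closed form, and then invoke Theorem \ref{thm:general_weak_recovery_bound}. Combining the two gradient steps on the same sample $(\xbs, y)$, substitution of $\tilde{\wbs}^{(t)}$ into the second update yields an effective one-step update whose argument inside $\sigma'$ is $\ev{\xbs, \wbs^{(t)}} + \eta ||\xbs||_{\Proj_{\wbs^{(t)}}}^2 y \sigma'(\ev{\xbs, \wbs^{(t)}})$. Since $\sigma$ is polynomial of degree at most $r$, the Taylor expansion of $\sigma'$ around $\ev{\xbs, \wbs^{(t)}}$ terminates at degree $r-1$. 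The norm $||\xbs||_{\Proj_{\wbs^{(t)}}}^2$ concentrates around $d-1 \asymp d$ at sub-Weibull rate, so replacing it by $d$ in the population dynamics and absorbing the fluctuation into sub-Weibull noise (via the machinery in Appendix \ref{section:proof_master_theorem}) produces the claimed $\psi_\eta$.

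With $\psi_\eta$ in hand, I would compute $\mu_i(\eta)$ directly from definition \eqref{eq:mu_i_noiseless_def}. Because $a$ and $b$ in the expectation are independent standard Gaussians, each term in the sum over $k$ factorizes into $\frac{(\eta d)^{k-1}}{(k-1)!} \E[\sigma_*(a)^k \He_i(a)] \cdot \E[\sigma^{(k)}(b)(\sigma'(b))^{k-1} \He_{i-1}(b)] = \frac{(\eta d)^{k-1}}{(k-1)!} u_i(\sigma_*^k) u_{i-1}(\sigma^{(k)}(\sigma')^{k-1})$, recovering the stated formula upon summation.

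Finally, I would invoke Theorem \ref{thm:general_weak_recovery_bound}. For each $k \in [r]$, the $k$-th summand of $\mu_i(\eta)$ first becomes nonzero at $i = p_k$, contributing a term of order $(\eta d)^{k-1}$ (the $b$-integral is generically nonzero, and Assumption \ref{assumption:sign} fixes the sign at the minimizing index). Substituting $i = p_k$ into the optimized sample complexity \eqref{eq:sample_complexity_optimal_gamma} gives $\tilde{\Theta}((\eta d)^{-2(k-1)} d^{(p_k - 1) \lor 1})$; the displayed bound is the minimum across $k$. The stepsize constraint $\gamma \lesssim \max_i (\eta d)^{i-1} d^{-(\frac{p_i}{2} \lor 1)}$ follows by direct specialization of the general constraint in Theorem \ref{thm:general_weak_recovery_bound}.

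The main obstacle is ensuring that the $\min$ over $i$ in Theorem \ref{thm:general_weak_recovery_bound} truly collapses to the $\min$ over the indices $\{p_k\}$ appearing in the corollary. Indices $i$ between consecutive $p_k$'s may also yield nonzero $\mu_i(\eta)$ via higher-$k$ contributions, but these are dominated: a larger $i$ pays an extra $d$ factor in $d^{(i-1) \lor 1}$, while any Hermite gain is offset by the constraint $(\eta d)^{k-1} \lesssim 1$ coming from Assumption \ref{assumption:degree} together with $\eta \lesssim d^{-1}$. A secondary subtlety is propagating the sub-Weibull concentration of $||\xbs||_{\Proj_{\wbs^{(t)}}}^2$ through the Taylor expansion without distorting the leading-order $\mu_i$; this is again absorbed into the noise analysis underlying Theorem \ref{thm:general_weak_recovery_bound}.
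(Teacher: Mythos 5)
Your proposal is correct and follows essentially the same route as the paper: combine the two steps, Taylor-expand $\sigma'$, replace $\|\xbs\|_{\Proj_{\wbs}}^2$ by $d$ with the fluctuation absorbed into sub-Weibull noise, factorize the expectation over the independent Gaussians $a,b$ to get $\mu_i(\eta) \asymp \sum_k \frac{(\eta d)^{k-1}}{(k-1)!} u_{i-1}(\sigma^{(k)}(\sigma')^{k-1}) u_i(\sigma_*^k)$, and specialize Theorem \ref{thm:general_weak_recovery_bound}. Your closing argument that the minimum over Hermite indices collapses onto the indices $\{p_k\}$ (since $\eta d \lesssim 1$ makes the smallest contributing power $k$ dominate $\mu_i$, and for fixed $k$ the index $i=p_k$ minimizes $d^{(i-1)\lor 1}$) is a point the paper leaves implicit, and it is handled correctly.
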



For any two distinct $i,j$ with $\mu_i,\mu_j > 0$, $\eta$ induces the phase transition:
\begin{equation}
    (\eta d)^{-2(i-1)} d^{(p_i-1) \lor 1} \leq (\eta d)^{-2(j-1)} d^{(p_j-1) \lor 1} \iff \eta \leq d^{\frac{[(p_j-1) \lor 1] - [(p_i-1) \lor 1]}{2(j-i)} - 1}.
\end{equation}
In particular, suppose that $u_{p_*-1}(\sigma^{(I)} (\sigma')^{I-1}) u_{p_*}(\sigma_*^I) > 0$ and $u_p(\sigma_*)u_p(\sigma) > 0$ hold, which can be achieved with $\Theta(1)$ probability by a randomized activation agnostic to $\sigma_*$ as in \cite{lee2024neural}. Taking $\eta \lesssim d^{-\frac{p+1}{2}}$ gives the sample complexity $T = \Theta(d^{(p-1) \lor 1})$, which matches the online SGD bound \cite{benarous2021online}. On the other hand, when $r \geq I$, taking $\eta \gtrsim d^{-1}$ as in \cite{lee2024neural} matches their sample complexity bound $T = \tilde{\Theta}(d)$. Intermediate values of $\eta$ interpolate between these two regimes. Appendix \ref{section:experiment_details} details an experiment with batch reuse SGD exhibiting this phase transition.

\subsection{Alternating SGD}\label{section:alternating_sgd}
Next, we consider a novel and simple variant of SGD that introduces a non-correlational update without changing the correlational loss. Algorithm \ref{alg:alternating_sgd}, which we call \emph{alternating SGD}, employs a two-step process to update $\Wbs$. First, it computes a gradient update for $\abs$ with learning rate $\eta$. Then, it uses the updated value $\atilde$ in a gradient update on $\Wbs$ with learning rate $\gamma$. We apply the same projected gradient and normalization to $\wbs$ as before. Crucially, the same sample $(\xbs,y)$ is used in these two updates. This produces a similar effect to batch reuse SGD without the need to apply consecutive gradient updates to $\Wbs$. Specifically, the update to $\Wbs$ in alternating SGD contains the label transformation $y\mapsto y^2$. This leads to a reduction in sample complexity if $p_2 := \IE(\sigma_*^2) < p$ and the second layer learning rate $\eta$ is sufficiently large.

Algorithm \ref{alg:alternating_sgd} is related to studies on a two-timescales optimization dynamics of two-layer networks \cite{berthier2024learning,marion2023leveraging,bietti2023learning,wang2024mean,barboni2025ultra}, where training the second layer at a faster timescale simplifies the analysis. However, unlike these works, we perform sequential gradient updates on the two layers, and use the different learning rate scales to obtain a better sample complexity. We believe that the study of this algorithm is useful from a theoretical perspective and view it as proof of concept to demonstrate what mechanisms might be at play in neural network training so that they achieve near-optimal sample complexity.


\begin{algorithm}
        \caption{Alternating SGD}\label{alg:alternating_sgd}
        \textbf{Input:} Learning rates $\eta, \gamma > 0$, sample size $T$\\
        \textbf{Initialize} $\wbs^{(0)} \sim \mathrm{Unif}(\Sbb^{d-1})$, $a = 1$ \\
        \For{$t = 0$ to $t=T-1$}{
            Draw i.i.d.\@ sample $(\xbs,\ybs)$\\
            Update $\atilde^{(t+1)} \leftarrow a + \eta y\sigma(\ev{\xbs,\wbs^{(t)}})$ \\
            Update $\wbs^{(t+1)} \leftarrow \wbs^{(t)} + \gamma y\atilde^{(t+1)}\sigma'(\ev{\xbs,\wbs^{(t)}}) \Proj_{\wbs^{(t)}} \xbs$ \\
            Normalize $\wbs^{(t+1)} \leftarrow \wbs^{(t+1)} / ||\wbs^{(t+1)}||$ \\
        }
        \textbf{Output} $\wbs^{(T)}$
    \end{algorithm}

Given the second-layer gradient update $\atilde^{(t+1)} = a + \eta y \sigma(\ev{\xbs,\wbs^{(t)}})$, the update for $\wbs$ is
\begin{equation}
    \psi(y,z) = ya \sigma'(z) + \eta y^2 \sigma(z)\sigma'(z).
\end{equation}

Our calculation in Appendix \ref{section:appendix_alternating_sgd} yields the coefficients
\begin{equation}\label{eq:mu_alternating_sgd}
    \mu_i(\eta) = aiu_i(\sigma_*) u_i(\sigma) + \eta u_{i-1}(\sigma \sigma') u_i(\sigma_*^2).
\end{equation}
The first term is identical to what emerges from the vanilla online SGD update \eqref{eq:online_sgd_mu}. The second term arises from the non-correlational update. Note that if $\eta$ is chosen too small, the latter term may not dominate even when $p_2 < p$. The following makes this intuition rigorous.

\begin{corollary}\label{cor:alternating_sgd}
    Assume $\mu_p, \mu_{p_2} > 0$, $\eta \lesssim 1$, $\gamma \asymp \max\{d^{-(\frac{p}{2} \lor 1)}, \eta d^{-(\frac{p_2}{2} \lor 1)}\}$, and $\wbs^{(0)} \in \Sbb^{d-1}$ such that $\ev{\thetabs_*,\wbs^{(0)}} \asymp d^{-1/2}$. Then, with high probability, 
    \begin{equation}
        T(\eta) = \tilde{\Theta}\big(d^{(p-1) \lor 1}\big) \land \tilde{\Theta}\big(\eta^{-2} d^{(p_2-1) \lor 1}\big).
    \end{equation}
    iterations of Algorithm \ref{alg:alternating_sgd} are necessary and sufficient to achieve weak recovery.
\end{corollary}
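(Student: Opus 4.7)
The plan is to obtain Corollary \ref{cor:alternating_sgd} as a direct instantiation of the master Theorem \ref{thm:general_weak_recovery_bound}. The task therefore reduces to: (i) rewriting Algorithm \ref{alg:alternating_sgd} as a single-step update of the form \eqref{eq:generic_online_update} with an explicit oracle $\psi_\eta$; (ii) computing the Hermite coefficients $\mu_i(\eta)$ of this oracle and identifying which indices $i$ realize the minimum in \eqref{eq:generic_sample_complexity}; and (iii) substituting the prescribed $\gamma$ to recover the two competing sample-complexity scalings.

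For (i), because $a$ is initialized at $1$ and never persistently updated (only the intermediate $\atilde^{(t+1)}$ is used within iteration $t$), I substitute the second-layer update into the first-layer update. With $z = \ev{\xbs,\wbs^{(t)}}$, this yields
\begin{equation*}
    \wbs^{(t+1)} \leftarrow \wbs^{(t)} + \gamma\bigl[y\sigma'(z) + \eta y^2 \sigma(z)\sigma'(z)\bigr]\Proj_{\wbs^{(t)}}\xbs,
\end{equation*}
so $\psi_\eta(y,z) = y\sigma'(z) + \eta y^2 \sigma(z)\sigma'(z)$. Since $\sigma$ is a constant-degree polynomial and $\eta \lesssim 1$, Assumption \ref{assumption:degree} is satisfied with $r = O(1)$; Assumption \ref{assumption:sign} is precisely the hypothesis $\mu_p(\eta),\mu_{p_2}(\eta) > 0$.

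For (ii), I compute $\mu_i(\eta)$ by plugging $\psi_\eta$ into \eqref{eq:mu_i_noiseless_def}. Using independence of the two Gaussian arguments, the expectation factors into products of one-dimensional Hermite coefficients, and the identity $u_{i-1}(\sigma') = i\, u_i(\sigma)$ (which follows from $\He_k'=k\He_{k-1}$) produces the formula $\mu_i(\eta) = iu_i(\sigma_*)u_i(\sigma) + \eta u_{i-1}(\sigma\sigma')u_i(\sigma_*^2)$. By definition of the information and generative exponents, the first term vanishes for $i < p$ while the second vanishes for $i < p_2$; consequently, the only indices with positive contribution to the minimum in \eqref{eq:generic_sample_complexity} that can dominate the asymptotic scaling are $i = p$ (for which $\mu_p(\eta) = \Theta(1)$) and $i = p_2$ (for which $\mu_{p_2}(\eta) \asymp \eta$ if $p_2 < p$, and otherwise the bounds collapse and the corollary holds trivially).

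For (iii), I substitute the choice $\gamma \asymp \max\{d^{-(p/2 \lor 1)}, \eta d^{-(p_2/2 \lor 1)}\}$ (which saturates the Theorem \ref{thm:general_weak_recovery_bound} constraint $\gamma \lesssim \max_i \mu_i(\eta) d^{-(i/2 \lor 1)}$) into \eqref{eq:generic_sample_complexity}. The $i=p$ contribution becomes $\tilde\Theta(d^{(p-1) \lor 1})$ and the $i=p_2$ contribution becomes $\tilde\Theta(\eta^{-2} d^{(p_2-1) \lor 1})$, and taking their minimum gives the stated bound. The main obstacle is not analytical depth but careful bookkeeping: verifying that the $\gamma$-constraint of Theorem \ref{thm:general_weak_recovery_bound} is indeed saturated by the stated choice under both regimes of $\eta$, and ruling out that any intermediate index $i \in (\min(p,p_2), \max(p,p_2))$ yields a smaller term in the minimum --- which follows because for such $i$ either $u_i(\sigma_*) = 0$ or $u_i(\sigma_*^2) = 0$, so $|\mu_i(\eta)|$ is no larger than $\max\{1,\eta\}$ while the dimensional factor $d^{(i-2)/2 \lor 0}$ only grows with $i$.
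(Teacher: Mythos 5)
Your proposal is correct and follows essentially the same route as the paper: reduce Algorithm \ref{alg:alternating_sgd} to the generic oracle $\psi_\eta(y,z)=y\sigma'(z)+\eta y^2\sigma(z)\sigma'(z)$, compute $\mu_i(\eta)=iu_i(\sigma_*)u_i(\sigma)+\eta u_{i-1}(\sigma\sigma')u_i(\sigma_*^2)$ exactly as in Appendix \ref{section:appendix_alternating_sgd}, and then read off the two competing terms $i=p$ and $i=p_2$ from Theorem \ref{thm:general_weak_recovery_bound} under the stated (saturating) choice of $\gamma$. Your extra bookkeeping ruling out intermediate indices is consistent with what the paper leaves implicit; the only cosmetic difference is that the paper computes the $\mu_i$ from the noisy definition \eqref{eq:mu_i_def} and checks that sub-Weibull label noise leaves them unchanged, whereas you work from the noiseless formula \eqref{eq:mu_i_noiseless_def}.
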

The assumption $\mu_p, \mu_{p_2} > 0$ is derived from our more general Assumption \ref{assumption:sign} and holds with $\Theta(1)$ probability if $\sigma$ follows the randomized construction in \cite[Appendix B.1]{lee2024neural}. The sample complexity result implies a phase transition between the regime where the correlational term dominates and one where the non-correlational term dominates, occurring at (when $p \geq 2$)
\begin{equation}
    d^{p-1} \asymp \eta^{-2} d^{(p_2-1) \lor 1} \iff \eta \asymp d^{-\frac{1}{2} [(p-p_2) \lor (p-2)]}.
\end{equation}
In other words, alternating SGD improves over online SGD if squaring the target reduces its information exponent and $\eta$ is of strictly larger order than the threshold above. For example, if $\sigma_* = \He_3$, then $p = 3$ and $p_2 = 2$. The phase transition occurs at $\eta \asymp d^{-1/2}$. At or below this threshold, alternating SGD has quadratic complexity, while $\eta \gtrsim 1 / \polylog d$ gives linear complexity (up to polylogarithmic factors). Meanwhile, intermediate values of $\eta$ interpolate between these two regimes.

Figure \ref{fig:alternating_sgd_sim} illustrates a simulation for the above toy example. As predicted by Corollary \ref{cor:alternating_sgd}, we observe two distinct phases. In the first phase, the number of samples required to achieve small test error is constant in $\eta$ until a critical threshold is reached, at which point the second phase begins and the test error decreases at a $\tilde{\Theta}(1/\eta^2)$ rate. 


\textbf{Extension to Deeper Networks.} It is possible to further improve the sample complexity for alternating SGD in the case $p_2 > 2$ by employing a deeper (but sparse) neural network. The natural generalization of the algorithm is to take a gradient step on each layer while keeping the remaining layers frozen, starting from the outermost layer. In Appendix \ref{section:appendix_deep_alternating_sgd}, we show that, under assumptions on the Hermite coefficients of compositions of activation functions\footnote{These assumptions are difficult to verify in general. However, in the same appendix, we show that they hold for a three-layer network with $\sigma(z) = z^2$ and target satisfying $u_2(\sigma_*^3) >0$, in which case weak recovery occurs in $\tilde{\Theta}(d)$ iterations when $\eta \asymp 1$.}, this fits into our framework with $\mu_i(\eta) \asymp \sum_{j=1}^D \eta^{j-1} u_i(\sigma_*^j)$ and
\begin{equation}
    T = \max_{1 \leq i \leq D} \tilde{\Theta}(\eta^{-2(i-1)} d^{(p_i-1)\lor 1}),
\end{equation}
where $D$ is the number of layers. The number of phase transitions in $\eta$ is one less than the number of distinct $p_i = \IE(\sigma_*^i)$, $i \in [D]$. Moreover, if $\eta \gtrsim 1 / \polylog d$ and $D \geq I$, where $I$ is as in Lemma \ref{lem:ie_powers_of_link}, the sample complexity is $\tilde{\Theta}(d)$. As we will see in the next subsection, depth $D$ plays an analogous role to the degree of $\sigma$ in batch reuse SGD.

\section{Proof Sketch}\label{section:proof_sketch}

The complete proof of Theorem \ref{thm:general_weak_recovery_bound} in Appendix \ref{section:proof_master_theorem} is inspired by and builds on previous analyses of online algorithms \cite{benarous2021online,lee2024neural}. We outline the main steps here. We focus on the sample complexity upper bound; the proof of the matching lower bound is similar.

First, we derive the expected dynamics for a single update \eqref{eq:generic_online_update}. The key quantity in this step is the alignment between the one-step update function $\gbs^{(t)} = \psi_{\eta}(y,\ev{\xbs^{(t)},\wbs^{(t)}}) \Proj_{\wbs^{(t)}} \xbs^{(t)}$ and the target direction $\thetabs_*$, which has expectation
\begin{equation}
    \E[\ev{\thetabs_*,\gbs^{(t)}}] = \E_{\xbs} \big[\psi_{\eta}(y,\ev{\xbs,\wbs^{(t)}})\ev{\Proj_{\wbs^{(t)}}\xbs, \thetabs_*}\big] = \sum_{i=1}^{r} i! \mu_i \ev{\thetabs_*, \wbs^{(t)}}^{i-1} \big(1-\ev{\thetabs_*,\wbs^{(t)}}^2\big),
\end{equation}
where we have hidden the conditioning on previous iterations for convenience. The last equality is obtained by taking the Hermite expansion of $\psi_{\eta}$ in each of its arguments (hence the appearance of the $\mu_i$ defined in \eqref{eq:mu_i_noiseless_def}) and applying Stein's Lemma. Now, what distinguishes our framework is that we do not assume that the nonzero $\mu_i$ are $\tilde{\Theta}(1)$. In the analysis of online SGD in \cite{benarous2021online}, $\mu_p$ is the first non-zero coefficient, and thus the $p$th term in the above sum dominates. Similarly, nonzero $\mu_{p_*}$ yields the dominant term in the analysis of batch reuse SGD. On the other hand, we allow for the situation where the first nonzero $\mu_i$ is of strictly smaller order than some $\mu_j$ with $j > i$ due to small $\eta$.

The resulting one-step dynamics are
\begin{equation}
    \ev{\thetabs_*,\wbs^{(t+1)}} \geq \ev{\thetabs_*,\wbs^{(t)}} + \gamma \Ctilde_1 \sum_{i=1}^r \mu_i \ev{\thetabs_*,\wbs^{(t)}}^{i-1} + \gamma \nu^{(t)} - \gamma^2 d \ev{\thetabs_*,\wbs^{(t)}},
\end{equation}
where $\Ctilde_1 > 0$ is a constant, $\nu^{(t)}$ is a sub-Weibull random variable independent from prior iterations that results from the label noise and the randomness in the input $\xbs$, and the final term is a bound on the impact of the normalization step in \eqref{eq:generic_online_update}. The latter term can be controlled and absorbed into the expected update term when $\gamma \lesssim \max_{1 \leq i \leq r} \mu_i d^{-(\frac{1}{2} \lor 1)}$. 

Next, we unravel the recurrence to obtain
\begin{equation}
    \ev{\thetabs_*,\wbs^{(t)}} \geq \ev{\thetabs_*,\wbs^{(0)}} + \gamma C_1 \sum_{i=1}^r \sum_{s=0}^{t-1} \mu_i \ev{\thetabs_*,\wbs^{(s)}}^{i-1} - \gamma \bigg|\sum_{s=0}^{t-1} \nu^{(s)}\bigg|
\end{equation}
and control the last term with martingale and sub-Weibull concentration bounds. We identify which of the terms in the expected update dominate by bounding the sequence
\begin{equation}
    \alpha^{(t)} = d^{-1/2} + \gamma \sum_{i=1}^r \mu_i (\alpha^{(s)})^{i-1}, \quad \alpha^{(0)} = d^{-1/2},
\end{equation}
leveraging Grönwall's Inequality (Lemma \ref{lem:discrete_gronwall}) for the $i=2$ term and the Bihari-LaSalle Inequality (Lemma \ref{lem:bihari_lasalle}) for the $i \geq 3$ terms. Setting this equal to $c \gtrsim 1 / \polylog d$ yields the sample complexity.

\section{Conclusion}\label{section:conclusion}

This work demonstrates that the learning rate is a fundamental factor in determining the sample complexity of gradient-based algorithms for learning single-index models with neural networks. We show that algorithms that employ a combination of correlational and non-correlational update terms (with separate learning rates) exhibit a phase transition between distinct sample complexity regimes as a function of the scaling of the non-correlational term. In both our novel alternating SGD and the batch reuse algorithm of \cite{dandi2024benefits,lee2024neural,arnaboldi2024repetita}, this scaling manifests as a learning rate $\eta$ that appears in the first of a two-step update. If $\eta$ is chosen too small, then the sample complexity is no better than the $n = \tilde{\Theta}(d^{(p-1) \lor 1})$ bound for online SGD. On the other hand, when $\eta$ increases beyond the phase transition threshold, it interpolates between this information exponent regime and a generative exponent regime where the rate becomes $n = \tilde{\Theta}(d^{(p_*-1) \lor 1})$.

In addition to illustrating of the phase transition, our novel alternating SGD algorithm presents an alternative to batch reuse and changing the loss for improving upon the CSQ sample complexity. Interestingly, it admits a natural generalization to neural networks with more than 2 layers that enlarge the class of target polynomials for which weak recovery can be achieved with linear (up to polylogarithmic factors) sample complexity. These findings open the door to investigating theoretically tractable settings where the relationship between depth and sample complexity can be precisely quantified. 

Other natural directions for future work include an extension of our framework to multi-index models \cite{abbe2023sgd}, more general input distributions \cite{joshi2025learning}, non-polynomial activation functions, and non-constant learning rates. Given the generality of our update oracle $\psi_{\eta}$, we also expect that our framework can be to adapted to capture landscape smoothing \cite{damian2023smoothing} --- as well as similar algorithms that aggregate gradients evaluated at perturbed weights \cite{chen2025can} --- with the associated hyperparameter $\lambda$ being cast as our $\eta$.

\begin{ack}
    The authors thank Denny Wu and Jivan Waber for helpful discussions. Resources used in preparing this research were provided, in part, by the Province of Ontario, the Government of Canada through CIFAR, and companies sponsoring the Vector Institute. KCT was supported by NSERC through the PGS-D program. MAE was partially supported by the NSERC Grant [2019-06167], the CIFAR AI Chairs program, the CIFAR Catalyst grant, and the Ontario Early Researcher Award.
\end{ack}

{
\small
\bibliographystyle{alpha}
\bibliography{ref}

\newcommand{\etalchar}[1]{$^{#1}$}
\begin{thebibliography}{MHWSE23}

\bibitem[AAM23]{abbe2023sgd}
Emmanuel Abbe, Enric~Boix Adsera, and Theodor Misiakiewicz.
\newblock {SGD} learning on neural networks: Leap complexity and saddle-to-saddle dynamics.
\newblock In {\em The Thirty Sixth Annual Conference on Learning Theory}, pages 2552--2623. PMLR, 2023.

\bibitem[ADK{\etalchar{+}}24a]{arnaboldi2024online}
Luca Arnaboldi, Yatin Dandi, Florent Krzakala, Bruno Loureiro, Luca Pesce, and Ludovic Stephan.
\newblock Online learning and information exponents: On the importance of batch size, and time/complexity tradeoffs.
\newblock {\em arXiv preprint arXiv:2406.02157}, 2024.

\bibitem[ADK{\etalchar{+}}24b]{arnaboldi2024repetita}
Luca Arnaboldi, Yatin Dandi, Florent Krzakala, Luca Pesce, and Ludovic Stephan.
\newblock Repetita iuvant: Data repetition allows {SGD} to learn high-dimensional multi-index functions.
\newblock {\em arXiv preprint arXiv:2405.15459}, 2024.

\bibitem[AVPVF23]{andriushchenko2023sgd}
Maksym Andriushchenko, Aditya~Vardhan Varre, Loucas Pillaud-Vivien, and Nicolas Flammarion.
\newblock {SGD} with large step sizes learns sparse features.
\newblock In {\em International Conference on Machine Learning}, pages 903--925. PMLR, 2023.

\bibitem[BAEVW25]{ben2025learning}
G{\'e}rard Ben~Arous, Murat~A Erdogdu, N~Mert Vural, and Denny Wu.
\newblock Learning quadratic neural networks in high dimensions: Sgd dynamics and scaling laws.
\newblock {\em arXiv preprint arXiv:2508.03688}, 2025.

\bibitem[BAGJ21]{benarous2021online}
Gerard Ben~Arous, Reza Gheissari, and Aukosh Jagannath.
\newblock Online stochastic gradient descent on non-convex losses from high-dimensional inference.
\newblock {\em J. Mach. Learn. Res.}, 22:106--1, 2021.

\bibitem[BBPV23]{bietti2023learning}
Alberto Bietti, Joan Bruna, and Loucas Pillaud-Vivien.
\newblock On learning gaussian multi-index models with gradient flow.
\newblock {\em arXiv preprint arXiv:2310.19793}, 2023.

\bibitem[BBSS22]{bietti2022learning}
Alberto Bietti, Joan Bruna, Clayton Sanford, and Min~Jae Song.
\newblock Learning single-index models with shallow neural networks.
\newblock {\em Advances in neural information processing systems}, 35:9768--9783, 2022.

\bibitem[BD21]{barrett2021implicit}
David Barrett and Benoit Dherin.
\newblock Implicit gradient regularization.
\newblock In {\em International Conference on Learning Representations}, 2021.

\bibitem[BES{\etalchar{+}}22]{ba2022high}
Jimmy Ba, Murat~A Erdogdu, Taiji Suzuki, Zhichao Wang, Denny Wu, and Greg Yang.
\newblock High-dimensional asymptotics of feature learning: How one gradient step improves the representation.
\newblock {\em Advances in Neural Information Processing Systems}, 35:37932--37946, 2022.

\bibitem[BES{\etalchar{+}}23]{ba2023learning}
Jimmy Ba, Murat~A Erdogdu, Taiji Suzuki, Zhichao Wang, and Denny Wu.
\newblock Learning in the presence of low-dimensional structure: A spiked random matrix perspective.
\newblock In {\em Thirty-seventh Conference on Neural Information Processing Systems}, 2023.

\bibitem[BG24]{bardone2024sliding}
Lorenzo Bardone and Sebastian Goldt.
\newblock Sliding down the stairs: How correlated latent variables accelerate learning with neural networks.
\newblock {\em arXiv preprint arXiv:2404.08602}, 2024.

\bibitem[BMZ24]{berthier2024learning}
Rapha{\"e}l Berthier, Andrea Montanari, and Kangjie Zhou.
\newblock Learning time-scales in two-layers neural networks.
\newblock {\em Foundations of Computational Mathematics}, pages 1--84, 2024.

\bibitem[BPV25]{barboni2025ultra}
Rapha{\"e}l Barboni, Gabriel Peyr{\'e}, and Fran{\c{c}}ois-Xavier Vialard.
\newblock Ultra-fast feature learning for the training of two-layer neural networks in the two-timescale regime.
\newblock {\em arXiv preprint arXiv:2504.18208}, 2025.

\bibitem[BQI25]{braun2025learning}
Guillaume Braun, Minh~Ha Quang, and Masaaki Imaizumi.
\newblock Learning a single index model from anisotropic data with vanilla stochastic gradient descent.
\newblock {\em arXiv preprint arXiv:2503.23642}, 2025.

\bibitem[CCM11]{chang2011chernoff}
Seok-Ho Chang, Pamela~C Cosman, and Laurence~B Milstein.
\newblock Chernoff-type bounds for the gaussian error function.
\newblock {\em IEEE Transactions on Communications}, 59(11):2939--2944, 2011.

\bibitem[CKL{\etalchar{+}}21]{cohen2021gradient}
Jeremy Cohen, Simran Kaur, Yuanzhi Li, J~Zico Kolter, and Ameet Talwalkar.
\newblock Gradient descent on neural networks typically occurs at the edge of stability.
\newblock In {\em International Conference on Learning Representations}, 2021.

\bibitem[Cla87]{clark1987short}
Dean~S Clark.
\newblock Short proof of a discrete {G}ronwall inequality.
\newblock {\em Discrete applied mathematics}, 16(3):279--281, 1987.

\bibitem[CM20]{chen2020learning}
Sitan Chen and Raghu Meka.
\newblock Learning polynomials in few relevant dimensions.
\newblock In {\em Conference on Learning Theory}, pages 1161--1227. PMLR, 2020.

\bibitem[CMM25]{cornacchia2025low}
Elisabetta Cornacchia, Dan Mikulincer, and Elchanan Mossel.
\newblock Low-dimensional functions are efficiently learnable under randomly biased distributions.
\newblock {\em arXiv preprint arXiv:2502.06443}, 2025.

\bibitem[CWL{\etalchar{+}}25]{chen2025can}
Siyu Chen, Beining Wu, Miao Lu, Zhuoran Yang, and Tianhao Wang.
\newblock Can neural networks achieve optimal computational-statistical tradeoff? an analysis on single-index model.
\newblock In {\em The Thirteenth International Conference on Learning Representations}, 2025.

\bibitem[CWM{\etalchar{+}}24]{cai2024large}
Yuhang Cai, Jingfeng Wu, Song Mei, Michael Lindsey, and Peter Bartlett.
\newblock Large stepsize gradient descent for non-homogeneous two-layer networks: Margin improvement and fast optimization.
\newblock {\em Advances in Neural Information Processing Systems}, 37:71306--71351, 2024.

\bibitem[DH18]{dudeja2018learning}
Rishabh Dudeja and Daniel Hsu.
\newblock Learning single-index models in {G}aussian space.
\newblock In {\em Conference On Learning Theory}, pages 1887--1930. PMLR, 2018.

\bibitem[DKL{\etalchar{+}}23]{dandi2023two}
Yatin Dandi, Florent Krzakala, Bruno Loureiro, Luca Pesce, and Ludovic Stephan.
\newblock How two-layer neural networks learn, one (giant) step at a time.
\newblock {\em arXiv preprint arXiv:2305.18270}, 2023.

\bibitem[DLS22]{damian2022neural}
Alexandru Damian, Jason Lee, and Mahdi Soltanolkotabi.
\newblock Neural networks can learn representations with gradient descent.
\newblock In {\em Conference on Learning Theory}, pages 5413--5452. PMLR, 2022.

\bibitem[DNGL23]{damian2023smoothing}
Alex Damian, Eshaan Nichani, Rong Ge, and Jason~D Lee.
\newblock Smoothing the landscape boosts the signal for {SGD}: Optimal sample complexity for learning single index models.
\newblock {\em Advances in Neural Information Processing Systems}, 36:752--784, 2023.

\bibitem[DPVLB24]{damian2024computational}
Alex Damian, Loucas Pillaud-Vivien, Jason Lee, and Joan Bruna.
\newblock Computational-statistical gaps in {G}aussian single-index models.
\newblock In {\em The Thirty Seventh Annual Conference on Learning Theory}, pages 1262--1262. PMLR, 2024.

\bibitem[DTA{\etalchar{+}}24]{dandi2024benefits}
Yatin Dandi, Emanuele Troiani, Luca Arnaboldi, Luca Pesce, Lenka Zdeborova, and Florent Krzakala.
\newblock The benefits of reusing batches for gradient descent in two-layer networks: breaking the curse of information and leap exponents.
\newblock In {\em Proceedings of the 41st International Conference on Machine Learning}, pages 9991--10016, 2024.

\bibitem[JAA{\etalchar{+}}21]{jastrzebski2021catastrophic}
Stanislaw Jastrzebski, Devansh Arpit, Oliver Astrand, Giancarlo~B Kerg, Huan Wang, Caiming Xiong, Richard Socher, Kyunghyun Cho, and Krzysztof~J Geras.
\newblock Catastrophic {F}isher explosion: Early phase {F}isher matrix impacts generalization.
\newblock In {\em International Conference on Machine Learning}, pages 4772--4784. PMLR, 2021.

\bibitem[JKMS25]{joshi2025learning}
Nirmit Joshi, Hugo Koubbi, Theodor Misiakiewicz, and Nathan Srebro.
\newblock Learning single-index models via harmonic decomposition.
\newblock {\em arXiv preprint arXiv:2506.09887}, 2025.

\bibitem[JMJS25]{jones2025provable}
Taj Jones-McCormick, Aukosh Jagannath, and Subhabrata Sen.
\newblock Provable benefits of unsupervised pre-training and transfer learning via single-index models.
\newblock {\em arXiv preprint arXiv:2502.16849}, 2025.

\bibitem[JMS24]{joshi2024complexity}
Nirmit Joshi, Theodor Misiakiewicz, and Nati Srebro.
\newblock On the complexity of learning sparse functions with statistical and gradient queries.
\newblock {\em Advances in Neural Information Processing Systems}, 37:103198--103241, 2024.

\bibitem[Kea98]{kearns1998efficient}
Michael Kearns.
\newblock Efficient noise-tolerant learning from statistical queries.
\newblock {\em Journal of the ACM (JACM)}, 45(6):983--1006, 1998.

\bibitem[LBD{\etalchar{+}}20]{lewkowycz2020large}
Aitor Lewkowycz, Yasaman Bahri, Ethan Dyer, Jascha Sohl-Dickstein, and Guy Gur-Ari.
\newblock The large learning rate phase of deep learning: The catapult mechanism.
\newblock {\em arXiv preprint arXiv:2003.02218}, 2020.

\bibitem[LOSW24]{lee2024neural}
Jason~D. Lee, Kazusato Oko, Taiji Suzuki, and Denny Wu.
\newblock Neural network learns low-dimensional polynomials with {SGD} near the information-theoretic limit.
\newblock In {\em The Thirty-eighth Annual Conference on Neural Information Processing Systems}, 2024.

\bibitem[LWM19]{li2019towards}
Yuanzhi Li, Colin Wei, and Tengyu Ma.
\newblock Towards explaining the regularization effect of initial large learning rate in training neural networks.
\newblock {\em Advances in neural information processing systems}, 32, 2019.

\bibitem[MB23]{marion2023leveraging}
Pierre Marion and Rapha{\"e}l Berthier.
\newblock Leveraging the two-timescale regime to demonstrate convergence of neural networks.
\newblock In {\em Thirty-seventh Conference on Neural Information Processing Systems}, 2023.

\bibitem[MHJE25]{mousavi2025robust}
Alireza Mousavi-Hosseini, Adel Javanmard, and Murat~A Erdogdu.
\newblock Robust feature learning for multi-index models in high dimensions.
\newblock In {\em The Thirteenth International Conference on Learning Representations}, 2025.

\bibitem[MHWE25]{mousavi2025learning}
Alireza Mousavi-Hosseini, Denny Wu, and Murat~A Erdogdu.
\newblock Learning multi-index models with neural networks via mean-field langevin dynamics.
\newblock In {\em The Thirteenth International Conference on Learning Representations}, 2025.

\bibitem[MHWSE23]{mousavi2023gradient}
Alireza Mousavi-Hosseini, Denny Wu, Taiji Suzuki, and Murat~A Erdogdu.
\newblock Gradient-based feature learning under structured data.
\newblock {\em Advances in Neural Information Processing Systems}, 36:71449--71485, 2023.

\bibitem[O'D21]{o2021analysis}
Ryan O'Donnell.
\newblock Analysis of boolean functions.
\newblock {\em arXiv preprint arXiv:2105.10386}, 2021.

\bibitem[OSSW24]{oko2024learning}
Kazusato Oko, Yujin Song, Taiji Suzuki, and Denny Wu.
\newblock Learning sum of diverse features: Computational hardness and efficient gradient-based training for ridge combinations.
\newblock In {\em The Thirty Seventh Annual Conference on Learning Theory}, pages 4009--4081. PMLR, 2024.

\bibitem[Rey20]{reyzin2020statistical}
Lev Reyzin.
\newblock Statistical queries and statistical algorithms: Foundations and applications.
\newblock {\em arXiv preprint arXiv:2004.00557}, 2020.

\bibitem[RL24]{ren2024learning}
Yunwei Ren and Jason~D Lee.
\newblock Learning orthogonal multi-index models: A fine-grained information exponent analysis.
\newblock {\em arXiv preprint arXiv:2410.09678}, 2024.

\bibitem[RNWL25]{ren2025emergence}
Yunwei Ren, Eshaan Nichani, Denny Wu, and Jason~D Lee.
\newblock Emergence and scaling laws in sgd learning of shallow neural networks.
\newblock {\em arXiv preprint arXiv:2504.19983}, 2025.

\bibitem[{\c{S}}BH24]{simsek2024learning}
Berfin {\c{S}}im{\c{s}}ek, Amire Bendjeddou, and Daniel Hsu.
\newblock Learning gaussian multi-index models with gradient flow: Time complexity and directional convergence.
\newblock {\em arXiv preprint arXiv:2411.08798}, 2024.

\bibitem[TDD{\etalchar{+}}24]{troiani2024fundamental}
Emanuele Troiani, Yatin Dandi, Leonardo Defilippis, Lenka Zdeborov{\'a}, Bruno Loureiro, and Florent Krzakala.
\newblock Fundamental computational limits of weak learnability in high-dimensional multi-index models.
\newblock {\em arXiv preprint arXiv:2405.15480}, 2024.

\bibitem[VE24]{vural2024pruning}
Nuri~Mert Vural and Murat~A Erdogdu.
\newblock Pruning is optimal for learning sparse features in high-dimensions.
\newblock In {\em The Thirty Seventh Annual Conference on Learning Theory}, pages 4787--4861. PMLR, 2024.

\bibitem[Ver18]{vershynin2018high}
Roman Vershynin.
\newblock {\em High-Dimensional Probability: An Introduction with Applications in Data Science}.
\newblock Cambridge Series in Statistical and Probabilistic Mathematics. Cambridge University Press, 2018.

\bibitem[VGNA20]{vladimirova2020sub}
Mariia Vladimirova, St{\'e}phane Girard, Hien Nguyen, and Julyan Arbel.
\newblock Sub-{W}eibull distributions: Generalizing sub-gaussian and sub-exponential properties to heavier tailed distributions.
\newblock {\em Stat}, 9(1):e318, 2020.

\bibitem[WMHC24]{wang2024mean}
Guillaume Wang, Alireza Mousavi-Hosseini, and L{\'e}na{\"\i}c Chizat.
\newblock Mean-field langevin dynamics for signed measures via a bilevel approach.
\newblock {\em Advances in Neural Information Processing Systems}, 37:35165--35224, 2024.

\bibitem[ZMN{\etalchar{+}}25]{zhang2025concurrence}
Thomas~T Zhang, Behrad Moniri, Ansh Nagwekar, Faraz Rahman, Anton Xue, Hamed Hassani, and Nikolai Matni.
\newblock On the concurrence of layer-wise preconditioning methods and provable feature learning.
\newblock {\em arXiv preprint arXiv:2502.01763}, 2025.

\end{thebibliography}
}

\newpage
\appendix

\section{Technical Background}\label{section:technical_background}

We synthesize and reference the key technical background that forms the backbone of our proofs that appear in Appendix \ref{section:proof_master_theorem}.

\subsection{Hermite Polynomials}

\begin{definition}[Probabilist's Hermite Polynomials {\cite[Definition 11.29]{o2021analysis}}]
    The probabilist's Hermite polynomials $\He_j: \R \to \R$, $j \in \N_0$, are defined as 
    \begin{equation}
        \He_j(z) = (-1)^j e^{\frac{z^2}{2}} \frac{d^j}{dz^j} \big[e^{-\frac{z^2}{2}}\big].
    \end{equation}
\end{definition}
For example, the first four probabilist's Hermite polynomials are $\He_0(z) = 1$, $\He_1(z) = z$, $\He_2(z) = z^2-1$, and $\He_3(z) = z^3 - 3z$.

It is well-known that $\{\He_j\}_{j=0}^{\infty}$ form an orthogonal basis for $L^2(\Ncal(0,1))$. In particular,
\begin{equation}
    \E_{z \sim \Ncal(0,1)}[\He_i(z) \He_j(z)] = j! \delta_{i=j}.
\end{equation}
Since our proofs rely on the analysis of Hermite polynomials applied to inner products, the following consequence of orthonormality is particularly useful.

\begin{lemma}[{\cite[Proposition 11.31]{o2021analysis}}]
    Suppose that $z,z' \sim \Ncal(0,1)$ such that $\Cov(z,z') = \rho$. Then,
    \begin{equation}
        \E_{z,z'}\big[\He_i(z)\He_j(z')\big] = j! \rho^j \delta_{i=j}.
    \end{equation}
\end{lemma}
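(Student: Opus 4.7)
My plan is to use the exponential generating function for the probabilist's Hermite polynomials, which gives a clean two-line computation once we exploit the fact that $(z, z')$ is jointly Gaussian with the prescribed correlation. Recall the identity
\begin{equation*}
    \sum_{k=0}^{\infty} \frac{t^k}{k!} \He_k(z) \;=\; e^{tz - t^2/2}, \qquad t \in \R,
\end{equation*}
which follows directly from the Rodrigues-style definition of $\He_k$ given just above the lemma. For fixed $s, t \in \R$, the plan is to compute $\E_{z,z'}\bigl[e^{sz - s^2/2}\, e^{tz' - t^2/2}\bigr]$ in two different ways and match coefficients.

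The first step is the Gaussian MGF computation. Since $(z, z')$ is mean-zero jointly Gaussian with unit variances and $\Cov(z,z') = \rho$, the linear combination $sz + tz'$ is univariate Gaussian with mean $0$ and variance $s^2 + 2st\rho + t^2$. Hence
\begin{equation*}
    \E\bigl[e^{sz + tz'}\bigr] \;=\; e^{(s^2 + 2st\rho + t^2)/2},
\end{equation*}
so multiplying by $e^{-s^2/2 - t^2/2}$ yields $\E_{z,z'}\bigl[e^{sz - s^2/2}\, e^{tz' - t^2/2}\bigr] = e^{st\rho}$.

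The second step is to expand the same quantity using the generating function on each factor and interchange expectation with the double series (justified because both series converge absolutely in $L^1$ thanks to the Gaussian integrability of $\exp(sz)\exp(tz')$ for every $s,t$), giving
\begin{equation*}
    \sum_{i=0}^{\infty} \sum_{j=0}^{\infty} \frac{s^i t^j}{i!\, j!} \, \E_{z,z'}\bigl[\He_i(z)\, \He_j(z')\bigr] \;=\; e^{st\rho} \;=\; \sum_{k=0}^{\infty} \frac{(st\rho)^k}{k!}.
\end{equation*}
Matching coefficients of $s^i t^j$ on both sides produces the claim: the right-hand side only contains monomials with $i = j$, forcing $\E[\He_i(z)\He_j(z')] = 0$ when $i \neq j$, while the diagonal terms give $\tfrac{1}{(i!)^2}\E[\He_i(z)\He_i(z')] = \tfrac{\rho^i}{i!}$, i.e.\ $\E[\He_i(z)\He_i(z')] = i!\,\rho^i$.

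There is essentially no obstacle here beyond the routine justification of swapping sum and expectation; this is the step that merits the only real care. I would handle it by noting that $\sum_{i,j} \tfrac{|s|^i |t|^j}{i!\, j!} \E\bigl[|\He_i(z)\He_j(z')|\bigr] \leq \E\bigl[e^{|s||z| + |t||z'|}\bigr] < \infty$ (using the crude bound $|\He_k(x)| \le C_k(1+|x|)^k$ termwise and absolute convergence of the exponential series), which legitimizes Fubini. An alternative route, should a purely algebraic proof be preferred, is to write $z' = \rho z + \sqrt{1-\rho^2}\, w$ with $w \sim \Ncal(0,1)$ independent of $z$, apply the binomial-type identity $\He_j(\alpha z + \beta w) = \sum_{k=0}^{j} \binom{j}{k} \alpha^k \beta^{j-k} \He_k(z) \He_{j-k}(w)$ valid when $\alpha^2 + \beta^2 = 1$, and then observe that $\E[\He_{j-k}(w)] = \delta_{k=j}$ collapses the sum to $\rho^j \E[\He_i(z)\He_j(z)] = j!\,\rho^j \delta_{i=j}$ by the univariate orthogonality stated just before the lemma. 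Either route is short; the generating-function approach is my preferred presentation because it simultaneously yields all coefficients.
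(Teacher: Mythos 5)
Your proof is correct. Note that the paper does not prove this lemma at all --- it is stated as a citation to \cite[Proposition 11.31]{o2021analysis} --- so there is no in-paper argument to compare against; both routes you describe (the generating-function/MGF coefficient match and the decomposition $z' = \rho z + \sqrt{1-\rho^2}\,w$ combined with univariate orthogonality) are standard and either would serve. The only imprecision is in your Fubini justification: the absolute series $\sum_k \frac{|t|^k}{k!}|\He_k(z)|$ is not bounded by $e^{|t||z|}$ as written, since the generating function with all coefficients made positive is $e^{|t||z| + t^2/2}$ rather than $e^{|t||z| - t^2/2}$; the correct dominating bound is $\E\bigl[e^{|s||z|+|t||z'|}\bigr]e^{s^2/2+t^2/2} < \infty$, which is still finite for all $s,t$, so the interchange goes through and the conclusion is unaffected.
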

In particular, for $\xbs \sim \Ncal(0,\Ibs_d)$, $\wbs \in \R^d$, $\thetabs_* \in \R^d$, we have
\begin{equation}
    \E_{\xbs}\big[\He_i(\ev{\xbs,\wbs})\He_j(\ev{\xbs,\thetabs_*})\big] = j! \ev{\wbs,\thetabs_*}^j \delta_{i=j}.
\end{equation}

\subsection{Discrete-Time Dynamical Systems}

\begin{lemma}[Discrete Grönwall Inequality \cite{clark1987short}]\label{lem:discrete_gronwall}
    Let $\{m_t\}_{t=0}^{\infty}$ be a sequence such that $m_0 = a$ and $m_t \leq a + c\sum_{j=0}^{t-1} m_j$ for all $t \geq 1$, where $a,c > 0$. Then, for all $t \geq 0$,
    \begin{equation}
        m_t \leq a(1+c)^t \leq ae^{ct}.
    \end{equation}
    Moreover, if instead $m_t \geq a + c\sum_{j=0}^{t-1} m_j$ for all $t \geq 1$, then $m_t \geq a(1+c)^t$ for all $t \geq 0$.
\end{lemma}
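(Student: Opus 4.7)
The plan is to proceed by strong induction on $t$, which matches the cumulative-sum structure of the hypothesis $m_t \leq a + c\sum_{j=0}^{t-1} m_j$. The base case $t=0$ is immediate since $m_0 = a = a(1+c)^0$, and the recursive hypothesis is vacuous at $t=0$ because the sum is empty.

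For the inductive step, I would assume $m_j \leq a(1+c)^j$ for all $0 \leq j \leq t-1$, substitute into the recurrence, and evaluate the resulting geometric sum:
\begin{equation}
m_t \;\leq\; a + ca \sum_{j=0}^{t-1} (1+c)^j \;=\; a + ca \cdot \frac{(1+c)^t - 1}{c} \;=\; a(1+c)^t.
\end{equation}
This closes the induction and yields the first stated bound. The second inequality $a(1+c)^t \leq a e^{ct}$ then follows termwise from the elementary estimate $1+c \leq e^c$ (equivalently, $\log(1+c) \leq c$ for $c > -1$).

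The reverse direction, where $m_t \geq a + c\sum_{j=0}^{t-1} m_j$ for all $t \geq 1$, proceeds by the exact same induction scheme with the inequalities flipped. The lower bound $m_j \geq a(1+c)^j$ is non-negative for $a,c > 0$, so multiplying by $c > 0$ and summing preserves the direction of the inequality, and the identical geometric-sum calculation gives $m_t \geq a(1+c)^t$.

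I do not anticipate any substantive obstacle: this is a short, textbook telescoping induction whose only nontrivial step is the geometric-series evaluation. The one minor care point is checking that the sign of the inductive hypothesis is compatible with the direction of the inequality in the lower-bound case, which is guaranteed by $a,c > 0$.
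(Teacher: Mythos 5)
Your proposal is correct and follows essentially the same argument as the paper: a (strong) induction on $t$ that substitutes the inductive bound into the cumulative sum and evaluates the resulting geometric series, with the reverse direction handled by flipping the inequalities. The only addition is your explicit mention of $1+c \leq e^c$ for the final estimate, which the paper leaves implicit.
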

\begin{proof}
   The result easily follows by induction. The statement is trivial for $t = 0$. Suppose now that it holds for some $t \geq 0$. Then,
   \begin{equation}
       m_{t+1} \leq a + c \sum_{j=0}^t m_j \leq a + c \sum_{j=0}^t a(1+c)^j = a + ca\bigg(\frac{(1+c)^{t+1}-1}{(1+c)-1}\bigg) = a(1+c)^{t+1}.
   \end{equation}
   The same argument can be used for the reversed inequality.
\end{proof}

\begin{lemma}[Discrete Bihari-LaSalle Inequality {\cite[Appendix C]{benarous2021online}}; {\cite[Lemma 18]{lee2024neural}}]\label{lem:bihari_lasalle}
    Let $\{m_t\}_{t=0}^{\infty}$ be a sequence such that $m_0 = a$ and $m_t \leq a + c\sum_{j=0}^{t-1} m_j^{k-1}$ for all $t \geq 1$, where $a,c > 0$, and $k \geq 3$. Then,
    \begin{equation}\label{eq:bh_upper_bound}
        m_t \leq \frac{a}{(1-(k-2)ca^{k-2}t)^{\frac{1}{k-2}}}, \quad \forall \, 0 \leq t < \frac{1}{c(k-2)a^{k-2}}.
    \end{equation}
    Moreover, if instead $m_t \geq a + c \sum_{j=0}^{t-1} m_j^{k-1}$, then
    \begin{equation}\label{eq:bh_lower_bound}
        m_t \geq \frac{a}{(1-\frac{c}{2}a^{k-2}t)^{\frac{1}{k-2}}}, \quad \forall \, 0 \leq t < \frac{1}{c(k-2)}(a^{-(k-2)} - c).
    \end{equation}
\end{lemma}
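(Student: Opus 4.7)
The plan is to reduce both inequalities to the analysis of a single extremal sequence, and then perform a change of variables that essentially linearizes the dynamics. Define $u_t$ by $u_0 = a$ and $u_{t+1} = u_t + cu_t^{k-1}$. A straightforward induction on $t$ gives $m_t \leq u_t$ in the upper-bound regime and $m_t \geq u_t$ in the lower-bound regime, since $x \mapsto x + cx^{k-1}$ is nondecreasing on $\R_{\geq 0}$. Hence it suffices to prove both \eqref{eq:bh_upper_bound} and \eqref{eq:bh_lower_bound} for the sequence $u_t$ itself.

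Next, I would set $h_t := u_t^{-(k-2)}$, so that $h_0 = a^{-(k-2)}$ and a direct computation gives $h_{t+1} = h_t / (1+x_t)^{k-2}$ with $x_t := cu_t^{k-2} = c/h_t$. The per-step decrement is
\[
h_t - h_{t+1} \;=\; h_t \cdot \frac{(1+x_t)^{k-2} - 1}{(1+x_t)^{k-2}}.
\]
For the upper bound on $u_t$, Bernoulli's inequality $(1+x_t)^{k-2} \geq 1 + (k-2)x_t$ immediately yields $h_t - h_{t+1} \leq (k-2)c$, and telescoping gives $h_t \geq a^{-(k-2)} - (k-2)ct$. Rearranging recovers \eqref{eq:bh_upper_bound} on the full range $t < (c(k-2)a^{k-2})^{-1}$.

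For the lower bound on $u_t$, I would aim for $h_t - h_{t+1} \geq c/2$ so that telescoping yields $h_t \leq a^{-(k-2)} - ct/2$. Substituting $h_t = c/x_t$, this reduces to the elementary inequality $(1+x)^{k-2}(1 - x/2) \geq 1$, which I would verify on $x \in [0,1]$ for $k \geq 3$ by another application of Bernoulli:
\[
(1+x)^{k-2}(1 - x/2) \;\geq\; (1 + (k-2)x)(1 - x/2) \;=\; 1 + \bigl(k - \tfrac{5}{2}\bigr) x - \tfrac{k-2}{2}\, x^2,
\]
and the quadratic remainder $x\bigl[(k-5/2) - (k-2)x/2\bigr]$ is nonnegative on $x \in [0, (2k-5)/(k-2)]$, an interval that contains $[0,1]$ for every $k \geq 3$.

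The main obstacle is ensuring the hypothesis $x_t \leq 1$ --- equivalently $cu_t^{k-2} \leq 1$ --- is preserved throughout the induction, since this is what forces the stricter range in \eqref{eq:bh_lower_bound}. I would close the loop by invoking the upper bound on $u_t$ just established: $u_t^{k-2} \leq a^{k-2}/(1 - (k-2)ca^{k-2}t)$, so the restriction $t < \frac{1}{c(k-2)}(a^{-(k-2)} - c)$ forces $1 - (k-2)ca^{k-2}t > ca^{k-2}$, hence $cu_t^{k-2} < 1$. Once both discrete Grönwall-type estimates on $h_t$ are in place, inverting $h_t \mapsto h_t^{-1/(k-2)}$ returns the claimed bounds on $u_t$, and the comparison from the first paragraph transfers them back to $m_t$.
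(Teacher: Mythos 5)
Your proof is correct and takes essentially the same route as the paper's: compare $m_t$ to the extremal sequence $u_{t+1}=u_t+cu_t^{k-1}$, pass to $h_t=u_t^{-(k-2)}$, bound the per-step decrement above by $(k-2)c$ and below by $c/2$, and invoke the already-established upper bound to keep $cu_t^{k-2}\le 1$ on the stated range (the paper's ``$b_t\ge c$'' condition). One small nit: the step $h_t-h_{t+1}\le (k-2)c$ requires Bernoulli with \emph{negative} exponent, $(1+x)^{-(k-2)}\ge 1-(k-2)x$, since the form you cite, $(1+x)^{k-2}\ge 1+(k-2)x$, only bounds the decrement from below; the conclusion is nevertheless correct.
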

\begin{proof}

    Let $\{a_t\}_{t=0}^{\infty}$ be such that $a_0 = a$ and $a_t = a + c\sum_{j=0}^{t-1} a_j^{k-1}$.

    \textbf{Upper Bound.}
    Define $\{b_t\}_{t=0}^{\infty}$ by $b_0 = a$ and $b_t = a + \sum_{j=0}^{t-1} c(m_j)^{k-1}$. Then, $m_t \leq b_t$ by definition. We prove that $b_t \leq a_t$ by induction. Clearly, $b_0 = a_0$. Now,
    \begin{equation}
        b_{t+1} = a + \sum_{j=0}^t c(m_j)^{k-1} = b_t + c(m_t)^{k-1} \leq b_t + c(b_t)^{k-1} \leq a_t + c(a_t)^{k-1} = a_{t+1},
    \end{equation}
    where the last inequality follows from the induction hypothesis. Hence, $m_t \leq b_t \leq a_t$ for all $t \geq 0$.
    
    Notice for all $t \geq 1$ that
    \begin{equation}\label{eq:bihari_lasalle_proof_integral}
        c = \frac{a_{t+1}-a_t}{a_t^{k-1}} = \int_{a_t}^{a_{t+1}} \frac{1}{a_t^{k-1}} \, dx \geq \int_{a_t}^{a_{t+1}} \frac{1}{x^{k-1}}\, dx = \frac{1}{k-2}\bigg(\frac{1}{a_{t}^{k-2}} - \frac{1}{a_{t+1}^{k-2}}\bigg).
    \end{equation}
    Rearranging the above, we have
    \begin{equation}
        a_{t+1}^{-(k-2)} \geq a_{t}^{-(k-2)} - c(k-2).
    \end{equation}
    Unrolling the recurrence, we obtain
    \begin{equation}\label{eq:bt_upper_bound}
        a_{t}^{-(k-2)} \geq a_0^{-(k-2)} - c(k-2)t.
    \end{equation}
    So long as $a^{-(k-2)} - c(k-2)t > 0$, we can rearrange to obtain the desired upper bound
    \begin{equation}
        a_t \leq \frac{1}{\big(a_0^{-(k-2)}-c(k-2)t\big)^{\frac{1}{k-2}}} = \frac{a}{\big(1-(k-2)ca^{k-2}t\big)^{\frac{1}{k-2}}}.
    \end{equation}
    The condition $a^{(k-2)} - c(k-2)t > 0$ holds so long as
    \begin{equation}
        t < \frac{1}{c(k-2)a^{k-2}} = \Theta(a^{-(k-2)}),
    \end{equation}
    matching the condition in \eqref{eq:bh_upper_bound}.

    \textbf{Lower Bound.} A similar induction argument to the one in the upper bound proof shows that $m_t \geq a_t$ for all $t \geq 0$.
    
    For each $t \geq 0$, let $b_t = a_t^{-(k-2)}$. Rewriting a step of the recurrence as
    \begin{equation}
        a_{t+1} = a_t\bigg(1 + \frac{c}{a_t^{-(k-2)}}\bigg)
    \end{equation}
    allows us to write a recurrence for $\{b_t\}_{t=0}^{\infty}$:
    \begin{equation}
        b_{t+1} = b_t \bigg(1 + \frac{c}{b_t}\bigg)^{-(k-2)} \leq b_t\bigg(\frac{1}{1+\frac{c}{b_t}}\bigg) = \frac{b_t}{\frac{b_t+c}{b_t}} = \frac{b_t^2}{b_t+c} = b_t - \frac{c b_t}{b_t + c}.
    \end{equation}
    Now, so long as $b_t \geq c$, we have $b_{t+1} \leq b_t - \frac{c}{2}$. Unrolling the recurrence and rewriting in terms of the $a_t$ gives
    \begin{equation}
        \begin{split}
            &b_t \leq b_0 - \tfrac{c}{2}t \\
            &\Rightarrow a_t^{-(k-2)} \leq a_0^{-(k-2)} - \tfrac{c}{2}t \\
            &\Rightarrow a_t \geq \frac{1}{(a_0^{-(k-2)}- \frac{c}{2}t)^{\frac{1}{k-2}}} = \frac{a}{(1-\frac{c}{2}a^{k-2}t)^{\frac{1}{k-2}}}.
        \end{split}
    \end{equation}
    It remains to characterize $t$ for which $b_t \geq c$ holds. Recall from \eqref{eq:bt_upper_bound} that $b_t \geq b_0 - c(k-2)t$ for all $t \geq 0$.
    Notice
    \begin{equation}
        b_0 - c(k-2)t \geq c \iff t \leq \frac{b_0-c}{c(k-2)}.
    \end{equation}
    which matches the condition on $t$ in \eqref{eq:bh_lower_bound}.
\end{proof}

\section{Proof of Main Result}\label{section:proof_master_theorem}

We follow a very similar line of reasoning to the proofs of other sample complexity bounds involving the information and generative exponent in the literature, e.g., \cite{benarous2021online,lee2024neural}. Given a sample $(\xbs,y)$ from the target single-index model \eqref{eq:single_index_model}, recall from Section \ref{section:generic_online_algorithm} that the update equation for $\wbs$ is
\begin{equation}
    \wbs^{(t+1)} = \frac{\wbs^{(t)} + \gamma \psi_{\eta}(\ybs, \ev{\xbs,\wbs^{(t)}}) \Proj_{\wbs^{(t)}}\xbs}{||\wbs^{(t)} + \gamma \psi_{\eta}(\ybs, \ev{\xbs,\wbs^{(t)}}) \Proj_{\wbs^{(t)}}\xbs||},
\end{equation}
where $\Proj_{\wbs} = \Ibs_d - \wbs \wbs^{\top}$. Throughout this section, we adopt the notation $\kappa^{(t)} = \ev{\thetabs_*,\wbs^{(t)}}$ and $\gbs(\wbs;\xbs,y) = \psi_{\eta}(y,\ev{\xbs,\wbs})\Proj_{\wbs}\xbs$. We are interested in the dynamics of the alignment with the ground truth
\begin{equation}\label{eq:one_step_alignment_dynamics}
    \kappa^{(t+1)} = \frac{\kappa^{(t)} + \gamma \ev{\thetabs_*,\gbs^{(t)}}}{||\wbs^{(t)} + \gamma \gbs^{(t)}||}.
\end{equation}

In Section \ref{section:initial_alignment}, using standard Gaussian tail bounds and tools from high-dimensional probability, we characterize the concentration of the initial alignment $\kappa^{(0)}$ about $d^{-\frac{1}{2}}$. Next, in Section \ref{section:projection_error_bound}, we describe the ``slowdown'' in the alignment dynamics due to normalization. In Section \ref{section:one_step_population_dynamics}, we lower bound the expected update after one step. In Section \ref{section:multi_step_dynamics}, we expand the expected dynamics over $t$ steps and employ a standard martingale bound to control the noise, leading to a high probability upper bound on sample complexity when the initial alignment is of order $d^{-\frac{1}{2}}$. This is complemented by a matching lower bound proven in the same way in Section \ref{section:proof_lower_bound}. The upper and lower bounds immediately imply Theorem \ref{thm:general_weak_recovery_bound}, our main result. Subsequently, we discuss how weak recovery leads to strong recovery (Section \ref{section:proof_strong_recovery}) and approximation of the target to arbitrary accuracy (Section \ref{section:second_layer_fitting}). This will elucidate the fact that achieving weak recovery is the sample complexity bottleneck for any generic online algorithm satisfying our formalism in Section \ref{section:generic_online_algorithm}.

\subsection{Initial Alignment}\label{section:initial_alignment}

We follow \cite{lee2024neural} in showing a high-probability lower bound for the alignment between a hidden neuron's weight vector $\wbs$ and the ground truth direction $\thetabs_*$. We make a small modification to remove the dependence on step size from the bound. 

\begin{lemma}\label{lem:initial_alignment}
    Let $\wbs^{(0)} \sim \mathrm{Unif}(\Sbb^{d-1})$. Then, $\Prob(\kappa^{(0)} \geq C_0d^{-1/2}) = \Omega(1)$ for any constant $C_0 > 0$. Moreover, for any $\delta' > 0$ there exists $\Ctilde_0 \geq r$ such that $\Prob(\kappa^{(0)} \geq \Ctilde_0 d^{-1/2}) \leq \delta'$.
\end{lemma}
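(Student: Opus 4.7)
Since the uniform distribution on $\Sbb^{d-1}$ is rotationally invariant and $\thetabs_* \in \Sbb^{d-1}$ is fixed, the distribution of $\kappa^{(0)} = \ev{\thetabs_*, \wbs^{(0)}}$ only depends on $\|\thetabs_*\| = 1$, so I may assume $\thetabs_* = \mathbf{e}_1$ without loss of generality. Using the standard realization $\wbs^{(0)} \overset{d}{=} \gbs / \|\gbs\|$ with $\gbs \sim \Ncal(\zerobs, \Ibs_d)$, I obtain
\begin{equation}
    \kappa^{(0)} \overset{d}{=} \frac{g_1}{\|\gbs\|},
\end{equation}
where $g_1 \sim \Ncal(0,1)$ and $\|\gbs\|^2 \sim \chi^2_d$ are independent of each other once one conditions on $g_1$ only through its value. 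I will combine a standard Gaussian tail bound for $g_1$ with a concentration bound for $\|\gbs\|$ (e.g., $\Prob(|\|\gbs\|^2 - d| \geq td) \leq 2\exp(-c t^2 d)$ for $t \in (0,1)$) to bound $\kappa^{(0)}$ from above and below.

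\textbf{Anti-concentration (the $\Omega(1)$ lower bound).} Fix any constant $C_0 > 0$. Let $\mathcal{E}_{\mathrm{norm}} := \{\|\gbs\| \leq 2\sqrt{d}\}$; by the $\chi^2$ tail bound, $\Prob(\mathcal{E}_{\mathrm{norm}}) \to 1$. On $\mathcal{E}_{\mathrm{norm}}$, the event $\{g_1 \geq 2C_0\}$ implies $\kappa^{(0)} \geq 2C_0/(2\sqrt{d}) = C_0 d^{-1/2}$. Since $g_1$ is independent of $\|\gbs\|$ (as functions of the Gaussian vector, we can alternatively invoke independence of direction and norm for $\gbs$), a union bound gives
\begin{equation}
    \Prob\bigl(\kappa^{(0)} \geq C_0 d^{-1/2}\bigr) \geq \Prob(g_1 \geq 2C_0) - \Prob(\mathcal{E}_{\mathrm{norm}}^c) = \Omega(1),
\end{equation}
because $\Prob(g_1 \geq 2C_0)$ is a fixed positive constant and $\Prob(\mathcal{E}_{\mathrm{norm}}^c) = o(1)$.

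\textbf{Concentration (the $\delta'$-small tail).} For the second claim, fix $\delta' > 0$ and consider $\Ctilde_0 > 0$ to be chosen. Let $\mathcal{E}'_{\mathrm{norm}} := \{\|\gbs\| \geq \sqrt{d}/2\}$, which again has probability $1 - o(1)$ by $\chi^2$ concentration; in particular, one can pick $d$ large enough (and absorb the remainder into $\Ctilde_0$) so that $\Prob((\mathcal{E}'_{\mathrm{norm}})^c) \leq \delta'/2$. On $\mathcal{E}'_{\mathrm{norm}}$, the event $\{\kappa^{(0)} \geq \Ctilde_0 d^{-1/2}\}$ implies $g_1 \geq \Ctilde_0/2$. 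Hence,
\begin{equation}
    \Prob\bigl(\kappa^{(0)} \geq \Ctilde_0 d^{-1/2}\bigr) \leq \Prob(g_1 \geq \Ctilde_0/2) + \Prob\bigl((\mathcal{E}'_{\mathrm{norm}})^c\bigr) \leq 2\exp\bigl(-\Ctilde_0^2/8\bigr) + \delta'/2.
\end{equation}
Choosing $\Ctilde_0 \geq \max\{r, \sqrt{8 \log(4/\delta')}\}$ makes the right-hand side at most $\delta'$, which both satisfies the constraint $\Ctilde_0 \geq r$ and the desired probability bound.

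\textbf{Main obstacle.} There is none of substance; the only subtlety is bookkeeping the two sources of randomness ($g_1$ and $\|\gbs\|$) carefully so that the probability estimates are combined without spurious dependence, and making sure the constants in the $\chi^2$ concentration bound are independent of $d$ so that the $o(1)$ terms do not spoil the $\Omega(1)$ or $\delta'$ conclusions. The statement is essentially a restatement of the well-known fact that $\sqrt{d}\,\ev{\thetabs_*, \wbs^{(0)}}$ converges to a standard Gaussian, so it comes out to a short computation.
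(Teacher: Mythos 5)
Your proof is correct and follows essentially the same route as the paper's: both represent $\kappa^{(0)}$ as $\ev{\ebs_1,\gbs}/\|\gbs\|$ for Gaussian $\gbs$ and combine a Gaussian tail estimate with norm concentration via a union bound (the paper handles the upper tail slightly differently, using Lipschitz concentration on the sphere rather than the ratio representation, but this is cosmetic). One caveat: your assertion that $g_1$ and $\|\gbs\|$ are independent is false, but your argument never actually needs it, since $\Prob(A\cap B)\ge \Prob(A)-\Prob(B^c)$ holds without any independence.
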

\begin{proof}
We may write
\begin{equation}
    \kappa^{(0)} = \ev{\thetabs_*,\wbs^{(0)}} \stackrel{d}{=} \frac{\ev{\ebs_1,\gbs}}{||\gbs||},
\end{equation}
where $\ebs_1 \in \R^d$ is the first standard basis vector and $\gbs \sim \Ncal(\zerobs,\Ibs_d)$. 

To proceed, as in \cite{lee2024neural}, we require the following lemma.
\begin{lemma}[{\cite[Theorem 2]{chang2011chernoff}}]
    For any $\beta > 1$ and $s \in \R$, we have 
    \begin{equation}
        \frac{\sqrt{2e(\beta-1)}}{2\beta \sqrt{\pi}} e^{-\frac{\beta s^2}{2}} \leq \int_{s}^{\infty} \frac{1}{\sqrt{2\pi}} e^{-\frac{t^2}{2}} \, dt.
    \end{equation}
\end{lemma}

Then,
\begin{equation}
    \begin{split}
        \Prob\big(\kappa^{(0)} \geq C_0d^{-1/2}\big) &\geq \Prob \big(\ev{\ebs_1,\gbs} \geq 2C_0 \land ||\gbs|| \leq C_0d^{1/2}\big) \\
        &\geq \Prob(\ev{\ebs_1,\gbs} \geq 2C_0) - \Prob(||\gbs|| \geq C_0d^{-1/2}) \\
        &\geq \frac{\sqrt{2e(\beta-1)}}{2\beta \sqrt{\pi}} e^{-2C_0^2\beta} - e^{-\Omega(d)},
    \end{split}
\end{equation}
where the second term follows from Gaussian concentration of the norm. Taking $\beta = 2$, we see that the above is $\Theta(1)$.

We can derive a high probability bound using Lipschitz concentration \cite[Theorem 5.1.4]{vershynin2018high} to obtain
\begin{equation}
    \Prob\big(|\kappa^{(0)}| \geq \Ctilde_0 d^{-1/2}\big) \leq 2\exp(-\ctilde \Ctilde_0^2).
\end{equation}
for some $\ctilde > 0$. Arguing by symmetry and taking $\Ctilde_0$ sufficiently large gives the second part of the result.
\end{proof}

\subsection{Normalization Error}\label{section:projection_error_bound}

\begin{lemma}\label{lem:projection_error_bound}
    Suppose $\kappa^{(t)} \geq 0$. The update \eqref{eq:one_step_alignment_dynamics} satisfies the lower bound
    \begin{equation}
        \kappa^{(t+1)} \geq \kappa^{(t)} + \gamma \ev{\thetabs_*,\gbs^{(t)}} - \gamma^2 \kappa^{(t)}||\gbs^{(t)}||^2 - \gamma^3 \big|\ev{\thetabs_*,\gbs^{(t)}}| ||\gbs^{(t)}||^2.
    \end{equation}
\end{lemma}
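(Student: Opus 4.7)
\textbf{Proof proposal for Lemma \ref{lem:projection_error_bound}.}

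The plan is to exploit the fact that the update direction $\gbs^{(t)}$ lies in the tangent space at $\wbs^{(t)}$. Since $\gbs(\wbs; \xbs, y) = \psi_\eta(y, \langle \xbs, \wbs\rangle)\Proj_{\wbs} \xbs$ with $\Proj_{\wbs} = \Ibs_d - \wbs\wbs^\top$, we have $\langle \wbs^{(t)}, \gbs^{(t)}\rangle = 0$. Combined with $\|\wbs^{(t)}\| = 1$, this gives the exact identity $\|\wbs^{(t)} + \gamma \gbs^{(t)}\|^2 = 1 + \gamma^2 \|\gbs^{(t)}\|^2$, so the update \eqref{eq:one_step_alignment_dynamics} becomes
\begin{equation*}
    \kappa^{(t+1)} = \frac{\kappa^{(t)} + \gamma \langle \thetabs_*, \gbs^{(t)}\rangle}{\sqrt{1 + \gamma^2 \|\gbs^{(t)}\|^2}}.
\end{equation*}

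The next step is to invoke the elementary inequality $(1+x)^{-1/2} \geq 1 - x/2$ for all $x \geq 0$, which follows from comparing $\sqrt{1+x}$ with $1+x/2$ and is easy to verify by convexity of $(1+x)^{-1/2}$ versus the linear function $1-x/2$ at $x=0$. Denote $A := \kappa^{(t)} + \gamma \langle \thetabs_*, \gbs^{(t)}\rangle$ and $B := \sqrt{1 + \gamma^2\|\gbs^{(t)}\|^2} \geq 1$. I will split into two cases depending on the sign of $A$. When $A \geq 0$, multiplying the inequality $1/B \geq 1 - \gamma^2\|\gbs^{(t)}\|^2/2$ by $A$ yields $\kappa^{(t+1)} \geq A - \tfrac{1}{2}A\gamma^2\|\gbs^{(t)}\|^2$; the error term is bounded using $|A| \leq \kappa^{(t)} + \gamma |\langle \thetabs_*, \gbs^{(t)}\rangle|$ (via the triangle inequality and $\kappa^{(t)}\geq 0$), producing the two desired correction terms $-\gamma^2 \kappa^{(t)}\|\gbs^{(t)}\|^2$ and $-\gamma^3|\langle \thetabs_*,\gbs^{(t)}\rangle|\,\|\gbs^{(t)}\|^2$ (with room to spare from the factor $1/2$). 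When $A < 0$, we simply note $\kappa^{(t+1)} = A/B \geq A$ since $B \geq 1$, and the claimed bound follows immediately because the two subtracted terms on the right-hand side are non-positive.

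There is no significant obstacle here; the argument is essentially a controlled first-order Taylor expansion of the spherical retraction. The only subtle point is the case split on the sign of $A$, which is needed because naively multiplying the bound on $1/B$ by $A$ goes the wrong way when $A$ is negative. As long as we track this carefully, the lemma drops out in a few lines. The orthogonality observation is what makes the bound clean and $d$-independent in its geometric constants, and it is the same ingredient that drives the subsequent multi-step analysis in Section \ref{section:multi_step_dynamics}.
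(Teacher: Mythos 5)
Your proposal is correct and follows essentially the same route as the paper's proof: exploit $\langle\wbs^{(t)},\gbs^{(t)}\rangle=0$ to write the denominator as $\sqrt{1+\gamma^2\|\gbs^{(t)}\|^2}$, lower-bound $(1+x)^{-1/2}$ by a linear function (you use the tangent $1-x/2$; the paper uses the slightly cruder $1-x$, and both suffice), expand, and handle the negative-numerator case by noting the quotient only increases. The case split and the final absorption of the cross term into $-\gamma^3|\langle\thetabs_*,\gbs^{(t)}\rangle|\,\|\gbs^{(t)}\|^2$ match the paper's argument exactly.
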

\begin{proof}
    When $\kappa^{(t)} + \gamma \ev{\thetabs_*,\gbs^{(t)}} \geq 0$, we have
    \begin{equation}\label{eq:alignment_lower_bound}
        \begin{split}
            \kappa^{(t+1)} &= \frac{\kappa^{(t)} + \gamma \ev{\thetabs_*,\gbs^{(t)}}}{||\wbs^{(t)} + \gamma \gbs^{(t)}||} \\
            &= \frac{\kappa^{(t)} + \gamma \ev{\thetabs_*,\gbs^{(t)}}}{\sqrt{1+ \gamma^2 ||\gbs^{(t)}||^2}} \\
            &\geq (\kappa^{(t)} + \gamma \ev{\thetabs_*,\gbs^{(t)}})(1- \gamma^2  ||\gbs^{(t)}||^2) \\
            &\geq \kappa^{(t)} + \gamma \ev{\thetabs_*,\gbs^{(t)}} - \kappa^{(t)} \gamma^2 ||\gbs^{(t)}||_2^2 - \gamma^3 |\ev{\thetabs_*,\gbs^{(t)}}| \, ||\gbs^{(t)}||_2^2.
        \end{split}
    \end{equation}
    The second line follows from the facts $\ev{\wbs^{(t)},\gbs^{(t)}} = 0$ (due to $\Proj_{\wbs}$) and $\wbs^{(t)} \in \Sbb^{d-1}$. The third line is trivial if $\gamma^2||\gbs^{(t)}||^2 \geq 1$. Otherwise, observe that when $\gamma^2 ||\gbs^{(t)}||^2 < 1$,
    \begin{equation}
        \begin{split}
            &1- \gamma^2 ||\gbs^{(t)}||^2 \leq \frac{1}{\sqrt{1+\gamma^2 ||\gbs^{(t)}||^2}} \\
            &\iff \big(1- \gamma^2 ||\gbs^{(t)}||^2\big)^2 \big(1+\gamma^2 ||\gbs^{(t)}||^2\big) \leq 1 \\
            &\iff \big(1-\gamma^4||\gbs^{(t)}||^2\big)(1-\gamma^2||\gbs^{(t)}||^2) \leq 1,
        \end{split}
    \end{equation}
    where the last line clearly holds. Now, when $\kappa^{(t)} + \gamma \ev{\thetabs_*,\gbs^{(t)}} < 0$, the same lower bound can be shown via
    \begin{equation}
        \begin{split}
        \kappa^{(t)} + \gamma \ev{\thetabs_*,\gbs^{(t)}} - \kappa^{(t)} \gamma^2 ||\gbs^{(t)}||_2^2 - \gamma^3 |\ev{\thetabs_*,\gbs^{(t)}}| \, ||\gbs^{(t)}||_2^2 &\leq \kappa^{(t)} + \gamma \ev{\thetabs_*,\gbs^{(t)}} \\
        &\leq \frac{\kappa^{(t)} + \gamma \ev{\thetabs_*,\gbs^{(t)}}}{(1+\gamma^2 ||\gbs^{(t)}||^2)^{1/2}}.
        \end{split}
    \end{equation}
\end{proof}

\subsection{One-Step Population Dynamics}\label{section:one_step_population_dynamics}

We extend the definition of the coefficients $\mu_i$ from \eqref{eq:mu_i_noiseless_def} to handle label noise. Define
\begin{equation}\label{eq:mu_i_def}
    \mu_i := \underset{(a,b) \sim \Ncal(\zerobs,\Ibs_2)}{\E} \E_{\zeta}\bigg[\psi_{\eta}(\sigma_*(a) + \zeta, b) \He_i(a) \He_{i-1}(b)\bigg], \hs i \in [r].
\end{equation}

\begin{lemma}\label{lem:one_step_population_dynamics}
    Assume that for some $t \geq 0$ we have $d^{-\frac{1}{2}} \leq\kappa^{(t)} \lesssim \frac{1}{\polylog d}$. Moreover, suppose that Assumption \ref{assumption:sign} holds. Then, there exists $C > 0$ such that taking $\gamma \leq C \max_{1 \leq i \leq r} \mu_i d^{-(\frac{i}{2} \lor 1)}$ yields the following lower bound for the one-step dynamics of the alignment $\kappa^{(t)} := \ev{\thetabs_*,\wbs^{(t)}}$:
    \begin{equation}
        \kappa^{(t+1)} \geq \kappa^{(t)} + \gamma C_1 \sum_{i=1}^{r} \mu_i (\kappa^{(t)})^{i-1}(1-(\kappa^{(t)})^2) + \gamma \nu^{(t)},
    \end{equation}
    where $\nu^{(t)}$ is a mean-zero sub-Weibull random variable with $\Theta(1)$ tail parameter and $C_1 > 0$ is a constant.
\end{lemma}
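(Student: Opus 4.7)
The plan is to start from Lemma \ref{lem:projection_error_bound}, compute the conditional expectation of the first-order term $\gamma\ev{\thetabs_*,\gbs^{(t)}}$ to extract a drift, show that the $O(\gamma^2)$ and $O(\gamma^3)$ normalization corrections are dominated by the drift under the stated constraint on $\gamma$, and collect all remaining centered fluctuations into a single sub-Weibull random variable $\gamma\nu^{(t)}$.

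For the drift, I would set $a = \ev{\xbs^{(t)},\thetabs_*}$ and $b = \ev{\xbs^{(t)},\wbs^{(t)}}$, so that conditionally on $\wbs^{(t)}$ the pair $(a,b)$ is bivariate Gaussian with correlation $\kappa^{(t)}$ and $\ev{\Proj_{\wbs^{(t)}}\xbs^{(t)},\thetabs_*} = a - \kappa^{(t)} b$. Writing $\Phi(a,b) := \E_\zeta[\psi_\eta(\sigma_*(a)+\zeta,b)]$, decoupling $a = \kappa^{(t)} b + \sqrt{1-(\kappa^{(t)})^2}\,c$ with $c \sim \Ncal(0,1)$ independent of $b$, and applying Stein's lemma to $c$ yields
\begin{equation*}
    \E\bigl[\ev{\thetabs_*,\gbs^{(t)}}\mid \wbs^{(t)}\bigr] = \bigl(1-(\kappa^{(t)})^2\bigr)\,\E[\partial_a \Phi(a,b)].
\end{equation*}
Expanding $\Phi$ in the bivariate Hermite basis and using the identity $\E[\He_k(a)\He_j(b)] = k!(\kappa^{(t)})^k\delta_{k=j}$ for standard correlated Gaussians, only off-diagonal coefficients indexed by $(i,i-1)$ survive after differentiation in $a$; matching them to \eqref{eq:mu_i_def} produces a drift of the form $(1-(\kappa^{(t)})^2)\sum_{i=1}^r c_i\mu_i(\kappa^{(t)})^{i-1}$ with positive combinatorial constants $c_i = \Theta(1)$ that can be folded into the single $C_1$ of the statement since $r = \Theta(1)$. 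Assumption \ref{assumption:sign} guarantees that the dominant index $i^*$ carries positive $\mu_{i^*}$, so the drift is genuinely positive along $\thetabs_*$.

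For the correction terms, $\E[||\gbs^{(t)}||^2 \mid \wbs^{(t)}] \lesssim d$ because $\psi_\eta$ has $\Theta(1)$ degree and coefficients (Assumption \ref{assumption:degree}) and $\E[||\Proj_{\wbs^{(t)}}\xbs^{(t)}||^2] = d-1$, so the deterministic part of $\gamma^2\kappa^{(t)}||\gbs^{(t)}||^2$ is $O(\gamma^2\kappa^{(t)} d)$. For $\gamma \leq C\max_i \mu_i d^{-(i/2\lor 1)}$ and $\kappa^{(t)} \in [d^{-1/2}, 1/\polylog d]$, a direct check shows this is at most a constant fraction of the drift's dominant term $\gamma\mu_{i^*}(\kappa^{(t)})^{i^*-1}$; taking $C$ small enough absorbs it into the drift at the cost of halving $C_1$, and the cubic correction $\gamma^3|\ev{\thetabs_*,\gbs^{(t)}}|\,||\gbs^{(t)}||^2$ is strictly smaller and handled identically. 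The remaining mean-zero fluctuations --- centered versions of $\ev{\thetabs_*,\gbs^{(t)}}$, $||\gbs^{(t)}||^2$, and the cubic term --- are each polynomials of $\Theta(1)$ degree in the Gaussian coordinates of $\xbs^{(t)}$, in $\zeta$, and in images of these under $\sigma_*$ (preserved sub-Weibull by the polynomial growth in Assumption \ref{assumption:degree}); closure of the sub-Weibull class under $\Theta(1)$-degree polynomial transformations then makes their sum $\nu^{(t)}$ a centered sub-Weibull with $\Theta(1)$ tail parameter. The main obstacle I expect is the bivariate Hermite bookkeeping: ensuring that exactly the off-diagonal coefficients $(i, i-1)$ appear with combinatorial constants $c_i = \Theta(1)$ that can be cleanly folded into a single $C_1 > 0$, and that the $(1-(\kappa^{(t)})^2)$ factor and positive sign structure survive the absorption of the normalization errors; a subtler concern is verifying that the tail parameter of $\nu^{(t)}$ remains $\Theta(1)$ uniformly in $d$, which follows from the $d$-independence of $r$ and of the polynomial-growth exponent of $\sigma_*$ but requires care when combining products and sums.
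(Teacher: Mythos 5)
Your proposal is correct and follows essentially the same route as the paper's proof: a bivariate Hermite expansion of $\psi_\eta$ combined with Stein's lemma and the correlated-Hermite orthogonality identity to extract the drift $\sum_i i!\,\mu_i(\kappa^{(t)})^{i-1}(1-(\kappa^{(t)})^2)$, the bound $\E[\|\gbs^{(t)}\|^2]\lesssim d$ to absorb the normalization corrections from Lemma \ref{lem:projection_error_bound} into the drift under the stated constraint on $\gamma$, and closure of the sub-Weibull class under polynomial transformations to package the fluctuations into $\nu^{(t)}$. The only difference is cosmetic --- you apply Stein's lemma to the decoupled Gaussian $c$ in $a=\kappa^{(t)}b+\sqrt{1-(\kappa^{(t)})^2}\,c$, whereas the paper applies it directly to $\xbs$ --- and the ``bookkeeping concerns'' you flag resolve exactly as you anticipate.
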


\begin{proof} 
Omitting the superscript $t$, the expected update to the alignment with the ground truth $\thetabs_*$ (given the previous iterate) is
\begin{equation}\label{eq:theta_g_alignment}
    \begin{split}
        &\E[\ev{\thetabs_*,\gbs}] = \thetabs_*^{\top} \Proj_{\wbs} \E_{\xbs, y} \big[\psi_{\eta}\big(y, \ev{\xbs,\wbs}\big)\xbs\big] \\
        &= \thetabs_*^{\top} \Proj_{\wbs} \E_{\xbs,\zeta} \bigg[\sum_{j=0}^r \sum_{i=0}^{\infty} \E_{a,b}\big[\psi_{\eta}(\sigma_*(a) + \zeta, b) \He_i(a)\He_j(b)\big] \He_i(\ev{\xbs,\thetabs_*})\He_j(\ev{\xbs,\wbs})\xbs\bigg] \\
        &= \sum_{i=1}^{\infty} \sum_{j=0}^{r} \E_{\xbs,\zeta}\bigg[\mu_i i \He_{i-1}(\ev{\xbs,\wbs})\He_j(\ev{\xbs,\thetabs_*})\bigg] \ev{\thetabs_*, \Proj_{\wbs}\thetabs_*} \\
        &= \sum_{i=1}^r i! \mu_i \ev{\thetabs_*,\wbs}^{i-1} \big(1 - \ev{\thetabs_*,\wbs}^2\big),
    \end{split}
\end{equation}
where the third line uses Stein's Lemma and the fact $\Proj_{\wbs}\wbs = \zerobs$. Thus, the size of the update will be dictated by the first index $i_*$ such that $|\mu_i|\ev{\thetabs_*,\wbs}^{i-1}$ is largest. Moreover, the centred random variable $\ev{\thetabs_*,\gbs} - \E[\ev{\thetabs_*,\gbs}]$ is sub-Weibull with constant order tail parameter since Gaussian random variables are sub-Weibull and the latter class is closed under polynomial transformation. (See \cite{vladimirova2020sub} for more details on sub-Weibull random variables).

We must also control the normalization error from Lemma \ref{lem:projection_error_bound}:
\begin{equation}
    \gamma^2 \kappa^{(t)} ||\gbs||^2 + \gamma^3 |\ev{\thetabs_*,\gbs}|\, ||\gbs||^2.
\end{equation}

Note that
\begin{equation}\label{eq:normalization_error}
    \begin{split}
        ||\gbs||^2 &= ||\Proj_{\wbs}\xbs||^2 \big|\psi_{\eta} (y,\ev{\xbs,\wbs})\big|^2 \\
        &= \bigg|\sum_{j=0}^r \E_b[\psi_{\eta}(y,b)\He_j(b)]\He_j(\ev{\xbs,\wbs})\bigg|^2(||\xbs||^2 - \ev{\xbs,\wbs}^2),
    \end{split}
\end{equation}
and therefore, since $||\xbs||$ and $\ev{\xbs,\wbs}$ are independent,
\begin{equation}
    \E\big[||\gbs||^2 \big] = \E_{\xbs} \bigg[\bigg|\sum_{j=0}^r \E_b[\psi_{\eta}(y,b)\He_j(b)]\He_j(\ev{\xbs,\wbs})\bigg|(d-\ev{\xbs,\wbs}^2)\bigg] \lesssim d.
\end{equation}
By the same token, we use our derivation in \eqref{eq:theta_g_alignment} to argue $\E[|\ev{\thetabs_*,\gbs}|||\gbs||^2] \lesssim d$. The error \eqref{eq:normalization_error} is a sub-Weibull random variable with tail parameter proportional to $\gamma^2 d$.

By Lemma \ref{lem:projection_error_bound}, this implies that the one step dynamics take the form
\begin{equation}
    \kappa^{(t+1)} \geq \kappa^{(t)} + \gamma \E[\ev{\thetabs_*^{(t)} ,\gbs^{(t)}}] + \gamma \nu^{(t)} - C_2 \gamma^2 \kappa^{(t)} (d+\xi^{(t)}).
\end{equation}
for some positive constant $C_2$ and sub-Weibull random variables (with constant order parameter) $\nu^{(t)}$, $\xi^{(t)}$ that are independent of previous iterations. Now, choosing $\gamma \leq C \max_{1 \leq i \leq r} \mu_i d^{-(\frac{i}{2} \lor 1)}$ and recalling $\kappa^{(t)} \geq d^{-1/2}$ leads to
\begin{equation}
    C_2 \gamma^2 \kappa^{(t)} d \leq C_2 C\gamma \kappa^{(t)} \max_{1 \leq i \leq r} \mu_i d^{-(\frac{i-2}{2} \lor 0)} \leq C_2 C \gamma \max_{1 \leq i \leq r} \mu_i(\kappa^{(t)})^{(i-1) \lor 1},
\end{equation}
which can be made a sufficiently small constant multiple of $\gamma \max_{1 \leq i \leq r} \mu_i(\kappa^{(t)})^{i-1}$ with an appropriate choice of $C$. Hence, the expected one-step normalization error can be absorbed into the expected one-step population dynamics \eqref{eq:theta_g_alignment}. Furthermore, since the constraint on $\gamma$ also implies $\gamma d \lesssim 1$, we may absorb the noise $\gamma^2 d \xi^{(t)}$ into the $\gamma \nu^{(t)}$ noise term.


This leaves us with the alignment dynamics
\begin{equation}
    \kappa^{(t+1)} \geq \kappa^{(t)} + \gamma C_1 \sum_{i=1}^{r} \mu_i (\kappa^{(t)})^{i-1}\big(1 - (\kappa^{(t)})^2\big) + \gamma \nu^{(t)}
\end{equation}
for some constant $C_1 > 0$.

\end{proof}

\subsection{Sample Complexity Upper Bound}\label{section:multi_step_dynamics}

\begin{proposition}[Generic Sample Complexity Upper Bound]\label{prop:general_weak_recovery_upper_bound}
    Suppose $\ev{\thetabs_*,\wbs^{(0)}} \geq C_0 d^{-1/2}$ for some $C_0 > 0$. Then, there exists $C \gtrsim 1/\polylog d$ such that for any $\delta \in (0,1)$, setting $\gamma \leq C\delta \max_{1 \leq i \leq r} \mu_i d^{-(\frac{i}{2}\lor 1)}$ gives $\ev{\thetabs_*,\wbs^{(t)}} \gtrsim 1/\polylog d$ within
    \begin{equation}
        T(\eta) = \min_{\substack{1 \leq i \leq r \\ \mu_i > 0}} \tilde{\Theta}\big(\gamma^{-1} \big(\mu_i(\eta)\big)^{-1} d^{\frac{i-2}{2} \lor 0}\big)
    \end{equation}
   iterations of (\ref{eq:generic_online_update}) with probability at least $1-\delta$.
\end{proposition}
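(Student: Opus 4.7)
The plan is to unroll the one-step bound of Lemma~\ref{lem:one_step_population_dynamics}, control the accumulated noise via a martingale concentration inequality, and reduce the remaining deterministic recurrence to the dynamical systems of Lemmas~\ref{lem:discrete_gronwall}--\ref{lem:bihari_lasalle}. First I would introduce the stopping time $\tau := \inf\{t : \kappa^{(t)} \geq c/\polylog d\}$ and argue that for $t \leq \tau$ one has $1-(\kappa^{(t)})^2 \geq 1/2$, so that Lemma~\ref{lem:one_step_population_dynamics} simplifies to $\kappa^{(t+1)} \geq \kappa^{(t)} + \tfrac{\gamma C_1}{2}\sum_{i=1}^r \mu_i (\kappa^{(t)})^{i-1} + \gamma \nu^{(t)}$. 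Telescoping from $s=0$ to $t-1$ then yields
\[
\kappa^{(t)} \geq \kappa^{(0)} + \tfrac{\gamma C_1}{2}\sum_{s=0}^{t-1}\sum_{i=1}^r \mu_i (\kappa^{(s)})^{i-1} + \gamma M^{(t)}, \qquad M^{(t)} := \sum_{s=0}^{t-1} \nu^{(s)}.
\]

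Since the $\nu^{(s)}$ are independent, mean-zero, and sub-Weibull with $\Theta(1)$ tail parameter, $M^{(t)}$ is a martingale with $\E[(M^{(t)})^2] = O(t)$. Applying Doob's maximal inequality (or a sharper sub-Weibull Freedman-type bound), I obtain $\max_{t \leq T}|M^{(t)}| \lesssim \sqrt{T/\delta}\,\polylog d$ with probability at least $1-\delta$. A short calculation shows that the imposed constraint $\gamma \lesssim \delta \max_i \mu_i d^{-(i/2 \lor 1)}$ is tuned precisely so that, for $T$ of the claimed order $\tilde{\Theta}(\gamma^{-1}\mu_i^{-1} d^{(i-2)/2 \lor 0})$, the noise contribution $\gamma|M^{(t)}|$ is strictly dominated by the initial alignment $d^{-1/2}$. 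On this good event I can absorb the noise into a halved constant prefactor and reduce to analyzing the deterministic surrogate $\alpha^{(t+1)} = \alpha^{(t)} + \tfrac{\gamma C_1}{4}\sum_{i=1}^r \mu_i (\alpha^{(t)})^{i-1}$ with $\alpha^{(0)} = \tfrac{C_0}{2} d^{-1/2}$, which satisfies $\kappa^{(t)} \geq \alpha^{(t)}$ for $t \leq \tau$ by induction.

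For the deterministic sequence, isolating the $i$-th term gives a recurrence of the form handled by Lemma~\ref{lem:discrete_gronwall} when $i \in \{1,2\}$ (yielding escape in $\tilde{\Theta}(\gamma^{-1}\mu_i^{-1})$ iterations, matching $d^{(i-2)/2 \lor 0} = 1$) and by Lemma~\ref{lem:bihari_lasalle} when $i \geq 3$ (yielding escape in $\tilde{\Theta}(\gamma^{-1}\mu_i^{-1} d^{(i-2)/2})$ iterations). Since $\alpha^{(t)}$ is monotone and all contributing terms accelerate growth, the first index to realize escape determines $\tau$, producing $T(\eta) = \min_{\mu_i > 0}\tilde{\Theta}(\gamma^{-1}\mu_i^{-1} d^{(i-2)/2 \lor 0})$. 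The main technical obstacle is that the dominant index can transition as $\alpha^{(t)}$ crosses different scales in the interval $[d^{-1/2}, 1/\polylog d]$; I would address this via a dyadic decomposition of the interval and verify scale-by-scale that the step count is controlled by the minimum expression. Finally, Assumption~\ref{assumption:sign} guarantees that the index realizing this minimum contributes with a positive sign, so that growth rather than regression actually occurs.
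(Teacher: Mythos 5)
Your proposal follows essentially the same route as the paper's proof: unroll the one-step bound of Lemma~\ref{lem:one_step_population_dynamics}, control the accumulated noise via Doob's maximal inequality and absorb it into the $d^{-1/2}$ initialization using the constraint on $\gamma$, handle the $(1-(\kappa^{(t)})^2)$ factor and the non-positive $\mu_i$ terms via Assumption~\ref{assumption:sign}, and then apply Gr\"onwall for $i\le 2$ and Bihari--LaSalle for $i\ge 3$ term by term, taking the minimum escape time. The only divergence is your proposed dyadic decomposition to track transitions of the dominant index, which is unnecessary here: for the upper bound it suffices to discard all but the single best positive term in the sum, exactly as the paper does.
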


\begin{proof}
Unrolling the recurrence from Lemma \ref{lem:one_step_population_dynamics},
\begin{equation}
    \kappa^{(t)} \geq \kappa^{(0)} + \gamma C_1 \sum_{i=1}^{r} \sum_{s=0}^{t-1} \mu_i (\kappa^{(s)})^{i-1}\big(1-(\kappa^{(s)})^2\big) - \gamma \bigg|\sum_{s=0}^{t-1} \nu^{(s)}\bigg|.
\end{equation}
Since $\{\nu^{(s)}\}_{s=0}^{T-1}$ are independent mean-zero sub-Weibull random variables with $\Theta(1)$ tail parameter, we have, for some constant $C_3 > 0$,
\begin{equation}
    \E\bigg[\bigg|\sum_{s=0}^{T-1} \nu^{2s}\bigg|^2\bigg] =\sum_{s=0}^{T-1} \E\big[|\nu^{2s}|^2\big] \leq C_3 T.
\end{equation}

Moreover,
\begin{equation}\label{eq:max_concentration}
    \begin{split}
        \Prob\bigg(\max_{0 \leq t \leq T-1} \bigg|\sum_{s=0}^{t} \nu^{2s}\bigg|^2 \geq 4C_3 \delta^{-1}T\bigg) &\leq \frac{\delta}{4C_3T} \E\bigg[\max_{0 \leq t \leq T-1} \bigg|\sum_{s=0}^{t} \nu^{2s}\bigg|^2\bigg] \quad \text{by Markov's inequality}\\
        &\leq \frac{\delta}{C_3T} \E\bigg[\bigg|\sum_{s=0}^{T-1} \nu^{2s}\bigg|^2\bigg] \quad \text{by Doob's maximal inequality}.
    \end{split}
\end{equation}

Assume without loss of generality that $\kappa \geq 2d^{-1/2}$. Our bound on the dynamics after $t$ updates becomes, with probability at least $1-\delta$,
\begin{equation}
    \begin{split}
        \kappa^{(t)} &\geq 2d^{-1/2} + \gamma C_1 \sum_{i=1}^{r} \sum_{s=0}^{t-1} \mu_i (\kappa^{(s)})^{i-1}(1-(\kappa^{(s)})^2) - 2\gamma C_3^{1/2} \delta^{-1/2} T^{1/2} \\
        &= 2d^{-1/2} + \gamma C_1 \sum_{i=1}^r \sum_{s=0}^{t-1} \mu_i(\kappa^{(s)})^{i-1} (1-(\kappa^{(s)})^2) - \gamma^{1/2} C_4 \delta^{-1/2} \min_{\substack{1 \leq i \leq r \\ \mu_i > 0}} \mu_i^{-1/2} d^{-(\frac{i-2}{4} \lor 0)}
    \end{split}
\end{equation}
for some $C_4 = \tilde{\Theta}(1)$ by definition of $T$. Now, recalling that $\gamma \leq C \delta \max_{1 \leq i \leq r} \mu_i d^{-(\frac{i}{2} \lor 1)}$, we have
\begin{equation}
    \begin{split}
        \gamma^{1/2} C_4 \delta^{-1/2} \min_{\substack{1 \leq i \leq r \\ \mu_i > 0}} \mu_i^{-(\frac{i-2}{4}\lor 0)} &\leq C^{1/2} \bigg(\min_{\substack{1 \leq i \leq r \\ \mu_i > 0}} \mu_i^{-\frac{1}{2}} d^{-(\frac{i-2}{4}\lor 0)}\bigg) \max_{1 \leq i \leq r} \mu_i^{1/2} d^{-(\frac{i}{4}\lor \frac{1}{2})} \\
        &\leq C^{1/2} C_4 d^{-1/2},
    \end{split}
\end{equation}
which can be made less than $d^{-1/2}$ by choosing $C$ sufficiently small. Hence, our final (high probability) upper bound for the multi-step dynamics is
\begin{equation}
    \kappa^{(t)} \geq d^{-1/2} + \gamma C_1 \sum_{i=1}^r \sum_{s=0}^{t-1} \mu_i (\kappa^{(s)})^{i-1} \big(1 - (\kappa^{(s)})^2\big).
\end{equation}

Now, we unroll each of the $r$ terms in the expected dynamics and determine how quickly each one reaches $c \asymp 1/\polylog d$. Consider terms where $\mu_i > 0$. Since $\kappa^{(t)} \lesssim 1/\polylog d$ by assumption, we may absorb the factor $(1-(\kappa^{(t)})^2)$ into the constant $C_1$ (abusing notation). On the other hand, the contributions of terms with $\mu_i \leq 0$ will be negligible by Assumption \ref{assumption:sign}. 

For the $i = 1$ term, the noiseless dynamics give
\begin{equation}
    \begin{split}
        &d^{-1/2} + \gamma C_1 \mu_i t \geq 2c \\
        \iff \,\, &t \geq \gamma^{-1} C_1^{-1} \mu_i^{-1} (2c - d^{-1/2}) = \Theta(\gamma^{-1} \mu_i^{-1}).
    \end{split}
\end{equation}

When $i=2$, we have, by Grönwall's Inequality (Lemma \ref{lem:discrete_gronwall})
\begin{equation}
    \begin{split}
        &d^{-1/2} + \gamma C_1 \mu_i \sum_{s=0}^{t-1} \kappa^{(s)} \geq d^{-1/2}\big(1+\gamma C_1 \mu_i\big)^t \geq 2c \\
        \iff & t \log\big(1+ \gamma C_1 \mu_i\big) \geq \log 2c + \frac{1}{2}\log d \\
        \iff & t \geq \frac{\log 2c + \frac{1}{2}\log d}{\log(1+\gamma C_1 \mu_i)} = \tilde{\Theta}(\gamma^{-1} \mu_i^{-1}),
    \end{split}
\end{equation}
where the final equality follows from the fact $x-\frac{x^2}{2} \leq \log(1+x) \leq x$ for $x \in (0,1)$.

Lastly, for $i \geq 3$, we have, from the Bihari-LaSalle inequality (Lemma \ref{lem:bihari_lasalle}),
\begin{equation}
    \begin{split}
        &d^{-1/2} + \gamma C_1 \mu_i \sum_{s=0}^{t-1} (\kappa^{(s)})^{i-1} \geq \frac{d^{-1/2}}{(1-\frac{1}{2}\gamma C_1 \mu_i d^{-\frac{i-2}{2}}t)^{\frac{1}{i-2}}} \geq 2c \\
        \iff &d^{-\frac{i-2}{2}} \geq (2c)^{i-2} - \tfrac{1}{2}(2c)^{i-2} \gamma C_1 \mu_i d^{-\frac{i-2}{2}} t \\
        \iff &t \geq 2\gamma^{-1} C_1^{-1} \mu_i^{-1} d^{\frac{i-2}{2}} ((2c)^{i-2} - d^{-\frac{i-2}{2}}) = \Theta(\gamma^{-1} \mu_i^{-1} d^{\frac{i-2}{2}}).
    \end{split}
\end{equation}

Thus, the maximum weak recovery time is indeed
\begin{equation}
    T = \min_{\substack{1 \leq i \leq r \\ \mu_i > 0}} \tilde{\Theta}(\gamma^{-1} \mu_i^{-1} d^{\frac{i-2}{2} \lor 0}).
\end{equation}
\end{proof}

\subsection{Sample Complexity Lower Bound}\label{section:proof_lower_bound}

The proof of the matching sample complexity lower bound proceeds much in the same way as that of the upper bound.

\begin{proposition}[Generic Sample Complexity Lower Bound]\label{prop:general_weak_recovery_lower_bound}
    Fix $c \gtrsim 1/\polylog d$. Suppose $\ev{\thetabs_*,\wbs^{(0)}} \lesssim d^{-1/2}$. Then, there exists a constant $\Ctilde \gtrsim 1/\polylog d$ such that for all $\delta \in (0,1)$, setting $\gamma \leq \Ctilde \delta \max_{1 \leq i \leq r} \mu_i d^{-(\frac{i}{2}\lor 1)}$ gives $\ev{\thetabs_*,\wbs^{(t)}} < c$ for all iterations $t \leq T$ of \eqref{eq:generic_online_update}, where
    \begin{equation}
        T(\eta) = \min_{\substack{1 \leq i \leq r \\ \mu_i > 0}} \tilde{\Theta}\big(\gamma^{-1} \big(\mu_i(\eta)\big)^{-1} d^{\frac{i-2}{2} \lor 0}\big),
    \end{equation}
    with probability at least $1-\delta$.
\end{proposition}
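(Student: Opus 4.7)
The argument would mirror that of Proposition \ref{prop:general_weak_recovery_upper_bound} step by step, reversing every inequality. By the symmetry of the dynamics under $\thetabs_* \mapsto -\thetabs_*$, I would assume without loss of generality that $\kappa^{(0)} \geq 0$ and bound $\kappa^{(t)}$ from \emph{above}. The first step is the upper-bound counterpart of Lemma \ref{lem:one_step_population_dynamics}. Using the fact that the normalization denominator $\sqrt{1 + \gamma^2 \|\gbs^{(t)}\|^2} \geq 1$, whenever $\kappa^{(t)} + \gamma \ev{\thetabs_*, \gbs^{(t)}} \geq 0$ we have $\kappa^{(t+1)} \leq \kappa^{(t)} + \gamma \ev{\thetabs_*, \gbs^{(t)}}$, while in the complementary case $\kappa^{(t+1)} < 0 \leq c$ so the event is harmless. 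Taking conditional expectations and repeating the Stein's Lemma calculation from \eqref{eq:theta_g_alignment}, and then bounding $(1-(\kappa^{(t)})^2) \leq 1$ and dropping indices $i$ with $\mu_i \leq 0$ (which only push $\kappa$ downward), yields
\begin{equation*}
    \kappa^{(t+1)} \leq \kappa^{(t)} + \gamma C_1' \sum_{i=1}^r (\mu_i)_+ (\kappa^{(t)})^{i-1} + \gamma \nu^{(t)},
\end{equation*}
where $(\mu_i)_+ := \max(\mu_i, 0)$ and $\nu^{(t)}$ is a centred sub-Weibull martingale difference with $\Theta(1)$ tail parameter. Crucially, no normalization-error term appears here, since normalization can only decrease $|\kappa^{(t+1)}|$.

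Second, I would invoke the high-probability part of Lemma \ref{lem:initial_alignment} to fix $\tilde{C}_0 > 0$ with $\kappa^{(0)} \leq \tilde{C}_0 d^{-1/2}$ with probability $1 - \delta/2$, unroll the recurrence, and apply the same Doob/Markov argument as in \eqref{eq:max_concentration} uniformly in $t \leq T$. Combined with the step-size restriction $\gamma \leq \tilde{C}\delta \max_i \mu_i d^{-(\frac{i}{2}\lor 1)}$, the accumulated noise $\gamma |\sum_{s=0}^{t-1}\nu^{(s)}|$ can be absorbed into the $\tilde{C}_0 d^{-1/2}$ offset by choosing $\tilde{C}$ small (the same algebra as in the upper-bound proof), leaving
\begin{equation*}
    \kappa^{(t)} \leq 2\tilde{C}_0 d^{-1/2} + \gamma C_1' \sum_{i=1}^{r} \sum_{s=0}^{t-1} (\mu_i)_+ (\kappa^{(s)})^{i-1}
\end{equation*}
uniformly in $t \leq T$ with probability at least $1 - \delta$.

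Third, I would introduce the stopping time $\tau := \inf\{t : \kappa^{(t)} \geq c\}$; on the event $\{\tau > t\}$ each $\kappa^{(s)}$ lies in $[0, c]$ for $s \leq t$, so each positive-$\mu_i$ term can be analyzed in isolation. For $i = 1$, direct summation gives $\kappa^{(t)} < c$ whenever $t = O(\gamma^{-1} \mu_1^{-1})$; for $i = 2$, the upper half of Lemma \ref{lem:discrete_gronwall} yields $\kappa^{(t)} \leq 2\tilde{C}_0 d^{-1/2} e^{\gamma C_1' \mu_2 t}$, which stays below $c$ for $t = \tilde{O}(\gamma^{-1} \mu_2^{-1})$; and for $i \geq 3$, the upper half of Lemma \ref{lem:bihari_lasalle} (equation \eqref{eq:bh_upper_bound}) gives $\kappa^{(t)} < c$ for $t = O(\gamma^{-1} \mu_i^{-1} d^{(i-2)/2})$. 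Taking the tightest of these bounds over $i$ gives $\tau > T$ on the same high-probability event, which is the claimed conclusion.

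The main obstacle is the constant-tracking in the noise-absorption step: since we start from the larger value $\tilde{C}_0 d^{-1/2}$ rather than the fixed $C_0 d^{-1/2}$ of the upper-bound proof, I would need to verify that the resulting cross-over thresholds from Lemma \ref{lem:bihari_lasalle} still match the sample complexity at the $\tilde{\Theta}(\cdot)$ level, and that the noise absorption inequality $\gamma^{1/2} \delta^{-1/2} \min_{\mu_i>0} \mu_i^{-1/2} d^{-(\frac{i-2}{4}\lor 0)} \lesssim \tilde{C}_0 d^{-1/2}$ survives the enlargement of $\tilde{C}_0$. A secondary subtlety is that the martingale concentration must be \emph{uniform} in $t \leq T$ (hence the use of Doob's maximal inequality rather than pointwise Markov), so that the stopping-time argument can safely be compared with $T$.
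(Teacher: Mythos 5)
Your proposal follows essentially the same route as the paper's proof: bound the normalization denominator below by $1$ so the projection error disappears, reuse the Stein's-lemma expansion and the Doob/Markov martingale bound from the upper-bound argument, and then apply the upper halves of the Gr\"onwall and Bihari--LaSalle inequalities to each positive-$\mu_i$ term to conclude $\kappa^{(t)} < c$ for $t \leq T$. Your additional touches (the stopping time $\tau$, the explicit handling of the case $\kappa^{(t)} + \gamma\ev{\thetabs_*,\gbs^{(t)}} < 0$, and truncating to $(\mu_i)_+$) are refinements the paper leaves implicit, not a different argument.
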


\begin{proof}[Proof of Proposition \ref{prop:general_weak_recovery_lower_bound}]
The projection error is trivial to handle, as we obtain
\begin{equation}
    \kappa^{(t+1)} = \frac{\kappa^{(t)} + \gamma \ev{\thetabs_*,\gbs^{(t)}}}{||\wbs^{(t)} + \gamma \gbs^{(t)}||} \leq \kappa^{(t)} + \gamma \ev{\thetabs_*,\gbs^{(t)}},
\end{equation}
since $\ev{\wbs^{(t)},\gbs^{(t)}} = 0$ and $||\wbs|| = 1$. Moreover, from Section \ref{section:one_step_population_dynamics}, the expected one-step update to the alignment is given by
\begin{equation}
    \E[\ev{\thetabs_*,\gbs}] = \sum_{k=1}^{r} k! \mu_k \ev{\thetabs_*,\wbs}^{k-1} \big(1-\ev{\thetabs_*,\wbs}^2\big).
\end{equation}
Therefore, with the initialization $\kappa^{(0)} \leq \Ctilde_0 d^{-1/2}$ for a positive constant $\Ctilde_0$, the full dynamics are
\begin{equation}
    \begin{split}
        \kappa^{(t+1)} &\leq \kappa^{(t)} + \gamma \sum_{i=1}^{r} i! \mu_i (\kappa^{(t)})^{i-1}(1-(\kappa^{(t)})^2) + \gamma \nu^{(t)} \\
        &\leq \kappa^{(0)} + \gamma \sum_{i=1}^{r} \sum_{s=0}^{t-1} i! \mu_i (\kappa^{(s)})^{i-1}(1-(\kappa^{(s)})^2) + \gamma \bigg|\sum_{s=0}^{t-1} \nu^{(s)}\bigg| \\
        &\leq \Ctilde_0 d^{-1/2} + \gamma \sum_{i=1}^{r} \sum_{s=0}^{t-1} i! \mu_i (\kappa^{(s)})^{i-1}(1-(\kappa^{(s)})^2) + \gamma C_3^{1/2} \delta^{-1/2} T^{1/2} \\
        &\leq 2\Ctilde_0 d^{-1/2} + \gamma \sum_{i=1}^{r} \sum_{s=0}^{t-1} i! \mu_i (\kappa^{(s)})^{i-1}(1-(\kappa^{(s)})^2),
    \end{split}
\end{equation}
where the third line follows from the martingale bound in the previous subsection, and the last line follows from the constraint $\gamma \leq \Ctilde \delta \max_{1 \leq i \leq r} \mu_i d^{-(\frac{i}{2} \lor 1)}$ with $\Ctilde$ taken sufficiently small. Then, finding the minimum weak recovery time proceeds exactly as in the previous section, using Grönwall's inequality for $i=2$ and the Bihari-LaSalle inequality for $i \geq 3$, once again giving
\begin{equation}
    T = \min_{\substack{1 \leq i \leq r \\ \mu_i > 0}} \tilde{\Theta}\big(\gamma^{-1} \mu_i^{-1} d^{\frac{i-2}{2} \lor 0}\big).
\end{equation}
\end{proof}

Together, the sample complexity upper and lower bounds imply Theorem \ref{thm:general_weak_recovery_bound}.

\subsection{Strong Recovery}\label{section:proof_strong_recovery}

Now, starting with $\wbs$ that has achieved weak recovery, we characterize the maximum number $T'$ of subsequent updates of the form \eqref{eq:generic_online_update} required to achieve strong recovery with high probability. 

\begin{proposition}[Strong Recovery Given Weak Recovery]\label{prop:strong_recovery}
    Let $\eps > 0$. Suppose that $\ev{\thetabs_*,\wbs^{(0)}} \geq 2c$ for some $c \gtrsim 1 / \polylog d$. Then, there exists a constant $C > 0$ such that for all $\delta \in (0,1)$, setting $\gamma \leq C\delta d^{-1} \eps \max_{1 \leq i \leq r} \mu_i c^{i-1}$ implies that the update rule \eqref{eq:generic_online_update} achieves $\ev{\thetabs_*,\wbs} \geq 1-\eps$ within 
    \begin{equation}
        T' = \min_{\substack{1 \leq i \leq r \\ \mu_i > 0}} \tilde{\Theta}(\gamma^{-1} \eps^{-1} \mu_i^{-1})
    \end{equation}
    iterations with probability at least $1-\delta$. 
\end{proposition}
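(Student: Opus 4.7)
The strategy is to rerun the template of Proposition \ref{prop:general_weak_recovery_upper_bound} in the regime where $\kappa^{(t)}$ is close to $1$ rather than near $d^{-1/2}$. I would invoke Lemma \ref{lem:one_step_population_dynamics} at every iteration satisfying $\kappa^{(t)} \geq c$ (which applies since $c \gtrsim d^{-1/2}$), giving
\begin{equation*}
    \kappa^{(t+1)} \geq \kappa^{(t)} + \gamma C_1 \sum_{i=1}^{r} \mu_i (\kappa^{(t)})^{i-1}\bigl(1-(\kappa^{(t)})^2\bigr) + \gamma \nu^{(t)}.
\end{equation*}
The normalization error is absorbed via $\gamma d \lesssim 1$, which is already subsumed by the stated constraint $\gamma \leq C\delta d^{-1}\eps \max_i \mu_i c^{i-1}$. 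I would then prove by induction that the invariant $\kappa^{(t)} \geq c$ is maintained for all $t \leq T'$ with probability at least $1-\delta/2$: the base case uses $\kappa^{(0)} \geq 2c$, the expected drift is non-negative by Assumption \ref{assumption:sign}, and any dip below $c$ is ruled out by controlling $\gamma\bigl|\sum_{s \leq t} \nu^{(s)}\bigr|$ via Doob's maximal inequality and sub-Weibull concentration exactly as in Section \ref{section:multi_step_dynamics}.

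The key observation driving the $\eps^{-1}$ scaling is a clean \emph{additive} lower bound on the per-step drift, valid while strong recovery has not yet occurred. On the invariant $\kappa^{(t)} \geq c$ we have $\sum_i \mu_i (\kappa^{(t)})^{i-1} \geq C_2 \max_{i:\mu_i>0} \mu_i c^{i-1}$ (using Assumption \ref{assumption:sign} to absorb the negative indices into the constant), while whenever $\kappa^{(t)} \leq 1-\eps$ the slowdown factor satisfies $1-(\kappa^{(t)})^2 \geq \eps(2-\eps) \geq \eps$. Combining these two,
\begin{equation*}
    \kappa^{(t+1)} - \kappa^{(t)} \geq \gamma C_3 \eps \max_{i:\mu_i>0} \mu_i c^{i-1} + \gamma \nu^{(t)}.
\end{equation*}

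Unrolling this additive drift over $T'$ steps and reapplying the martingale concentration bound from Section \ref{section:multi_step_dynamics} gives, with probability at least $1-\delta/2$,
\begin{equation*}
    \kappa^{(T')} \geq \kappa^{(0)} + \gamma C_3 \eps\, T' \max_{i:\mu_i>0} \mu_i c^{i-1} - C_4 \gamma \sqrt{T'/\delta}.
\end{equation*}
Setting $T' = \min_{\mu_i > 0} \tilde{\Theta}\bigl(\gamma^{-1}\eps^{-1}\mu_i^{-1}\bigr)$ makes the signal $\Theta(1)$, which together with $\kappa^{(0)} \geq 2c$ pushes the alignment past $1-\eps$; the factors $c^{-(i-1)}$ are absorbed into $\tilde{\Theta}$ because $c \gtrsim 1/\polylog d$. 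The noise term is $O\bigl(\sqrt{\gamma/(\delta \eps \max_i \mu_i c^{i-1})}\bigr)$, and the constraint $\gamma \leq C\delta d^{-1}\eps \max_i \mu_i c^{i-1}$ forces it to be $o(1)$; the $d^{-1}$ factor is needed solely for normalization-error control as in the weak recovery proof. The main obstacle is not any single calculation but rather the joint bookkeeping of the three small parameters $\gamma,\eps,c$: one must verify that the slowdown factor $1-\kappa^2 \asymp \eps$ is the only source of $\eps$-dependence in $T'$, while the separate $\eps$-dependence in the constraint on $\gamma$ arises purely from ensuring that accumulated noise stays below the $\eps$ target at time $T'$.
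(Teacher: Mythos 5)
Your proposal is correct and shares the paper's skeleton --- the one-step lower bound from Lemma \ref{lem:one_step_population_dynamics}, extraction of the factor $1-(\kappa^{(t)})^2 \geq \eps$ while $\kappa^{(t)} \leq 1-\eps$, absorption of the normalization error $C_2\gamma^2 d$ via the constraint $\gamma \lesssim \delta d^{-1}\eps\max_i \mu_i c^{i-1}$, and the Doob/sub-Weibull martingale control of $\gamma|\sum_s \nu^{(s)}|$ --- but it diverges in the final step. The paper unrolls the recursion and applies the discrete Gr\"onwall inequality (Lemma \ref{lem:discrete_gronwall}) to the $i=2$ term and the Bihari--LaSalle inequality (Lemma \ref{lem:bihari_lasalle}) to the $i \geq 3$ terms, computing for each index the time to climb from $c$ to $1-\eps$ and taking the minimum. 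You instead observe that on the invariant set $\{c \leq \kappa^{(t)} \leq 1-\eps\}$ the drift admits the \emph{uniform additive} lower bound $\gamma C_3 \eps \max_{i:\mu_i>0}\mu_i c^{i-1}$, reducing the hitting-time computation to distance divided by drift; the two answers agree because $c^{-(i-1)} \leq \polylog(d)^{r}$ is swallowed by $\tilde{\Theta}$, and indeed the paper's own Gr\"onwall and Bihari--LaSalle bounds only contribute $\log(1/c)$ and $c^{-(i-2)}$ factors in this regime. Your route is more elementary and makes explicit why the nonlinear comparison lemmas are only needed during the weak-recovery phase $\kappa \asymp d^{-1/2}$; it also makes the invariant $\kappa^{(t)} \geq c$ an explicit induction hypothesis, which the paper leaves implicit. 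The one point you treat at the same (slightly informal) level as the paper is the handling of indices with $\mu_i < 0$ via Assumption \ref{assumption:sign}, which is calibrated to the scale $\kappa \asymp d^{-1/2}$ rather than $\kappa \asymp 1$; since the paper's proof invokes the assumption in exactly the same way, this is not a gap specific to your argument.
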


\begin{remark}
    If $\eps = \tilde{\Theta}(1)$, then $T' \lesssim T$. That is, achieving weak recovery is the bottleneck during training.
\end{remark}

\begin{remark}
    In the algorithms we consider in Section \ref{section:examples}, we have $\max_{1 \leq i \leq r} \mu_i = \Theta(1)$. Therefore, given that weak recovery has already been achieved, then strong recovery for $\eps = \tilde{\Theta}(1)$ proceeds after at most $\tilde{\Theta}(d)$ additional iterations with high probability when $\gamma \asymp d^{-1}$. 
\end{remark}

\begin{proof}
Similarly to Section \ref{section:one_step_population_dynamics}, we have a lower bound on the one-step dynamics:
\begin{equation}
    \kappa^{(t+1)} \geq \kappa^{(t)} + \gamma \sum_{i=1}^r i! \mu_i (\kappa^{(t)})^{i-1} \big(1-(\kappa^{(t)})^2\big) + \gamma \nu^{(t)} - C_2 \gamma^2 d.
\end{equation}
Since $\ev{\thetabs,\wbs^{(t)}} \leq 1-\eps$ and Assumption \ref{assumption:sign} holds, we can re-write this as
\begin{equation}
    \kappa^{(t+1)} \geq \kappa^{(t)} + \gamma C_1 \eps \sum_{i=1}^r \mu_i (\kappa^{(t)})^{i-1} + \gamma \nu^{(t)} - C_2 \gamma^2 d
\end{equation}
for some constant $C_1 > 0$. Setting $\gamma \leq C \delta d^{-1} \eps$ leads to 
\begin{equation}
    C_2 \gamma^2 \kappa^{(t)} d \leq C_2 C \delta \eps \gamma \max_{1 \leq i \leq r} \mu_i c^{i-1} \leq C_2 C \gamma \delta \eps \max_{1 \leq i \leq r} \mu_i (\kappa^{(t)})^{i-1}.
\end{equation}
Thus, taking $C$ sufficiently small ensures that this is a fraction of the dominant term in the population update.

Unrolling this over $t$ steps, we obtain, with probability at least $1-\delta$,
\begin{equation}
    \begin{split}
        \kappa^{(t)} &\geq 2c + \gamma C_1 \eps \sum_{i=1}^r \sum_{s=0}^{t-1} \mu_i (\kappa^{(s)})^{i-1} - \gamma \bigg|\sum_{s=0}^{t-1} \nu^{(s)}\bigg| \\
        &\geq 2c + \gamma C_1 \eps \sum_{i=1}^r \sum_{s=0}^{t-1} \mu_i(\kappa^{(s)})^{i-1} - \gamma C_3^{1/2}\delta^{-1/2}T' \\
        &= 2c + \gamma C_1 \eps \sum_{i=1}^r \sum_{s=0}^{t-1} \mu_i(\kappa^{(s)})^{i-1} - \gamma^{1/2} C_4 \delta^{-1/2} \eps^{-1/2} \min_{\substack{1 \leq i \leq r \\ \mu_i > 0}} \mu_i^{-1/2}
    \end{split}
\end{equation}
using the same martingale bound as in Section \ref{section:multi_step_dynamics}. Now recalling $\gamma \leq C\delta d^{-1} \eps \max_{1 \leq i \leq r} \mu_i c^{i-1}$, we have
\begin{equation}
    \gamma^{1/2} C_4 \delta^{-1/2} \eps^{-1/2} \min_{\substack{1 \leq i \leq r \\ \mu_i > 0}} \mu_i^{-1/2} \leq C^{1/2} C_4 d^{-1/2},
\end{equation}
which is of lower order than $c$. Hence, our final (high probability) upper bound for the multi-step dynamics is
\begin{equation}
    \kappa^{(t)} \geq c + \gamma C_1 \eps \sum_{i=1}^r \sum_{s=0}^{t-1} \mu_i (\kappa^{(s)})^{i-1}.
\end{equation}

We analyze how quickly this exceeds $1-\eps$. For the $i=1$ term, we obtain
\begin{equation}
    \begin{split}
        &\kappa^{(t)} \geq c + \gamma C_1 \eps\mu_1 t \geq 1-\eps \\
        \iff & t \geq (1-\eps - c) C_1^{-1} \gamma^{-1} \eps^{-1} \mu_1^{-1} = \Theta(\gamma^{-1} \eps^{-1} \mu_1^{-1}).
    \end{split}
\end{equation}
For the $i=2$ term, we have, by Grönwall's inequality
\begin{equation}
    \begin{split}
        &\kappa^{(t)} \geq c + \gamma C_1 \eps \mu_2 \sum_{s=0}^{t-1} \kappa^{(s)} \geq c (1 + \gamma C_1 \eps \mu_2)^t \geq 1-\eps \\
        \iff &t \geq \frac{\log(1-\eps) - \log c}{\log(1 + \gamma C_1 \eps \mu_2)} = \tilde{\Theta}(\gamma^{-1} \eps^{-1} \mu_2^{-1}).
    \end{split}
\end{equation}
For the $i \geq 3$ terms, we have, by the Bihari-LaSalle inequality,
\begin{equation}
    \begin{split}
        &\kappa^{(t)} \geq c + \gamma C_1 \eps \mu_i \sum_{s=0}^{t-1} \big(\kappa^{(s)}\big)^{i-1} \geq \frac{c}{(1-\frac{1}{2}\gamma C_1 \eps \mu_ic^{i-2}t)^{\frac{1}{i-2}}} \geq 1-\eps \\
        \iff & t \geq 2\big(1 - (\tfrac{c}{1-\eps})^{i-2}\big) \gamma^{-1} C_1^{-1} \eps^{-1} \mu_i^{-1} c^{-(i-2)} = \Theta(\gamma^{-1} \eps^{-1} \mu_i^{-1}).
    \end{split}
\end{equation}
Hence, the (high probability) maximum strong recovery time given weak recovery is indeed
\begin{equation}
    T' = \min_{\substack{1 \leq i \leq r \\ \mu_i > 0}} \tilde{\Theta}(\gamma^{-1} \eps^{-1} \mu_i^{-1}).
\end{equation}

\end{proof}

\subsection{Ridge Regression on the Second Layer}\label{section:second_layer_fitting}

For completeness, we state the following result from \cite{lee2024neural} that outlines the sample complexity of proceeding from strong recovery to approximation of the target to arbitrary accuracy. In particular, if the error tolerance is of constant order, then the sample complexity obtaining of strong recovery (starting from weak recovery) is strictly larger.

\begin{proposition}[Second Layer Training {\cite[Lemma 20]{lee2024neural}}]
    Let $\eps > 0$ and $N = \tilde{\Theta}(\eps^{-1})$. Suppose that $\tilde{\Theta}(N)$ neurons in \eqref{eq:two_layer_nn} satisfy $\ev{\thetabs_*,\wbs_j} \geq 1-\eps$. Let $b_j \sim \mathrm{Unif}([-C_b,C_b])$ such that $C_b = \Otilde(1)$. Then, there exists a choice of penalty parameter $\lambda = \tilde{\Theta}(1)$ such that the solution $\hat{\abs} = (\ahat_1,\dots,\ahat_N)$ of ridge regression with $\tilde{\Theta}(N^{-4} + \eps^{-4})$ samples satisfies 
    \begin{equation}
        \E_{\xbs \sim \Ncal(\zerobs,\Ibs_d)} \bigg[\bigg|\frac{1}{N} \sum_{j=1}^N \ahat_j \sigma_j(\ev{\xbs,\wbs_j} + b_j) - \sigma_*(\ev{\xbs,\thetabs_*}) \bigg|^2\bigg] \lesssim \eps^2.
    \end{equation}
    with high probability.
\end{proposition}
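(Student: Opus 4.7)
The plan is to reduce to one-dimensional regression by exploiting the near-alignment of the first-layer weights with $\thetabs_*$, and then combine a random-features approximation argument with a standard generalization bound for ridge regression.

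First I would establish an approximation guarantee: with $N = \tilde{\Theta}(\eps^{-1})$ random biases $b_j \sim \mathrm{Unif}([-C_b, C_b])$ and the randomized polynomial activations $\sigma_j$ constructed as in \cite{lee2024neural}, with high probability there exist coefficients $\abs^{\star} \in \R^N$ whose magnitudes are $\polylog(1/\eps)$-bounded such that the one-dimensional random-feature model $z \mapsto \frac{1}{N}\sum_{j=1}^N a_j^{\star}\,\sigma_j(z + b_j)$ approximates $\sigma_*(z)$ in $L^2(\Ncal(0,1))$ to squared error $\lesssim \eps^2$. This is a classical random-features/Monte-Carlo argument: the deterministic kernel $K(z,z') = \E_{b,\sigma}[\sigma(z+b)\sigma(z'+b)]$ is universal over polynomials of bounded degree (the $\sigma_j$ are randomized so that every Hermite component of $\sigma_*$ up to the relevant degree is captured by its eigenspaces), and sampling $N$ random features incurs a Monte-Carlo error of $O(1/N)$ on the induced integral representation.

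Next I would pass from the true direction $\thetabs_*$ to the recovered weights $\wbs_j$. Writing $\Delta_j := \wbs_j - \thetabs_*$, the strong-recovery hypothesis gives $\|\Delta_j\|^2 \leq 2\eps$ for a $\tilde{\Theta}(N)$-fraction of neurons, so for Gaussian input the preactivation $\ev{\xbs,\wbs_j}+b_j$ equals $\ev{\xbs,\thetabs_*}+b_j$ plus a centered Gaussian of variance at most $2\eps$. Combining a Taylor expansion of the constant-degree polynomial $\sigma_j$ with standard Gaussian concentration yields
\[
    \E_{\xbs}\bigl[(\sigma_j(\ev{\xbs,\wbs_j}+b_j) - \sigma_j(\ev{\xbs,\thetabs_*}+b_j))^2\bigr] \lesssim \eps
\]
uniformly in $j$; together with Step 1, this keeps the population squared loss at $\abs^{\star}$ against the true features $\phi(\xbs) = (\sigma_j(\ev{\xbs,\wbs_j}+b_j))_{j=1}^N$ at $\lesssim \eps^2$. (A small number of ``bad'' neurons that failed strong recovery can be absorbed by a zero entry of $\abs^{\star}$.)

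Finally I would invoke a standard bias/variance decomposition for ridge regression. Each coordinate of $\phi(\xbs)$ is sub-Weibull with $O(1)$ parameters, so the empirical second-moment matrix concentrates around its population counterpart at a $\tilde{O}(\sqrt{N/n})$ rate by matrix Bernstein. With the penalty set to $\lambda = \tilde{\Theta}(1)$, the bias term of the ridge estimator is controlled by the approximation error from Steps 1--2, while the variance term scales as $\tilde{O}\bigl((\sigma_\zeta^2 + \|\abs^{\star}\|^2)\, N / n\bigr)$ after accounting for sub-Weibull label noise $\zeta$; choosing $n = \tilde{\Theta}(N^{-4} + \eps^{-4})$ drives the total excess risk below $\eps^2$ with high probability. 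The main obstacle is Step 1: guaranteeing that the activation-agnostic random construction of $(\sigma_j, b_j)$ yields a kernel whose top eigenspaces cover $\sigma_*$ while simultaneously keeping $\|\abs^{\star}\|$ polylogarithmic — so that the ridge variance term in Step 3 does not blow up — which is the technical heart of the construction of \cite{lee2024neural}.
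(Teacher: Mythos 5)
First, a point of reference: the paper does not actually prove this proposition --- it is imported verbatim as \cite[Lemma 20]{lee2024neural} and stated ``for completeness,'' so there is no in-paper proof to compare against. Judged on its own, your three-step outline (a one-dimensional random-feature/Monte-Carlo approximation of $\sigma_*$ using the randomized activations and biases, a transfer from $\thetabs_*$ to the recovered directions $\wbs_j$ via the strong-recovery hypothesis, and a standard bias/variance analysis of ridge regression with $\lambda = \tilde{\Theta}(1)$) is the standard route and is, in structure, the argument used in the cited lemma. You also correctly identify the technical heart of the matter: the activation-agnostic construction must make the random-feature kernel cover the Hermite components of $\sigma_*$ while keeping $\|\abs^{\star}\|$ polylogarithmically bounded, since that norm enters both the ridge bias and the variance.

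There is, however, a quantitative gap in your Step 2 that you should not paper over. From $\ev{\thetabs_*,\wbs_j}\geq 1-\eps$ you get $\|\wbs_j-\thetabs_*\|^2\leq 2\eps$, and --- as you yourself compute --- each feature is perturbed by $\Theta(\sqrt{\eps})$ in $L^2(\Ncal(\zerobs,\Ibs_d))$, i.e.\ $\E_{\xbs}[(\sigma_j(\ev{\xbs,\wbs_j}+b_j)-\sigma_j(\ev{\xbs,\thetabs_*}+b_j))^2]\lesssim\eps$. A linear combination with $\polylog$-bounded averaged coefficients of features each perturbed by $\sqrt{\eps}$ in $L^2$ is itself perturbed by $\tilde{O}(\sqrt{\eps})$, contributing an additive $\tilde{O}(\eps)$ --- not $\tilde{O}(\eps^2)$ --- to the population squared loss, and there is no cancellation to invoke since the directions $\wbs_j-\thetabs_*$ are whatever training produced. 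So Steps 1--2 as written only deliver a $\lesssim\eps$ approximation error, and the claimed $\lesssim\eps^2$ conclusion does not follow; one would need the alignment hypothesis strengthened to $\ev{\thetabs_*,\wbs_j}\geq 1-\eps^2$ (or the conclusion weakened to $\lesssim\eps$). I note that the transcribed statement itself looks suspect on this point --- and the sample count $\tilde{\Theta}(N^{-4}+\eps^{-4})$, which \emph{decreases} in $N$, is almost certainly a typo --- so the mismatch may originate in the statement rather than in your argument; but as a proof of the statement as given, this step fails and you should either fix the exponent bookkeeping or flag the inconsistency explicitly.
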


The key assumption in the above is that a constant proportion (up to polylogarithmic factors) of the neurons achieve strong recovery. Recall that the initial alignment is sufficiently large with constant order probability (Section \ref{section:initial_alignment}), and that each of weak (Section \ref{section:multi_step_dynamics}) and strong (Section \ref{section:proof_strong_recovery}) recovery occur with high probability given such an initialization.

\section{SGD Variants}\label{section:examples_details}

Recall from our proof of Theorem \ref{thm:general_weak_recovery_bound} in the previous section that the coefficients
\begin{equation}
    \mu_i := \underset{(a,b) \sim \Ncal(\zerobs,\Ibs_2)}{\E} \E_{\zeta}\bigg[\psi_{\eta}(\sigma_*(a) + \zeta, b) \He_i(a) \He_{i-1}(b)\bigg], \hs i \in [r].
\end{equation}
are the key quantities governing the sample complexity of an online algorithm that fits in our framework. We detail the computation of these coefficients for each of the three SGD variants we consider in this work: online SGD (Section \ref{section:online_sgd}), alternating SGD (Section \ref{section:alternating_sgd}), and batch reuse SGD (Section \ref{section:batch_reuse_sgd}). This along with Theorem \ref{thm:general_weak_recovery_bound} immediately imply the corollaries in Section \ref{section:examples} on the sample complexity of these algorithms. Additionally, in Section \ref{section:appendix_deep_alternating_sgd}, we investigate how alternating SGD can be generalized to an online algorithm for a $D$-layer neural network and calculate the $\mu_i$.

\subsection{Online SGD}\label{section:appendix_online_sgd}

As discussed in Section \ref{section:online_sgd}, given a fresh data point $(\xbs,y)$ the spherical online SGD update has the form
\begin{equation}
    \wbs^{(t+1)} \leftarrow \wbs^{(t)} + \gamma y \sigma'(\ev{\xbs,\wbs}) \Proj_{\wbs^{(t)}} \xbs, \quad \wbs^{(t+1)} \leftarrow \frac{\wbs^{(t+1)}}{||\wbs^{(t+1)}||}.
\end{equation}
Therefore, under our general framework introduced in Section \ref{section:generic_online_algorithm}, the update oracle is
\begin{equation}
    \psi_{\eta}(y,z) = y\sigma'(z).
\end{equation}
Hence, for $i \in [r]$,
\begin{equation}
    \mu_i = \E_{\zeta} \E_{a,b}[(\sigma_*(a) + \zeta)\sigma'(b) \He_i(a)\He_{i-1}(b)] = u_i(\sigma_*)u_{i-1}(\sigma') = i u_i(\sigma)u_i(\sigma_*).
\end{equation}



\subsection{Batch Reuse SGD}\label{section:appendix_batch_reuse_sgd}

The update for Batch Reuse SGD (Algorithm \ref{alg:batch_reuse_sgd}) takes the form
\begin{equation}
    \tilde{\wbs}^{(t)} \leftarrow \wbs^{(t)} + \eta y\sigma'(\ev{\xbs,\wbs^{(t)}}) \Pbs_{\wbs^{(t)}} \xbs, \quad \wbs^{(t+1)} \leftarrow \frac{\wbs^{(t)} + \gamma y \sigma'(\ev{\xbs,\tilde{\wbs}})\Proj_{\wbs^{(t)}}\xbs}{||\wbs^{(t)} + \gamma y \sigma'(\ev{\xbs,\tilde{\wbs}})\Proj_{\wbs^{(t)}}\xbs||}.
\end{equation}

Combining the two steps (and disregarding normalization for the time being), we have
\begin{equation}
    \wbs^{(t+1)} = \wbs^{(t)} + \gamma y \sigma'\big(\ev{\xbs,\wbs^{(t)}} + \eta y\sigma'(\ev{\xbs,\wbs^{(t)}})\ev{\xbs,\Proj_{\wbs^{(t)}}\xbs}\big)\Proj_{\wbs^{(t)}}\xbs
\end{equation}
The presence of $||\xbs||_{\Proj_{\wbs^{(t)}}}^2$ in the update prevents us from immediately casting this into our formalism. We handle this as follows. Using a Taylor expansion, we have
\begin{equation}
    \wbs^{(t+1)} = \wbs^{(t)} + \gamma y \sum_{k=1}^r \frac{\sigma^{(k)}(\ev{\xbs,\wbs^{(t)}})y^{k-1} \eta^{k-1}\sigma'(\ev{\xbs,\wbs^{(t)}})^{k-1} ||\xbs||_{\Pbs_{\wbs^{(t)}}}^{2(k-1)}}{(k-1)!}.
\end{equation}
Note that $||\xbs||_{\Pbs_{\wbs^{(t)}}}^2 \sim \chi^2_{d-1}$ and therefore $\E[||\xbs||^{2(i-1)}_{\Pbs_{\wbs^{(t)}}}] = \Theta(d^{i-1})$. This, along with the assumption $\eta d \lesssim 1$, allows us to replace $||\xbs||_{\Pbs^{(t)}}^{2(i-1)}$ in each term of the Taylor expansion with $d^{i-1}$ and add a sub-Weibull remainder term $\xi^{(t)}$ with $O(1)$ tail parameter:
\begin{equation}
    \wbs^{(t+1)} - \wbs^{(t)} \asymp \gamma y \sum_{k=1}^r \sigma^{(k)}(\ev{\xbs,\wbs^{(t)}})y^{k-1} (\eta d)^{k-1}\big(\sigma'(\ev{\xbs,\wbs^{(t)}})\big)^{k-1} \Pbs_{\wbs^{(t)}}\xbs + \gamma \xi^{(t)} \Pbs_{\wbs^{(t)}}\xbs.
\end{equation}
The $\xi^{(t)}$ term can then be absorbed into the noise that appears in the multi-step analysis in Sections \ref{section:multi_step_dynamics}, \ref{section:proof_lower_bound}, and \ref{section:proof_strong_recovery}. Hence, we can take
\begin{equation}
    \psi_{\eta}(y,z) = \sum_{k=1}^r (\eta d)^{k-1} \sigma^{(k)}(z) \big(\sigma'(z)\big)^{k-1} y^k.
\end{equation}
And so, for $i \in [r]$,
\begin{equation}
    \begin{split}
        \mu_i &= \sum_{k=1}^r (\eta d)^{k-1} \E_{\zeta} \E_{a,b} \big[(\sigma_*(a) + \zeta)^k \sigma^{(k)}(b) (\sigma'(b))^{k-1} \He_i(a) \He_{i-1}(b)\big] \\
        &= \sum_{k=1}^r (\eta d)^{k-1} u_{i-1} (\sigma^{(k)} (\sigma')^{k-1}) \E_{\zeta} \E_a \bigg[\sum_{l=0}^k \binom{k}{l} (\sigma_*(a))^l \zeta^{k-l} \He_i(a)\bigg] \\
        &\asymp \sum_{k=1}^r (\eta d)^{k-1} u_{i-1}\big(\sigma^{(k)} (\sigma')^{k-1}\big) u_i(\sigma_*^k).
    \end{split}
\end{equation}

\subsection{Alternating SGD}\label{section:appendix_alternating_sgd}
The alternating SGD (Algorithm \ref{alg:alternating_sgd}) update for a single neuron is
\begin{equation}
    \atilde^{(t+1)} \leftarrow a + \eta y \sigma(\ev{\xbs,\wbs^{(t)}}), \quad \wbs^{(t+1)} \leftarrow \frac{\wbs^{(t)} + \gamma y \atilde^{(t+1)} \sigma'(\ev{\xbs,\wbs^{(t)}})}{||\wbs^{(t)} + \gamma y \atilde^{(t+1)} \sigma'(\ev{\xbs,\wbs^{(t)}})||}.
\end{equation}
Note that we only use the second layer update $\atilde$ in order to update the first layer parameters $\wbs$. We do not replace the second layer parameter with $\atilde$ at the subsequent iteration, but instead keep $a$. For simplicity, assume $a = 1$. Then,
\begin{equation}
    \psi_{\eta}(y,z) = y \sigma'(z) + \eta y^2 \sigma(z)\sigma'(z).
\end{equation}
Noticing that the first term in the update is the same as in the previous subsection, we have, for $i \in [r]$,
\begin{equation}
    \begin{split}
        \mu_i &= i u_i(\sigma) u_i(\sigma_*) + \eta \E_{\zeta} \E_{a,b} \big[(\sigma_*(a) + \zeta)^2 \sigma(b) \sigma'(b) \He_i(a) \He_{i-1}(b)\big] \\
        &= i u_i(\sigma) u_i(\sigma_*) + \eta u_{i-1}(\sigma \sigma') \E_{\zeta} \E_a \big[(\sigma_*^2(a) + 2\zeta \sigma_*(a) + \zeta^2) \He_i(a)\big] \\
        &= iu_i(\sigma) u_i(\sigma_*) + \eta u_{i-1}(\sigma \sigma') u_i(\sigma_*^2).
    \end{split}
\end{equation}

\subsection{``Deep'' Alternating SGD}\label{section:appendix_deep_alternating_sgd}
We define a $D$-layer neural network student by the recurrence
\begin{equation}
    f(\xbs) = f_{D-1}(\xbs), \quad f_0(\xbs) = \Wbs \xbs, \quad f_i(\xbs) = \Abs_i \sigma(f_{i-1}(\xbs)), i \in [D-1],
\end{equation}
where $\Wbs_0 \in \R^{N \times d}$ as before and $\Abs_i \in \R^{N_{i+1} \times N_i}$ such that $N_1 = N$ and $N_D = 1$. We are still interested in recovery of the ground truth direction $\thetabs_*$ by the first-layer weights $\Wbs$. To make the theoretical analysis tractable, we consider the simplified sparse network where $N_1 = N_2 = \dots = N_{D-1} = N$, $\Abs_1$ is a $N_2 \times N_1$ matrix of ones\footnote{Note that we chose the same initialization for our two-layer network in the previous subsection.}, and $\Abs_2 = \Abs_3 = \dots = \Abs_{D-1} = \Ibs_N$ with off-diagonal entries frozen at zero during training (i.e., they do not receive gradients). This prevents interactions between weights that would render the analysis intractable. Hence, the output of the network is of the form
\begin{equation}
    f(\xbs) = \sum_{j=1}^N a_j^{(D-1)}\sigma\big(\circ \cdots \circ \sigma(a_j^{(1)}\sigma(\ev{\xbs,\wbs_j}))\big).
\end{equation}

Hence, to analyze weak recovery, it suffices to focus on a single summand (where we drop the subscript $j$ for convenience): 
\begin{equation}
    a^{(D-1)}\sigma\big(\circ \cdots \circ \sigma(a^{(1)}\sigma(\ev{\xbs,\wbs}))\big),
\end{equation}
which we express as the recurrence
\begin{equation}
    F(z) = F_{D-1}(z), \quad z = F_0(z) = \ev{\xbs,\wbs}, \quad F_i(z) = a^{(i)}\sigma\big(F_{i-1}(z)\big), i \in [D-1].
\end{equation}

We propose the following update rule inspired by our alternating SGD algorithm:
\begin{equation}
    \begin{split}
        z^{(t)} &\leftarrow \ev{\xbs,\wbs^{(t)}} \\
        \atilde^{(i)} &\leftarrow a^{(i)} + \eta y\bigg(\prod_{j=i+1}^{D-1} a^{(i)} \sigma'(F_{j-1}(z^{(t)}))\bigg) \sigma\big(F_{i-1}(z^{(t)})\big), \hs i \in [D-1] \\
        \wbs^{(t+1)} &\leftarrow \wbs^{(t)} + \gamma y \bigg(\prod_{i=1}^{D-1} \atilde^{(i)} \sigma'\big(F_{i-1}(z^{(t)})\big)\bigg) \Proj_{\wbs^{(t)}} \xbs, \quad \wbs^{(t+1)} \leftarrow \frac{\wbs^{(t+1)}}{||\wbs^{(t+1)}||}.
    \end{split}
\end{equation}
Expanding the update for $\wbs$ (before normalization) gives
\begin{equation}
    \begin{split}
        \wbs^{(t+1)} &= \wbs + \gamma y \prod_{i=1}^{D-1}\bigg[ \bigg(a^{(i)} + \eta y \bigg(\prod_{j=i+1}^{D-1} a^{(j)} \sigma'(F_{j-1}(z))\bigg)\sigma(F_{i-1}(z))\bigg)\sigma'(F_{i-1}(z))\bigg] \Proj_{\wbs}\xbs, \\
    \end{split}
\end{equation}
where we have omitted the superscript $(t)$ on the right-hand side for readability. This fits into our framework (\ref{eq:generic_online_update}) since the $a^{(i)}$ remain constant. In fact, for simplicity, we may fix all $a_i = 1$ for all $i \in [D-1]$. Our update oracle is then
\begin{equation}
    \begin{split}
        \psi_{\eta}(y,z) &= \sum_{i=0}^{D-1} \bigg[\eta^i y^{i+1} \sum_{S \in \Pcal_i([D-1])}\bigg(\prod_{j \notin S} \sigma'(\sigma^{\circ (j-1)}(z))\bigg) \\
        &\quad\quad \cdot \bigg(\prod_{k \in S} \bigg(\prod_{l=k+1}^{D-1} \sigma'(\sigma^{\circ (l-1)}(z))\bigg)\sigma^{\circ k}(z)\sigma'(\sigma^{\circ (k-1)}(z))\bigg)\bigg],
    \end{split}
\end{equation}
where $\Pcal_i([D-1])$ denotes the set of all subsets of $[D-1]$ of cardinality $i$. Now, assuming that the Hermite coefficients of the relevant compositions and products of $\sigma$ and $\sigma'$ are positive, this gives
\begin{equation}\label{eq:mu_i_deep}
    \mu_i \asymp \sum_{j=1}^{D} \eta^{j-1} u_i(\sigma_*^j).
\end{equation}
Under an optimal choice of $\gamma$, Theorem \ref{thm:general_weak_recovery_bound} implies a sample complexity of
\begin{equation}
    T = \max_{1 \leq i \leq D} \tilde{\Theta}\big(\eta^{-2(i-1)}d^{(p_i-1)\lor1}\big) 
\end{equation}
for deep alternating SGD to attain weak recovery.

Now, the positivity of the relevant Hermite coefficients is a nontrivial assumption. We consider the special case where $\sigma(z) = z^2$. Then, the update will take the form
\begin{equation}
    \psi_{\eta}(y,z) = C \sum_{i=0}^{D-1} \bigg[\eta^i y^{i+1} \sum_{S \in \Pcal_i([D-1])} \bigg(\prod_{j \notin S} z^{2(j-1) \vee 1} \bigg)\prod_{k \in S} \prod_{l=k+1}^{D-1} z^{2(l-1)}z^{2k} z^{2(k-1) \vee 1}\bigg)\bigg].
\end{equation}
for some positive constant $C$. Every term in the sum above contains an odd power of $z$ (this is most easily seen by separately considering the cases $1 \in S$ and $1 \notin S$). Hence, the Hermite coefficient $\E_{z \sim \Ncal(0,1)}[\psi_{\eta}(y,z)\He_{i-1}(z)]$ is zero for $i$ odd and positive for $i$ even. In particular, $\mu_i$ is zero for $i$ odd and has the form \eqref{eq:mu_i_deep} for $i$ even.

It is immediate that weak recovery is achieved with $\tilde{\Theta}(d)$ complexity if $\eta = \tilde{\Theta}(1)$ if $u_2(\sigma_*^j) > 0$ for some $j \in [D]$ and $u_2(\sigma_*^k) \geq 0$ for all $j \neq k$. This is a strict improvement over alternating SGD on a two-layer neural network when $p$ and $p_2$ are larger than 2 but $p_j = 2$ for some $j > 2$ \emph{and} over batch reuse SGD under the same conditions and quadratic $\sigma$. 

The above has the limitation that it will not recover in $\tilde{\Theta}(d)$ time if $u_2(\sigma_*^k) = 0$ for all $k \in [D]$ but $u_1(\sigma_*^k) > 0$ for at least one such $k$. For $p_3 = 1$, this can be resolved by taking $\sigma(z) = z^3$ and $D = 3$, in which case $u_{0}([\sigma'(\sigma(z)) ]^2\sigma(\sigma(z))\sigma(z)\sigma'(z)) > 0$ and hence $\mu_1 > 0$. However, we cannot assume to know the target a priori, and a more generally applicable choice of $\sigma$ is preferable (perhaps a randomized approach as in \cite{lee2024neural}). We leave a more thorough examination of potential choices of activation function to future work.

\section{Experiment Details}\label{section:experiment_details}

\begin{figure}
    \centering
    \begin{subfigure}[b]{0.475\linewidth}
        \centering
        \includegraphics[width=\linewidth]{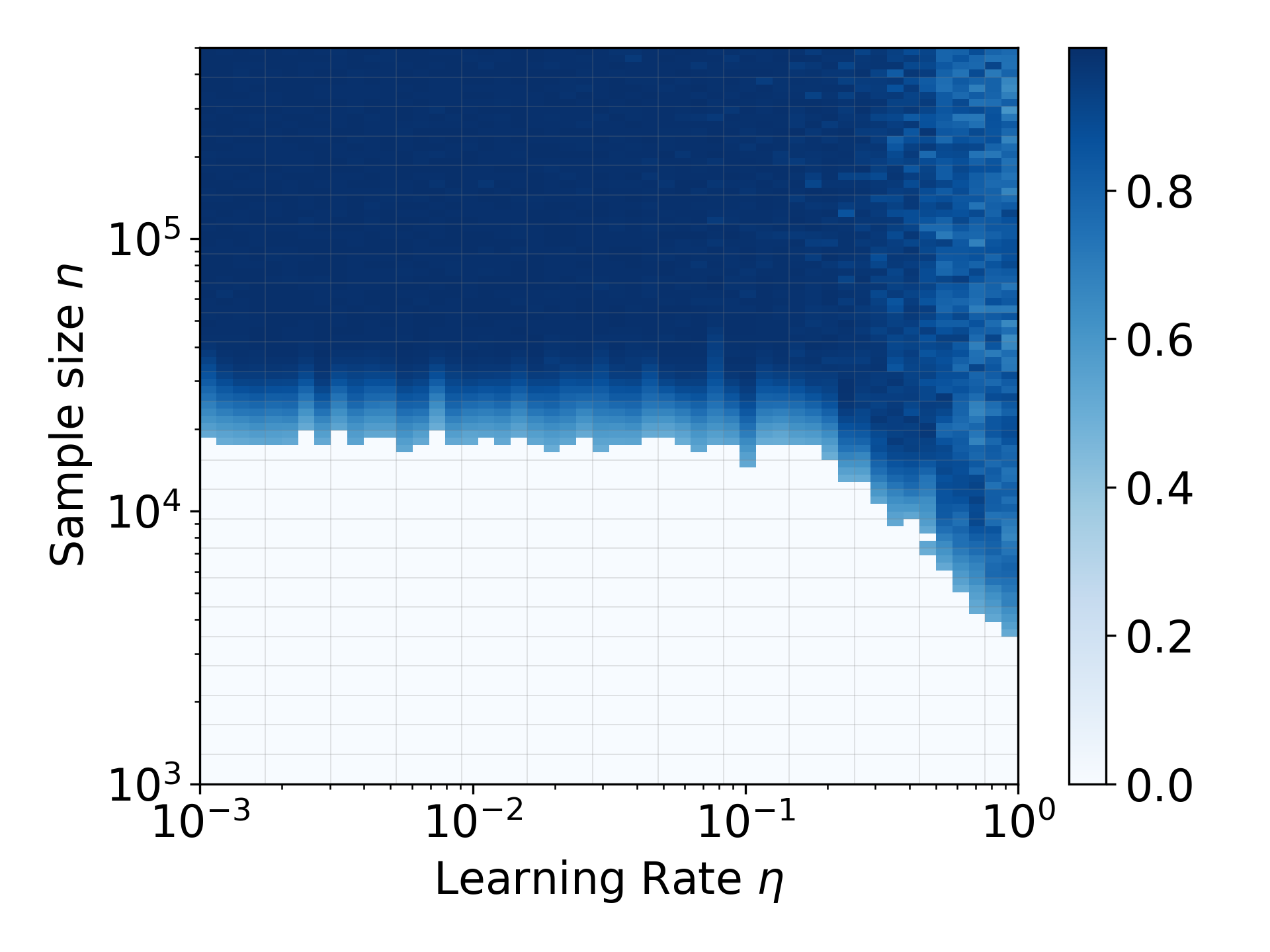}
        \caption{$d = 25$}
    \end{subfigure}
    \hfill
    \begin{subfigure}[b]{0.475\linewidth}
        \centering
        \includegraphics[width=\linewidth]{figures/alignments_alg=alternating_sgd_link=he3_act=he3_d=50_tau=0_neurons=1_lrmin=-3.0_lrmax=0.0_numlr=50_nmax=5.7_numcheckpoints=100_sims=10.png}
        \caption{$d = 50$}
    \end{subfigure}

    \begin{subfigure}[b]{0.475\linewidth}
        \centering
        \captionsetup{aboveskip=0pt, belowskip=10pt}
        \includegraphics[width=\linewidth]{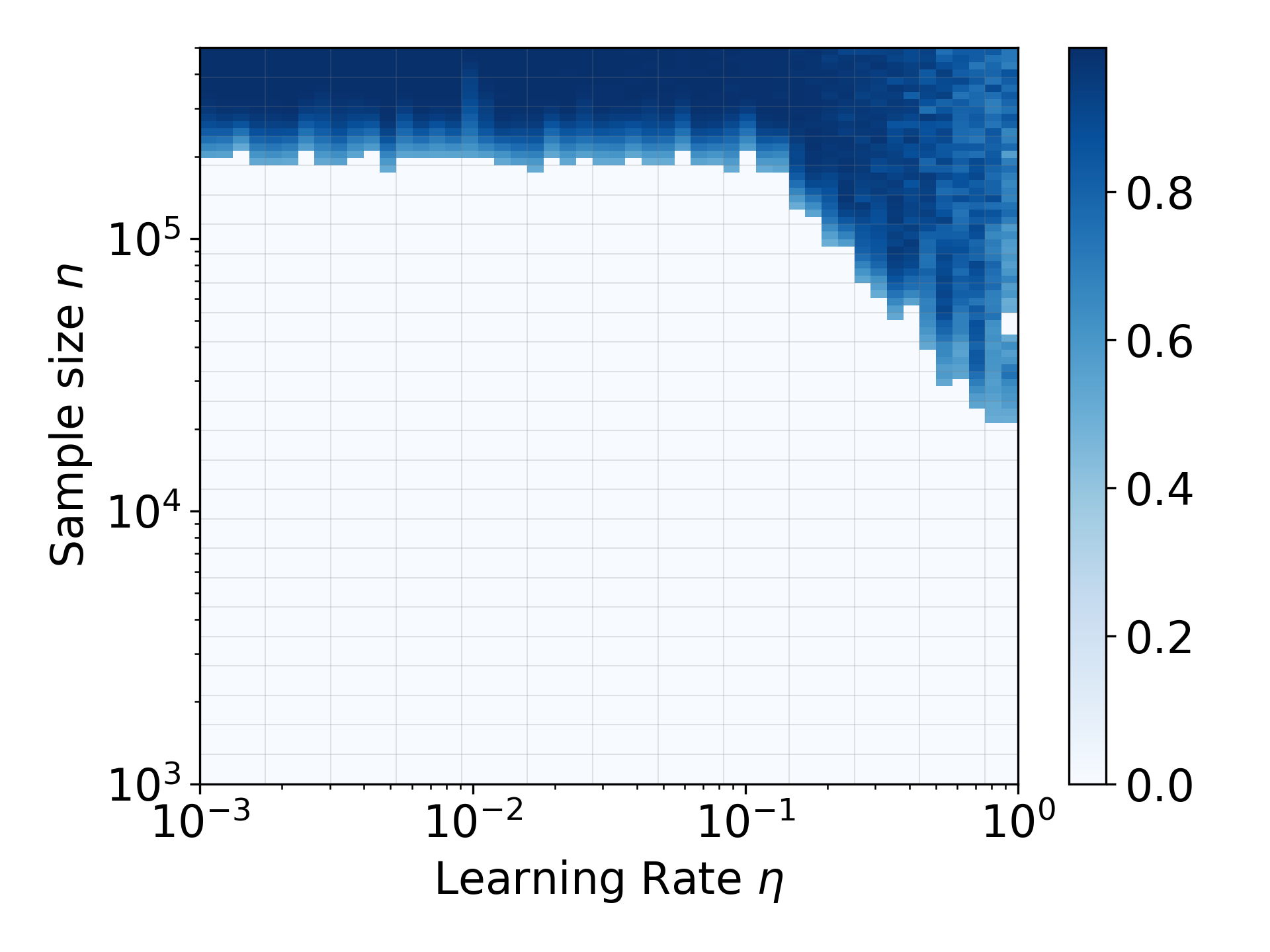}
        \caption{$d = 75$}
    \end{subfigure}
    \caption{Alignments $\ev{\wbs,\thetabs_*}$ greater than 0.5 for alternating SGD with different choices of $\eta$ and $n$. The hyperparameter $\gamma$ is chosen according to Corollary \ref{cor:alternating_sgd}. Results are averaged over 10 runs.}
    \label{fig:alternating_sgd_sims}
\end{figure}

\begin{figure}
    \centering
    \begin{subfigure}[b]{0.475\linewidth}
        \centering
        \includegraphics[width=\linewidth]{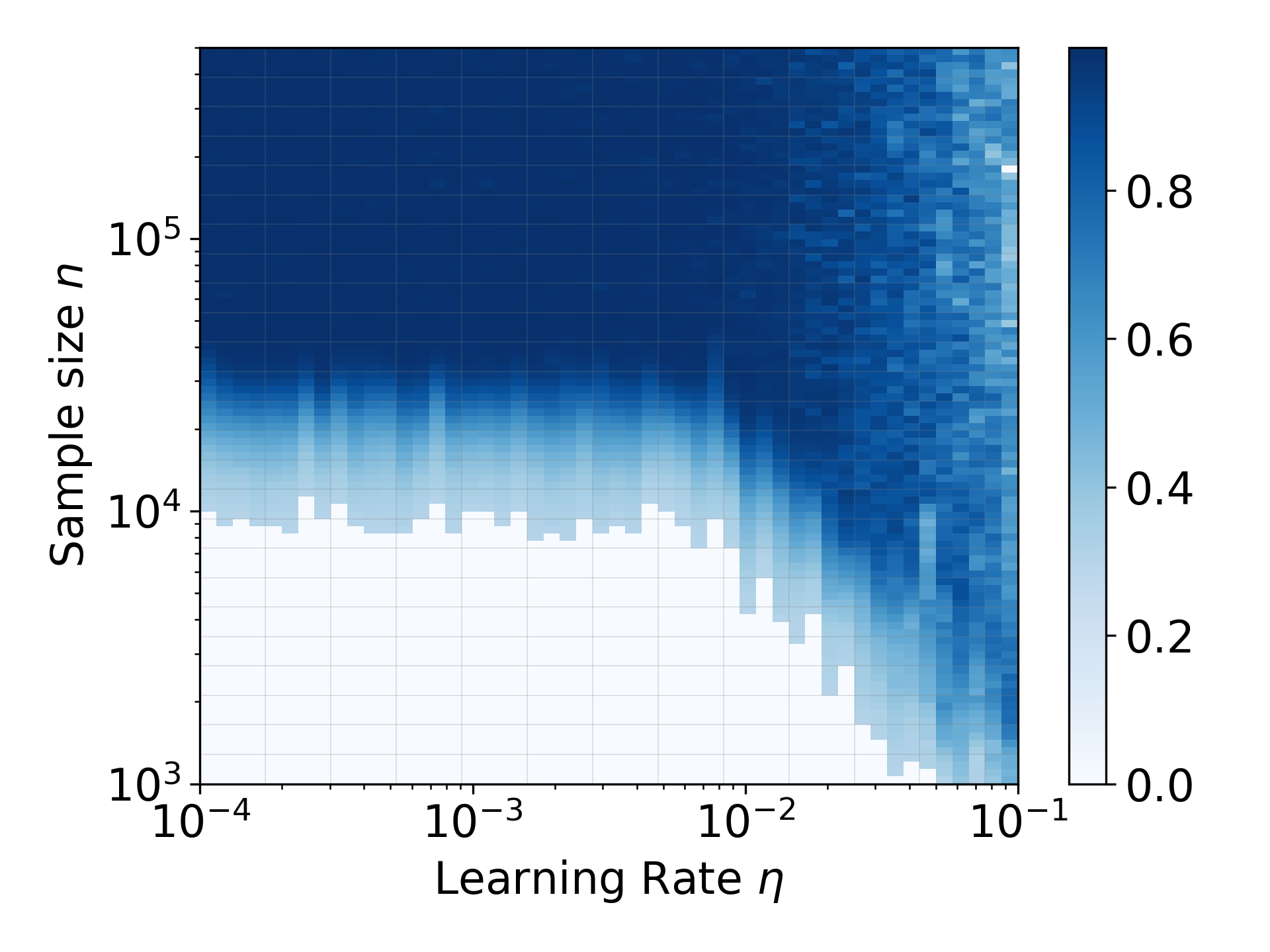}
        \caption{$d = 25$}
    \end{subfigure}
    \hfill
    \begin{subfigure}[b]{0.475\linewidth}
        \centering
        \includegraphics[width=\linewidth]{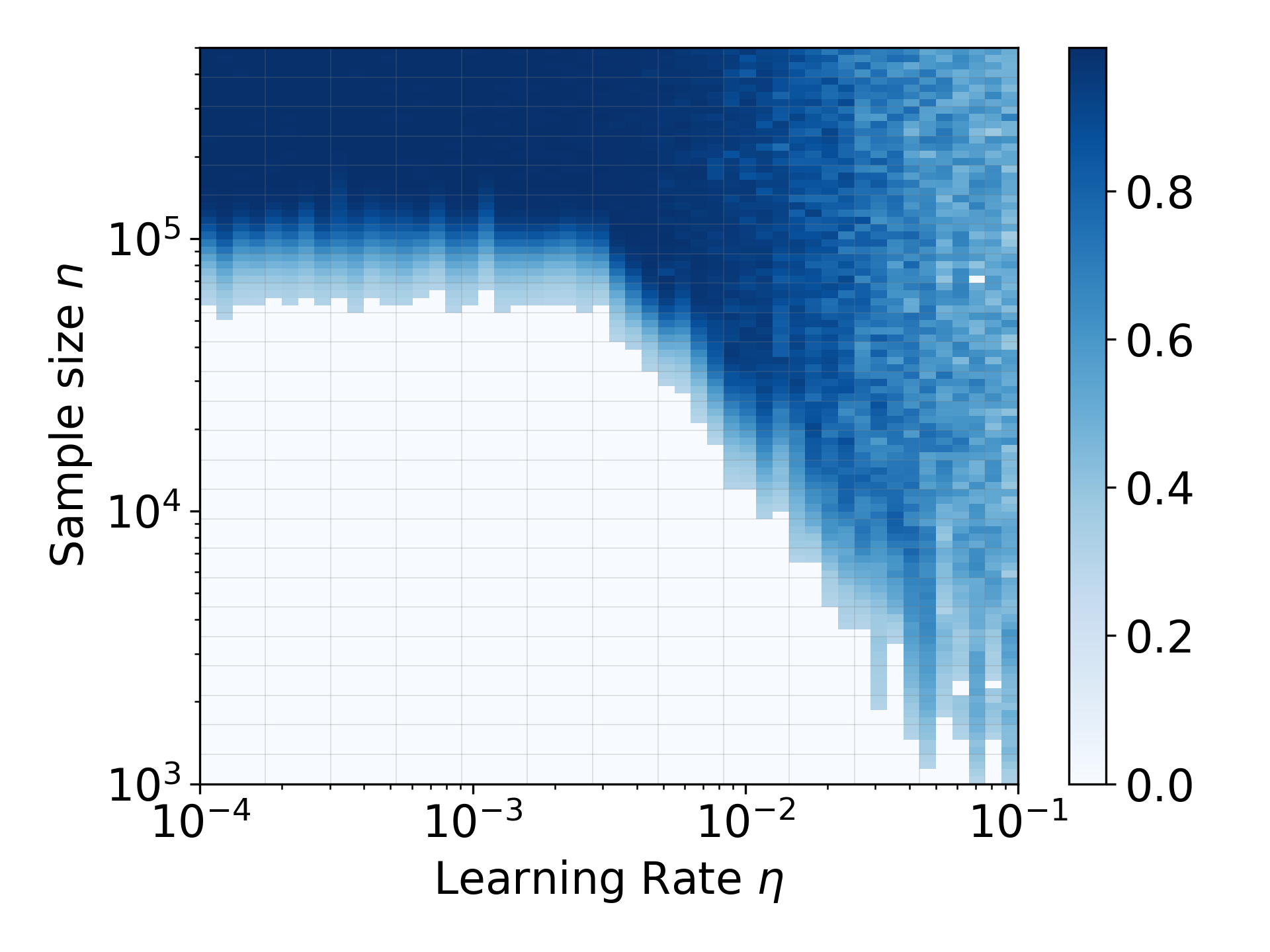}
        \caption{$d = 50$}
    \end{subfigure}

    \begin{subfigure}[b]{0.475\linewidth}
        \centering
        \captionsetup{aboveskip=0pt, belowskip=10pt}
        \includegraphics[width=\linewidth]{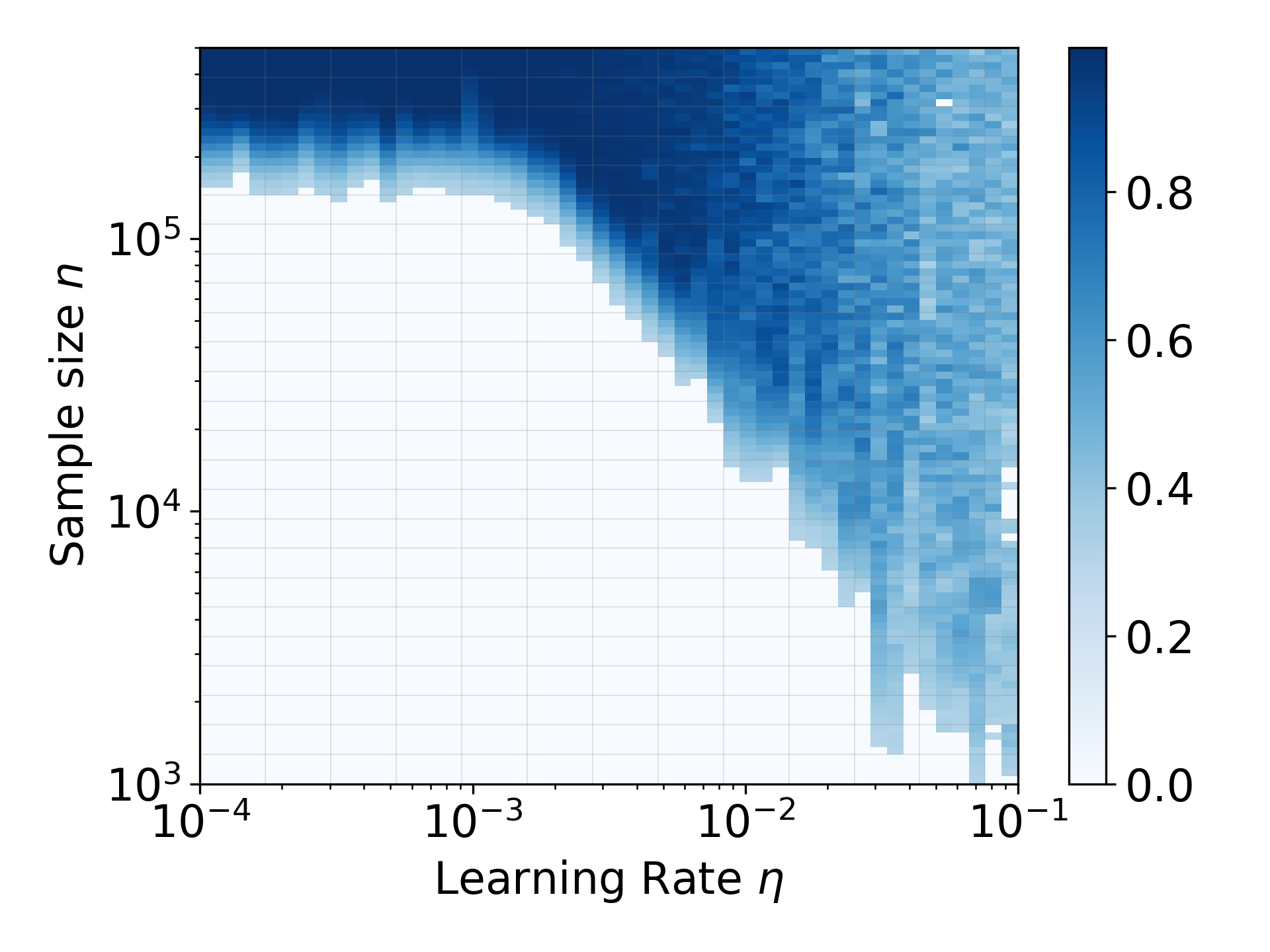}
        \caption{$d = 75$}
    \end{subfigure}
    \caption{Alignments $\ev{\wbs,\thetabs_*}$ greater than 0.5 for batch reuse SGD with different choices of $\eta$ and $n$. The hyperparameter $\gamma$ is chosen according to Corollary \ref{cor:batch_reuse_sgd}. Results are averaged over 10 runs.}
    \label{fig:batch_reuse_sgd_sims}
\end{figure}

\begin{figure}
    \centering
    \begin{subfigure}[b]{0.475\linewidth}
        \centering
        \includegraphics[width=\linewidth]{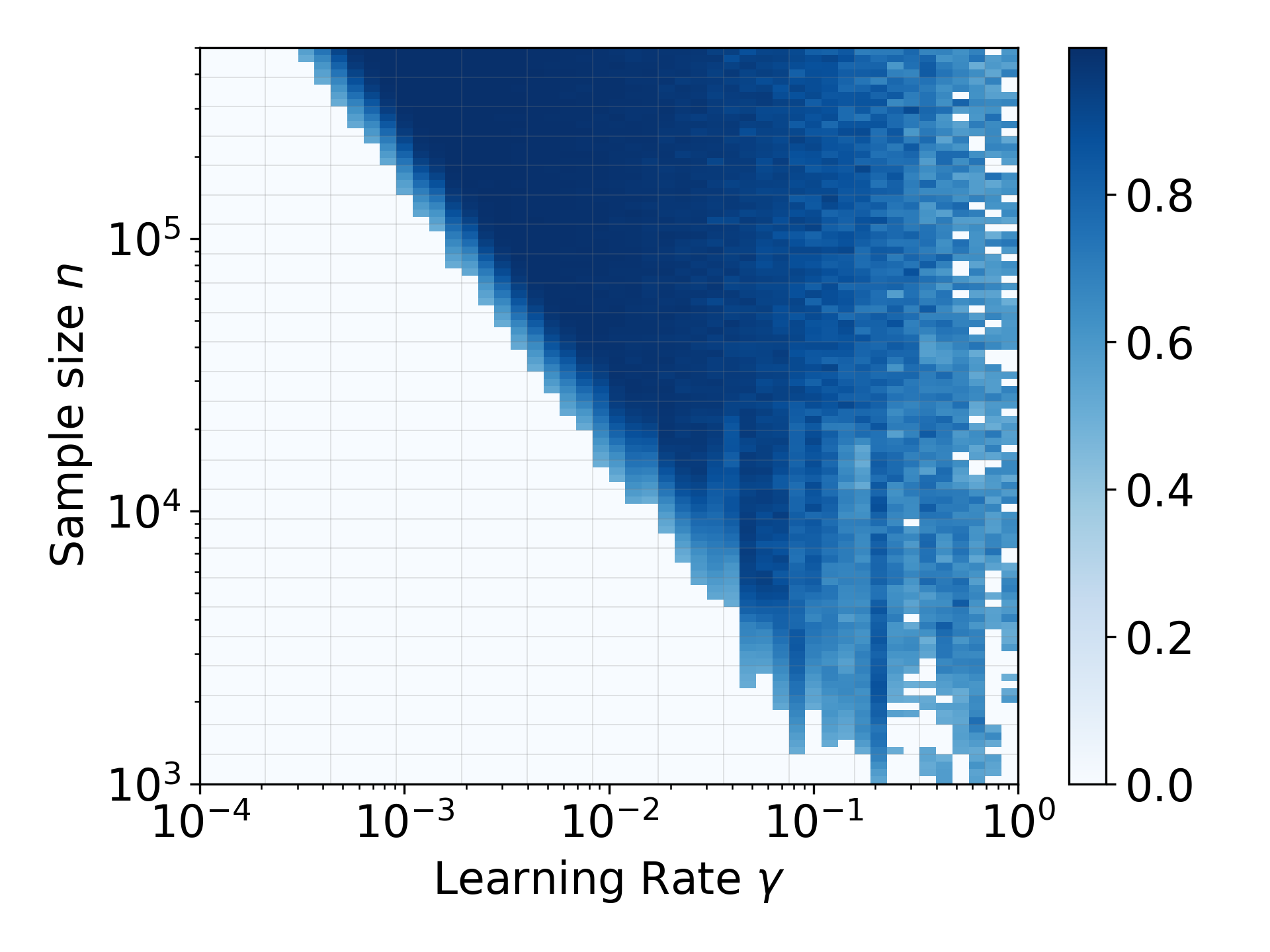}
        \caption{$d = 25$}
    \end{subfigure}
    \hfill
    \begin{subfigure}[b]{0.475\linewidth}
        \centering
        \includegraphics[width=\linewidth]{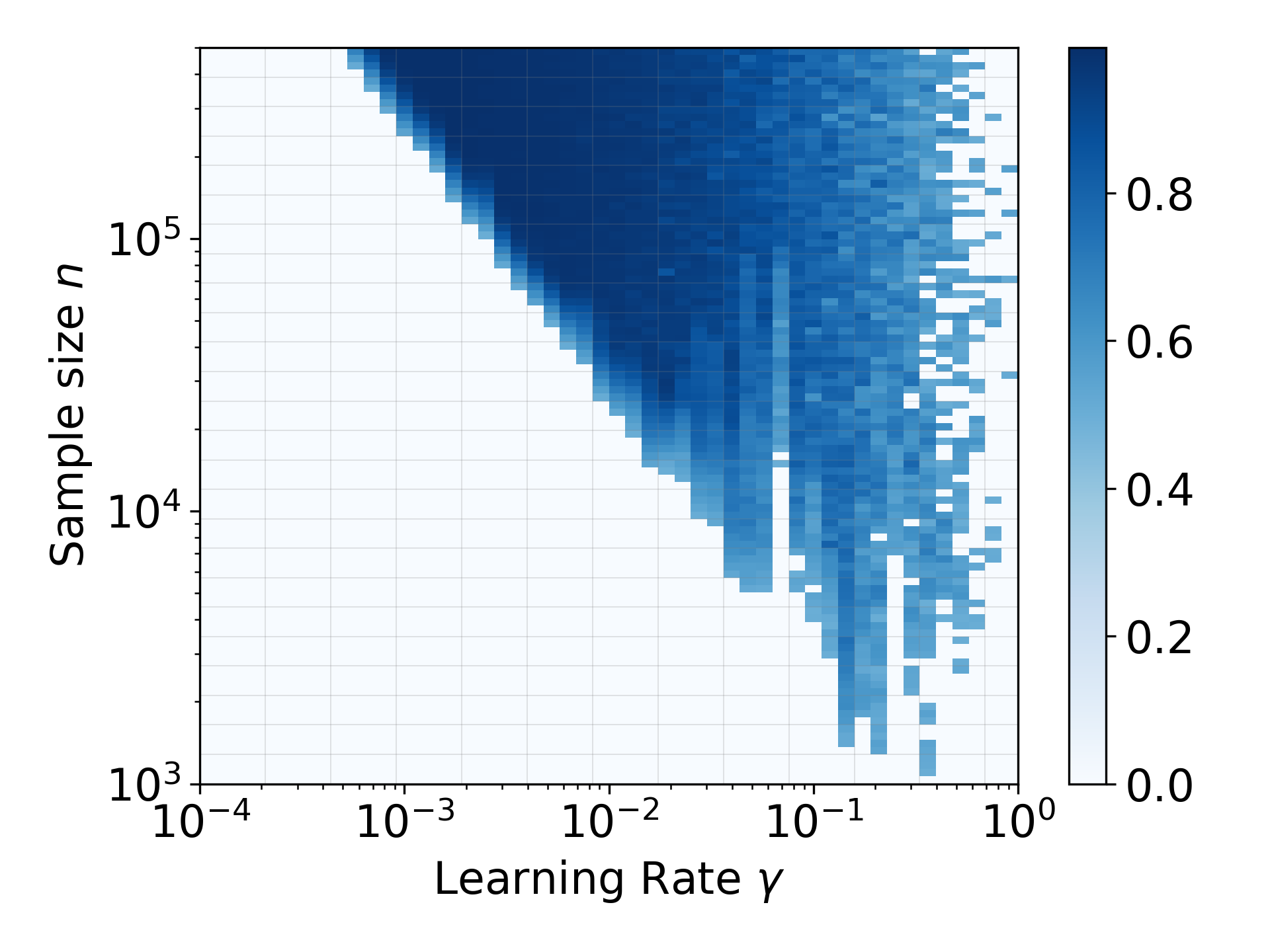}
        \caption{$d = 50$}
    \end{subfigure}

    \begin{subfigure}[b]{0.475\linewidth}
        \centering
        \captionsetup{aboveskip=0pt, belowskip=10pt}
        \includegraphics[width=\linewidth]{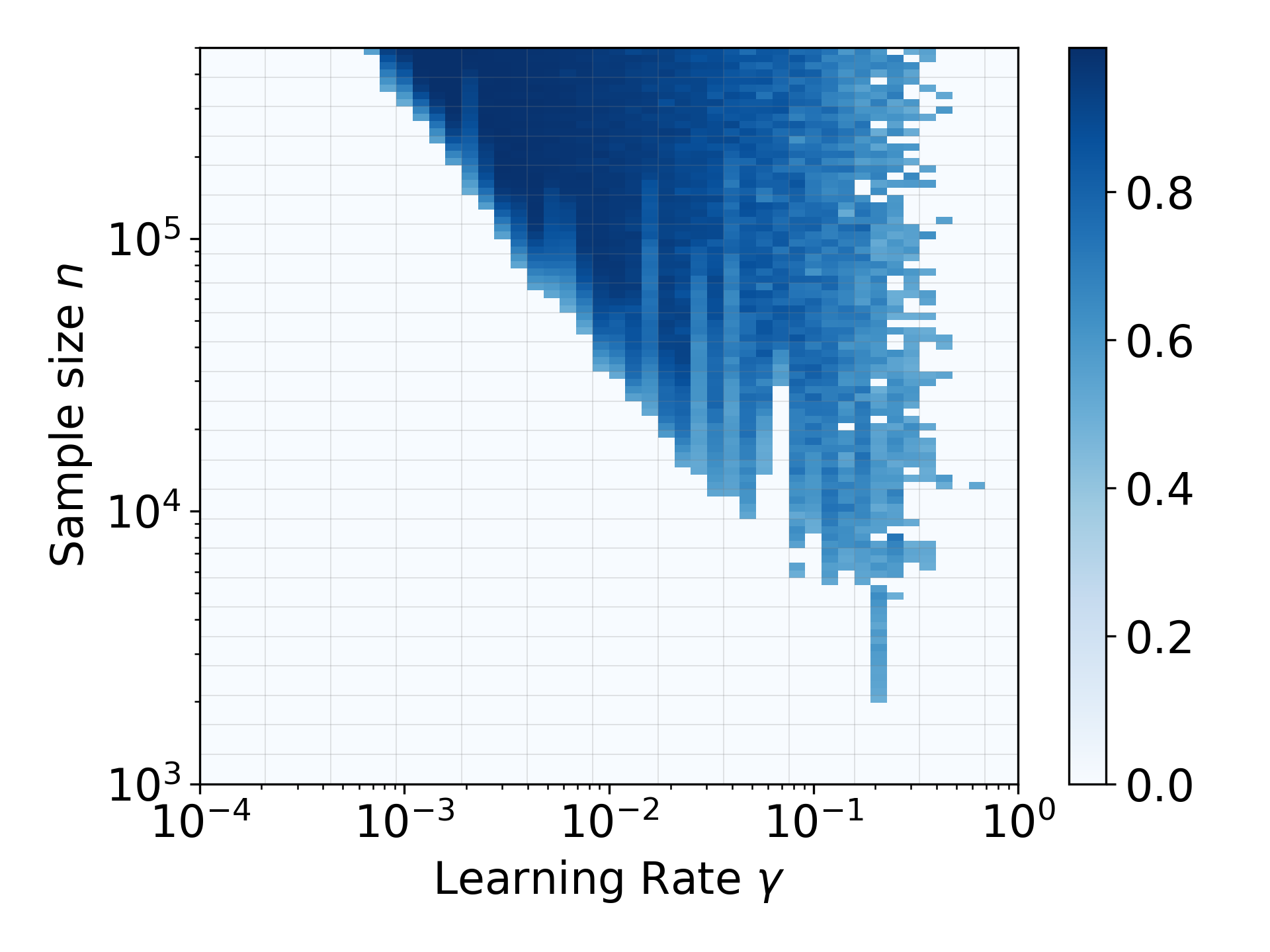}
        \caption{$d = 75$}
    \end{subfigure}
    \caption{Alignments $\ev{\wbs,\thetabs_*}$ greater than 0.5 for online SGD with different choices of $\gamma$ and $n$. Note that $\eta = 0$ in this case. Results are averaged over 10 runs.}
    \label{fig:online_sgd_sims}
\end{figure}

In this section, we provide the details on the experiment that generated Figure \ref{fig:alternating_sgd_sim} in the main text and discuss additional experiments on batch reuse SGD and online SGD in the same setting. Code for all experiments is available online.\footnote{\url{https://github.com/kctsiolis/inf2genexp}} Throughout, we consider a noiseless single-index teacher \eqref{eq:single_index_model} with $\sigma_* = \He_3$, $\thetabs_* = \ebs_1$, and a two-layer neural network student \eqref{eq:two_layer_nn} with $N = 1$ hidden neuron and no bias, i.e., $f(\xbs) = a\sigma(\ev{\xbs,\wbs})$. The network is initialized with $a = 1$, the first entry of $\wbs$ equal to $1/\sqrt{d}$, and the remaining entries of $\wbs$ are drawn from the uniform distribution over the sphere $\Sbb^{d-2}(\sqrt{1-1/d})$. This ensures that for each simulation we start with the same initial alignment so that there is a fair comparison across learning rates.

For all algorithms, we experiment with $d \in \{25,50,75\}$ and take a logarithmically spaced mesh of 50 learning rate values. We consider the range $\eta \in [10^{-3},1]$ for alternating SGD, $\eta \in [10^{-4}, 10^{-1}]$ for batch reuse SGD, and $\gamma \in [10^{-4},1]$ for online SGD. We choose $\gamma = \max\{d^{-3/2}, \eta d^{-1}\}$ as per Corollary \ref{cor:alternating_sgd} (disregarding constants), $\gamma = \max\{d^{-3/2},\eta\}$ for batch reuse SGD as per Corollary \ref{cor:batch_reuse_sgd}, and $\eta = 0$ for online SGD. We empirically verify the theoretically predicted phase transition in $\eta$ for alternating and batch reuse SGD and the predicted consistent decrease of the sample complexity with $\gamma$ for online SGD.

Each of the algorithms is implemented exactly as specified in Section \ref{section:examples}, including the projection and normalization steps. We train in a single-pass over the data (i.e., one epoch) with fixed batch size $B = 128$. In other words, we perform $\lfloor n/B \rfloor$ online updates. Collecting the alignments $\ev{\wbs,\thetabs_*}$ for each $(\mathrm{lr},n)$ combination in our mesh gives the colorbars in Figures \ref{fig:alternating_sgd_sim},\ref{fig:batch_reuse_sgd_sims},\ref{fig:online_sgd_sims}. To more clearly visualize recovery of $\thetabs_*$ and the phase transitions for alternating and batch reuse SGD, we threshold alignments at 0.5. 

For alternating SGD (Figure \ref{fig:alternating_sgd_sims}), we observe that the sample complexity remains flat for small $\eta$ before decaying as power law after reaching a critical value, as predicted by Corollary \ref{cor:alternating_sgd}. Also in line with our predictions is the observation that this critical value decreases with $d$. (Note that for this example, Corollary \ref{cor:alternating_sgd} predicts the phase transition occurs at $\eta \asymp d^{-1/2}$.) The results for batch reuse SGD (Figure \ref{fig:batch_reuse_sgd_sims}) are similar, with the only difference being that the phase transition occurs at a smaller critical value of $\eta$ than it does for alternating SGD. This is is in line with the prediction of Corollary \ref{cor:batch_reuse_sgd}, which predicts the phase transition at $\eta \asymp d^{-3/2}$. Lastly, for online SGD, we observe that the sample complexity decays as power law in $\gamma$, as expected from Theorem \ref{thm:general_weak_recovery_bound}. We also note that when $\gamma$ is chosen too large, the training becomes unstable and we no longer consistently recover.





\end{document}